%for ecsqaru short version
%\documentclass{llncs}
%\newcommand{\citeyear}{\cite}
%for full version
\documentclass[11pt]{article}
\usepackage{amsthm}
\usepackage{chicagor}

\setlength{\evensidemargin}{0in}
\setlength{\oddsidemargin}{0in}
\setlength{\textwidth}{6.25in}
\setlength{\textheight}{8.5in}
\setlength{\topmargin}{0in}
\setlength{\headheight}{0in}
\setlength{\headsep}{0in}
\setlength{\itemsep}{0pt}

\setlength{\parskip}{\smallskipamount}

%joe24: using times saves a few lines:
%\usepackage{times}
\usepackage{graphicx}
\usepackage{algorithmic}
\usepackage{algorithm}
\usepackage{amsmath}
\usepackage{amssymb}
\usepackage{mathrsfs}

\usepackage{framed}
\usepackage{wrapfig,lipsum,booktabs}

\newcommand{\shortv}{\commentout}
\newcommand{\fullv}[1]{#1}

\DeclareMathOperator*{\argmin}{argmin}

\DeclareSymbolFont{AMSb}{U}{msb}{m}{n}
\DeclareMathSymbol{\N}{\mathbin}{AMSb}{"4E}
\DeclareMathSymbol{\Z}{\mathbin}{AMSb}{"5A}
\DeclareMathSymbol{\R}{\mathbin}{AMSb}{"52}
\DeclareMathSymbol{\Q}{\mathbin}{AMSb}{"51}
\DeclareMathSymbol{\I}{\mathbin}{AMSb}{"49}
\DeclareMathSymbol{\C}{\mathbin}{AMSb}{"43}

\newcommand{\cP}{\mathcal{P}}
\newcommand{\cD}{\mathcal{D}}

\newcommand{\commentout}[1]{}
\newcommand{\ucP}{\overline{\cP}^+}

\newcommand{\regret}{\mathit{reg}}

\newcommand{\inter}{\cap}

%joe23
\shortv{\renewcommand{\citeyear}{\cite}}
\newtheorem{axiom}{Axiom}
\fullv{
\newtheorem{theorem}{Theorem}[section]
\newtheorem{definition}[theorem]{Definition}

\newtheorem{lemma}[theorem]{Lemma}
\newtheorem{claim}[theorem]{Claim}
\newtheorem{proposition}[theorem]{Proposition}
\theoremstyle{definition}
\newtheorem{example}[theorem]{{Example}}
}
\DeclareSymbolFont{AMSb}{U}{msb}{m}{n}

\DeclareMathSymbol{\N}{\mathbin}{AMSb}{"4E}
\DeclareMathSymbol{\Z}{\mathbin}{AMSb}{"5A}
\DeclareMathSymbol{\R}{\mathbin}{AMSb}{"52}
\DeclareMathSymbol{\Q}{\mathbin}{AMSb}{"51}
\DeclareMathSymbol{\I}{\mathbin}{AMSb}{"49}
\DeclareMathSymbol{\C}{\mathbin}{AMSb}{"43}

\fullv{

}

%joe16
\newcommand{\<}{\langle}
\renewcommand{\>}{\rangle}
\newcommand{\wbox}{\mbox{$\sqcap$\llap{$\sqcup$}}}

\begin{document}

\fullv{
%joe22*: added funding information
\title{Minimizing Regret in Dynamic Decision Problems%
\thanks{Work supported in part by NSF grants
IIS-0812045, IIS-0911036, and CCF-1214844, by AFOSR grants
FA9550-08-1-0438, FA9550-09-1-0266, and FA9550-12-1-0040, and by ARO
grant W911NF-09-1-0281.}}
\author{
{Joseph Y. Halpern
\ \ \ \ \ \ \ \ Samantha Leung }\\
Department of Computer Science\\
Cornell University\\
Ithaca, NY 14853 \\
halpern $|$ samlyy@cs.cornell.edu}
}
\shortv{
\author{
Joseph Y. Halpern  \and Samantha Leung}
\institute{Cornell University, Ithaca, NY 14853}

\frontmatter          % for the preliminaries
\pagestyle{headings}  % switches on printing of running heads
\mainmatter              % start of the contributions
\title{Minimizing Regret in Dynamic Decision Problems}
}
\maketitle              % typeset the title of the contribution

\begin{abstract}
The menu-dependent nature of regret-minimization creates subtleties when it is applied to dynamic decision problems.
It is not clear whether forgone opportunities should be included in the menu.
We explain commonly observed behavioral patterns as minimizing regret
when forgone opportunities are present.
If forgone opportunities are included, we can characterize
when a form of dynamic consistency is guaranteed.

\end{abstract}

\section{Introduction}

%joe7
%Savage \citeyear{Savage1951} and Anscombe and Aumann
%\citeyear{AnscombeAumann1963} showed that a decision maker maximizing
Savage \citeyear{Savage1951} and Anscombe and Aumann
\citeyear{AnscombeAumann1963} showed that a decision maker maximizing
expected utility with respect to a probability measure over the
possible states of the world is characterized by a set of arguably
desirable principles.
%joe7
%However, as Allais \citeyear{Allais1953} and Ellsberg \citeyear{Ellsberg1961}
However, as Allais \citeyear{Allais1953} and Ellsberg \citeyear{Ellsberg1961}
point out using compelling examples, sometimes intuitive choices are
{incompatible} with maximizing expected utility.
One reason for this incompatibility is that there is often
%joe25: shortened
%\emph{ambiguity} in the problems we face. 
%That is, we often lack sufficient information to capture all
%uncertainty using a single probability measure over the possible
%states. 
\emph{ambiguity} in the problems we face;
we often lack sufficient information to capture all
uncertainty using a single probability measure\shortv{.} \fullv{over the possible
states.} 

To this end, there is a rich literature offering alternative means of
%joe11: maximin is not an ambiguity-averse prereference
%making decisions, usually called \emph{ambiguity-averse preferences}
making decisions
(see, e.g., \cite{Weinstein2009} for a survey).
For example, we might choose to represent uncertainty using a set of possible states of the world, but using no probabilistic information at all to represent how likely each state is.
With this type of representation, two well-studied rules for decision-making are \emph{maximin utility} and \emph{minimax regret}.
Maximin says that you should choose the option that maximizes the worst-case payoff, while minimax regret says that you should choose the option that minimizes the \emph{regret} you'll feel at the end, where, roughly speaking, regret is the difference between the payoff you achieved, and the payoff that you could have achieved had you known what the true state of the world was.
%joe11
%Both maximin and minimax regret can be naturally extended for
%decision-making with other representations of uncertainty. For
Both maximin and minimax regret can be extended naturally to deal
with other representations of uncertainty. For
example, with a set of probability measures over the possible states,
minimax regret becomes minimax expected regret (MER)
\cite{Hayashi2011,StoyeRegret}.
%Sam34 added references from Joe's regret-based likelihood paper 
Other works that use a set of probablity measures include, for
example, \cite{CM95,CMW99,GilboaSchmeid93,Levi85,Walley91}. 

%When the set of probability measures is a singleton, both rules are equivalent to maximizing expected utility.
In this paper, we consider a generalization of minimax expected regret
%joe2
%called minimax \emph{weighted} expected regret (MWER), which
called minimax \emph{weighted} expected regret (MWER) that
we introduced in an earlier paper \cite{HalpernLeung2012}.
For MWER, uncertainty is represented by a
set of \emph{weighted} probability measures.  Intuitively,
%joe11
%the weight represents how likely the decision maker (henceforth DM)
%considers the
%probability measure to be the true distribution over the states.
the weight represents how likely the
probability measure is to be the true distribution over the states,
according to the decision maker (henceforth DM).
%Sam34 added references copied from Joe's regret-based likelihood paper
%joe28: it's ``second-order'', not ``secondary''
%The weights work similarly to a secondary probability measure over the
The weights work much like a ``second-order'' probability on
the set of probability measures. 
Similar ideas can be dated back to at least G\"ardenfors and Sahlin
%joe28
%\citeyear{GS82,GS83}. 
\citeyear{GS82,GS83}; see also \cite{Good80} for discussion and
further references. 
Walley \citeyear{Walley97} suggested putting a possibility measure
\cite{DP98,Zadeh1} on probability measures; this was also essentially
done by Cattaneo \citeyear{Cattaneo07}, Chateauneuf and Faro
\citeyear{ChateauneufFaro2009}, and de Cooman \citeyear{Cooman05}.  
All of these authors and others (e.g., Klibanoff et
al. \citeyear{KMM05}; Maccheroni et al. \citeyear{MMR06}; Nau
\citeyear{Nau92}) proposed approaches to decision making using their
representations of uncertainty. 

Real-life problems are often {dynamic}, with many stages where actions
%joe11
%can be taken, and information can be learned throughout the stages as
can be taken; information can be learned over time.
Before applying regret minimization to dynamic decision problems, there is a
%Before applying regret to dynamic decision problems, there is a
subtle issue that we must consider.
In static decision problems, the regret for each act is computed with
respect to a \emph{menu}.
That is, each act is judged against the other acts in the menu.
Typically, we think of the menu as consisting of the \emph{feasible
acts}, that is, the ones that the DM can perform.
The analogue in a dynamic setting would be the feasible \emph{plans},
where a plan is just a sequence of actions leading to a final outcome.
In a dynamic decision problem,
as more actions are taken, some plans become \emph{forgone opportunities}.
These are plans that were initially available to the DM,
but are no longer available due to earlier actions of the DM.
Since regret intuitively captures comparison of a choice against its
alternatives, it seems reasonable for the menu to include all the
feasible plans at the point of decision-making.
But should the menu include forgone opportunities?

%joe24: shortened; we'll need to save space anyway
%It may seem reasonable to say that such {forgone opportunities} are
%%joe11
%%irrelevant and hence should not be in the menu.
%irrelevant, and hence should not be in the menu.
%joe7
%Indeed, \emph{consequentialists} would
\emph{Consequentialists} would
argue that it is irrational to care about forgone opportunities \cite{Hammond76,Machina1989}; we
should simply focus on the opportunities that are still available to
%joe24
%us. 
us, and thus not include forgone opportunities in the menu. 
And, indeed, when regret has been considered in dynamic settings thus
%joe7
%far (e.g., by Hayashi \citeyear{Hayashi2011}), the menu has not included
far (e.g., by Hayashi \citeyear{Hayashi2011}), the menu has not included
forgone opportunities.
However, introspection tells us that we sometimes do take forgone
%joe25: shortened
%opportunities into account when we feel regret.  For example, when
opportunities into account\shortv{.} \fullv{when we feel regret.}  For example, when
considering
a new job, one might compare the available options to what
might have been available if one had chosen a different career path years
ago.
As we show, including forgone opportunities in the menu can make a big
difference in behavior.
%Sam8
%joe7: I think procrastination is a better example of how including
%forgone opportunities can make a big difference.
%In the appendix, we examine minimax regret in the \emph{classical
%  secretary problem} \citeyear{Ferguson}, and find that minimax regret
%with forgone opportunities stops earlier than minimax regret without
%forgone opportunities, which in turn stops earlier than expected
%utility maximization.
%
Consider
 procrastination: we tell ourselves
that we will start studying for an exam (or start exercising, or quit
smoking) tomorrow; and then tomorrow comes, and we again tell
ourselves that we will do it, starting tomorrow.
This behavior is hard to explain with standard decision-theoretic
approaches, especially when we assume that no new information about
%joe11
%the world is gained between each day.
the world is gained over time.
However, we give an example where, if forgone opportunities are not included in
the menu, then we get procrastination; if they are, then we do not get
procrastination.

This example can be generalized.  Procrastination is an example of
\emph{preference reversal}: the DM's preference at time $t$ for what
he should do at time $t+1$ reverses when she actually gets to time
$t+1$.  We prove in Section~\ref{sec:FO}
that if the menu includes forgone opportunities and
the DM acquires no new information over time (as is the case in the
procrastination problem), then a DM who uses regret to make her
decisions will not suffer preference reversals.
Thus, we arguably get more rational behavior when we include forgone
opportunities in the menu.

What happens if the DM does get information over time?  It is well
known that, in this setting, expected utility maximizers are
%joe8*: later you call this ``no preference reversal''.  What's the
%difference between dynamic consistency and no preference refersal.
%Sam10: I think we'll use DC to refer to the axiom only
%guaranteed to be \emph{dynamically consistent}; their preferred plan at
guaranteed to have no preference reversals. Epstein and
Le Breton \citeyear{Epstein1993} have shown that, under minimal
%Sam10:
%assumptions, to be dynamically consistent, the DM must be an
assumptions, to avoid preference reversals, the DM must be an
expected utility maximizer.  On the other hand, Epstein and Schneider
%joe7
%\citeyear{EpsteinSchneider2003} show that a DM using MMEU is
%joe24
%\citeyear{EpsteinSchneider2003} show that a DM using MMEU is
\citeyear{EpsteinSchneider2003} show that a DM using MMEU 
%Sam12
%is dynamically consistent if her beliefs satisfy a condition they
never has preference reversals if her beliefs satisfy a condition they
call \emph{rectangularity}.
%Sam22 added reference to Hayashi
Hayashi \citeyear{Hayashi2011} shows that rectangularity also prevents
preference reversals for MER under certain assumptions.
Unfortunately, the rectangularity condition is often not satisfied in practice.
Other conditions have been provided that guarantee dynamic consistency
%joe23*: here and many other places, you inappropriately use \citeyear
%instead of \cite when there is no author listed.  (It won't matter
%for the ECSQARU submission, of course, since they use plain.  BUt you
%should check all instances of citeyear.)
%for ambiguity-averse decision rules (see, e.g., \citeyear{Weinstein2009}
for ambiguity-averse decision rules (see, e.g., \cite{Weinstein2009}
for an overview).

%Sam22 added
We consider the question of
%joe23*: you're inconsistent with regard to ``preference reversal and
%dyanmic consistency.  Do we really need both terms?  At least in the
%full paper, you can't suddenly start using dynamic consistency out of
%the blue.  Why are you doing it differently in the full and
%conference ersions?
%\shortv{no preference reversal}
{preference reversal}
 in the context of regret.
%joe20
%Hayashi \citeyear{Hayashi2011} has identified that there are two
%causes of preference reversals when one minimizes regret in dynamic
%decision problems.
%These causes are the changing of the menu as time progresses, and
Hayashi \citeyear{Hayashi2011} has observed that, in dynamic decision
%joe24
%problems, both the 
problems, both
changes in menu over time and updates to the DM's beliefs can result
in preference reversals.
%joe20*: Sam, it's not clear how these are ``causes'' of preference
%reversal.  It's true that, in general, as time pregresses the menu
%changes and the DM's beliefs may get updated, but what's the
%connection to preference reversal?  Perhaps you can weaken this to
%``Hayashi has observed that, in dynamic decision problems, both the
%changes in menu over time and updates to the DM's beliefs can result
%in preference reversals.''  It seems to me that we deal with the
%first problem by requiring no menu changes; we'll have deal with the second
%essentialy the same way that everyone else does (rectangularity),
%although the issues may be a bit different because of our framework.
%It would be interesting to think about whether our techniques (once
%we work them out) can deal with dynamic decision problems using
%MMEU.
In Section~\ref{sec:consistence}, we show that keeping forgone
opportunities in the menu is necessary in order
%sam28 dc -> pref reversal
to prevent preference reversals.
%joe24
%But we also provide an example to show that it is not sufficient if
%the DM acquires new information about the world over time.
But, as we show by example, it is not sufficient if
the DM acquires new information over time.
We then provide a condition
on the beliefs that is necessary and sufficient to guarantee that a DM making decisions
using MWER whose beliefs satisfy the condition
%sam28  dc -> pref reversal
will not have preference reversals.
However, because this necessary and sufficient condition may not be easy to
%joe24
%check, we also give simpler sufficient conditions.
%%joe23
%Our condition is similar in spirit to Epstein and Schneider's
check, we also give simpler sufficient condition,
%%joe23
%Our condition is similar in spirit to Epstein and Schneider's
similar in spirit to Epstein and Schneider's
\citeyear{EpsteinSchneider2003}
%joe24
%\emph{rectangularity} condition.
rectangularity condition.
%joe24
\fullv{Since MER can be understood as a special case of MWER where all
%joe23
%weights are either $1$ or $0$, our conditions on dynamic consistency
weights are either $1$ or $0$, our condition for dynamic consistency
is also applicable to MER.
%joe24: \end{fullv}
}

%joe23*: we should figure out what we're going to do with this
%material.  The current section is pretty content-free
\commentout{
What can be done if the DM's beliefs do not satisfy our condition
and dynamic inconsistency occurs?
A standard approach to avoiding dynamic inconsistency is \emph{sophistication}.
A sophisticated DM is one that understands and plans
around her future preferences,
instead of naively believing that her future selves will carry out the
plan that she currently prefers.
A huge literature considers sophisticated decision
%joe7
%makers, going back to the work of Strotz \citeyear{Strotz1955}.
makers, going back to the work of Strotz \citeyear{Strotz1955}.
But once we consider sophistication in the context of regret, we need
to reconsider the issue of what the appropriate menu is.
Assuming sophistication affects what plans should be counted as
feasible.  In particular, sophistication leads to \emph{unachievable plans},
plans that cannot be
carried out by the DM because future choices specified by
this plan involve choices that her future selves are unwilling to
make.
%Sam26
%We consider the impact of unachievable plans on menus, and the
%combination of forgone opportunities and unachievable plans,
%and
%characterize the implications of using
%each of these approaches of regret-minimization with sophistication.
The impact of unachievable plans (and forgone
opportunities) on the menu has not been considered in previous work on
regret in a dynamic setting with sophisticated decision makers
\citeyear{Hayashi2009,KrahmerStone2005}.
}

\fullv{
The remainder of the paper is organized as follows.
Section~\ref{sec:MWER} discuss preliminaries.
Section~\ref{sec:FO} introduces forgone opportunities.
%joe7: there are undefined references here
%Section~\ref{sec:secretary} looks at regret minimization in the
%secretary problem.
Section~\ref{sec:consistence} gives conditions under which consistent planning is not required.
We conclude in Section~\ref{sec:conclusion}.
We defer most proofs to the appendix.
%Sam20 , which also contains an application of regret to the classical
%\emph{secretary problem} \citeyear{Ferguson}.
}
\shortv{
\vspace{-5pt}
}
\section{Preliminaries}\label{sec:MWER}

\subsection{Static decision setting and regret}
Given a set $S$ of states and a set $X$ of outcomes,
%Sam28 restricting to deterministic acts
%an \emph{act} $f$ (over $S$ and $X$) is a function mapping $S$ to
%$\underline{\Delta}(X)$, the set of all finite-support distributions
%over $X$. 
an \emph{act} $f$ (over $S$ and $X$) is a function mapping $S$ to $X$.
We use $\mathcal{F}$ to denote the set of all acts.
For simplicity in this paper, we take $S$ to be finite.
Associated with each outcome $x\in X$ is a utility: $u(x)$ is the
utility of outcome $x$.  We call a tuple $(S,X,u)$ a
\emph{(non-probabilistic) decision problem}.
To define regret, we need to assume that we are also given a set $M\subseteq \mathcal{F}$ of acts, called the \emph{menu}.
The reason for the menu is that, as is well known, regret can depend on the menu. %sam4
We assume that every menu $M$ has utilities bounded from
above.  That is,
we assume that for all menus $M$, $\sup_{g\in M}u(g(s))$
is finite.  This ensures that the regret of each act is well defined.
For a menu $M$ and act $f\in M$, the regret of $f$ with respect to
$M$ and decision problem
$(S,X,u)$
in state $s$ is
\fullv{
%joe23*: since you're allowing g to be probabilistic, u(g(s) is
%undefined.  You have to extend it to randomized acts.  Alternatively,
%just stick to deterministic acts, which we do for plans anyway; that
%would probably be simpler.) %Sam28 restricted to deterministic acts above
$$\regret_M(f,s) = \left(\sup_{g\in M}u(g(s))\right) - u(f(s)).$$
}
\shortv{
$\regret_M(f,s) = \left(\sup_{g\in M}u(g(s))\right) - u(f(s)).$
}
That is, the regret of $f$ in state $s$ (relative to menu $M$) is the
difference between $u(f(s))$ and the highest utility possible in state
$s$ among all the acts in $M$.
%joe24
%The regret of $f$ with respect to $M$ and decision problem $(S,X,u)$ is
The regret of $f$ with respect to $M$ and decision problem $(S,X,u)$,
denoted $\regret_M^{(S,X,u)}(f)$, 
is
the worst-case regret over all states:
\fullv{
$$
%joe24
%\max_{s\in S}\regret_M(f,s).
\regret_M^{(S,X,u)}(f) = \max_{s\in S}\regret_M(f,s).
$$
}
\shortv{
$\max_{s\in S}\regret_M(f,s).$
}
%joe24
%\noindent We denote this as   $\regret_M^{(S,X,u)}(f)$ and usually omit the
We typically omit 
superscript $(S,X,u)$ in $\regret_M^{(S,X,u)}(f)$if it is clear from context.
%joe24
%The minimax regret decision rule minimizes $\max_{s\in S}\regret_M(f,s).$
The minimax regret decision rule chooses an act that minimizes
$\max_{s\in S}\regret_M(f,s).$ 
%Sam July 24
In other words, the minimax regret choice function is
\fullv{
%Sam35 removed u superscript
$$C_{M}^{\regret}(M') = \argmin_{f \in M'} \max_{s\in S}\regret_M(f,s).$$
}
\shortv{
%joe24
%$C_{M}^{\regret,u}(M') = \argmin_{f \in M'} \max_{s\in
%S}\regret_M(f,s).$
$C_{M}^{\regret,u}(M') = \argmin_{f \in M'} \max_{s\in S}\regret_M(f,s);$
}
%joe24: moved up:
%Sam28 different captilization for full/short version
\shortv{the}\fullv{The} choice function 
returns the set of all acts in $M'$ that minimize regret with respect
to $M$.
%joe16*: Do we want to require M' to be a subset of E? If so, we
%should say so here, and then later when we defin C_{M,E}
%Sam19: It seems reasonable to require M' to be a subset of M, but
%    I don't see a strong reason for reinforcing that
Note that we allow the menu $M'$, the set of acts over which we are
minimizing regret, to be different from the menu $M$ of acts with
respect to which regret is computed.  
%joe24
%This is needed to deal with, for %example, forgone opportunities.
For example, if the DM considers forgone opportunities, they would be
included in $M$, although not in $M'$.
%joe24
%Also note that the choice function 
%returns the set of all acts in $M'$ that minimize regret with respect
%to $M$.

If there is a probability measure $\Pr$ over the 
%Sam34
$\sigma$-algebra $\Sigma$ on the set $S$ of
states, then we can
%joe28*: I think we might want \Sigma to be part of the tuple here.
consider the \emph{probabilistic decision problem} $(S,\Sigma, X,u,\Pr)$.
The \emph{expected regret} of $f$ with respect to $M$ is
\fullv{
$$
\regret_{M}^{\Pr}(f) =
\sum_{s\in S}\Pr(s)\regret_M(f,s).
$$
}
\shortv{
$
\regret_{M}^{\Pr}(f) =
\sum_{s\in S}\Pr(s)\regret_M(f,s).
$
}
If there is a set $\cP$ of probability measures over the
%Sam34 
$\sigma$-algebra $\Sigma$ on the set $S$ of states,
%joe28: Again, we may want to include \Sigma in the tuple
states, then we consider the $\cP$-decision problem $\cD = (S,\Sigma,X,u,\cP)$.
The maximum expected regret of $f\in M$ with respect to $M$ and
$\cD$ is
\fullv{
$$
\regret_{M}^{\cP}(f) = \sup_{\Pr\in \cP} \left( \sum_{s\in S}\Pr(s)
\regret_M(f,s)   \right).
$$
}
\shortv{
$
\regret_{M}^{\cP}(f) = \sup_{\Pr\in \cP} \left( \sum_{s\in S}\Pr(s)
\regret_M(f,s)   \right).
$
}
\noindent The minimax expected regret (MER) decision rule minimizes
$\regret_{M}^{\cP}(f)$.

In an earlier paper, we introduced another representation of
uncertainty, \emph{weighted set of probability measures}
\cite{HalpernLeung2012}.
A weighted set of probability measures generalizes a set of probability measures by associating each measure in the set with a weight, intuitively corresponding to the reliability or significance of the measure in capturing the true uncertainty of the world.
Minimizing {weighted} expected regret with respect to a weighted set of probability measures gives a variant of minimax regret, called Minimax Weighted Expected Regret (MWER).
A set $\cP^+$ of \emph{weighted
%joe27*: I think you should introduce $\Sigma$ here (see comments
%below)  It's really a measure on the measurable space (S,\Sigma).
%Once you've introduced it here, you'll have to check to make sure
%that you change all occurrences.
%Sam34: added introduction of sigma algebra above 
probability measures} on $(S,\Sigma)$ consists of pairs
$(\Pr,\alpha_{\Pr})$, where $\alpha_{\Pr} \in [0,1]$ and
$\Pr$ is a probability measure on $(S,\Sigma)$.
Let $\cP = \{\Pr: \exists \alpha (\Pr,\alpha) \in \cP^+\}$.
We assume that,
for each $\Pr \in \cP$, there is exactly one $\alpha$ such that
$(\Pr,\alpha) \in \cP^+$.  We denote this number by $\alpha_{\Pr}$, and view
it as the \emph{weight of $\Pr$}.
We further assume for convenience that weights have
been normalized so that there is at least one
measure $\Pr \in \cP$ such that $\alpha_{\Pr} = 1$.%

%joe7
%If beliefs are modeled by weighted probabilities $\cP^+$, then
If beliefs are modeled by a set $\cP^+$ of weighted probabilities, then
%joe11: this notation may come in handy later
%we consider the $\cP^+$-decision problem $(S,X,u,\cP^+)$.
we consider the $\cP^+$-decision problem $\cD^+ = (S,X,u,\cP^+)$.
The maximum weighted expected regret of $f\in M$ with respect to
%joe11
%$M$ and $(S,X,u,\cP^+)$ is
$M$ and $\cD^+ = (S,X,u,\cP^+)$ is
$$
%\vspace{-10pt}
\regret_{M}^{\cP^+}(f)
%Sam35 fixed notation
= \sup_{(\Pr,\alpha) \in \cP^+}\left( \alpha \sum_{s\in
S}\Pr(s)\regret_M(f,s) \right).
%\vspace{-3pt}
$$
%Sam30 added case if the set is empty
If $\cP^+$ is empty, then $\regret_{M}^{\cP^+}$ is identically zero.
%In the above definitions, the menu $M$ is the set of acts with respect to which regret is computed.
%In this paper, we will also be interested in the case where the set of \emph{feasible acts}, or acts that can actually be chosen from, can be different from the menu.
%Usually, the set of feasible acts will be a subset of the menu.
%In other words, the decision maker can regret not being able to make choices that aren't feasible.
%The MWER choice function, for any set of feasible acts $M$, is defined as:
%$$ C_{\cP^+,M}^{S,X,U}(M') = \argmin_{f \in M'}
%\regret_{M,\cP^+}^{(S,X,u)}(f).$$
%joe11
Of course, we can define the choice functions
%joe16:
%$C_{\regret_M,\Pr}$, $C_{\regret_M,\cP}$, and $C_{\regret_M,\cP^+}$
%using $\regret_{M,\Pr}$, $\regret_{M,\cP}$, and
$C^{\regret,\Pr}_{M}$, $C^{\regret,\cP}_{M}$, and $C^{\regret,\cP^+}_{M}$
using $\regret_{M}^{\Pr}$, $\regret_{M}^{\cP}$, and
%joe16
%$\regret_{M,\cP^+}$, by analogy with $C_{\regret_M}$.
$\regret_{M}^{\cP^+}$, by analogy with $C^{\regret}_M$.

%{Likelihood updating}
%joe16*: it seems strange to have this in a subsection entitled
%``Static decision setting ...''.  Moreover, it's bad form to have a
%section with exactly one subsection.  Finally, we shouldn't introduce
%dynamic decision problems in the mist of talking about foregone
%conclusions.  I solved all these problems by having a new subsection
%on dynamic decision problems, and doing some slight reorganization.
%In a dynamic setting, we need to consider how to update beliefs.
\shortv{
\vspace{-5pt}
}
\subsection{Dynamic decision problems}

%joe16: moved up from below
A \emph{dynamic decision problem} is a single-player extensive-form
game where there is some set $S$ of states, nature chooses $s \in S$ at the
first step, and does not make any more moves.
The DM then performs a finite sequence of actions until some
outcome is reached.
%joe10: added
Utility is assigned to these outcomes.
%Sam27 added definition of history as per ecsqaru reviewer 1
A \emph{history} is a sequence recording the actions taken by nature
and the DM.
At every history $h$, the DM considers possible some other histories.
The DM's \emph{information set} at $h$, denoted $I(h)$, is the set of
histories that the DM considers possible at $h$.
Let $s(h)$ denote the initial state of $h$ (i.e., nature's first
move);
let $R(h)$ denote all the moves the DM made in $h$ after
nature's first move; finally, let $E(h)$ denote the set of states that the DM
considers possible at $h$; that is, $E(h) = \{s(h'): h' \in I(h)\}$.
We assume that the DM has \emph{perfect recall}:
this means that $R(h') = R(h)$ for all $h' \in I(h)$, and that if $h'$
is a prefix of $h$, then $E(h') \supseteq E(h)$.

A \emph{plan} is a (pure) strategy: a mapping from histories to
%Sam18
%actions that returns the same action for all histories in an information set.
%joe16
%histories resulting from taking the action specified by the plan.
histories that result from taking the action specified by the plan.
We require that a plan specify the same action for all histories in an
information set; that is, if $f$ is a plan, then for all histories $h$
and $h' \in I(h)$, we must have the last action in $f(h)$ and  $f(h')$
must be the same (so that $R(f(h)) = R(f(h'))$).
Given an initial state $s$, a plan determines a complete path to an
outcome. Hence, we can also view plans as acts: functions
mapping states to outcomes.
We take the acts in a dynamic decision problem to be the set of possible plans, and
evaluate them using the decision rules
discussed above.

%Sam22 state differences from previous work
%joe20*: we'll probably want to say this earlier, when we introduce
%our framework.
%joe24
%A major difference between the models used by Epstein and
%Schneider \citeyear{EpsteinSchneider2003}, Hayashi
%\citeyear{Hayashi2009}, and ours, is that the former two models assume
A major difference between our model and that used by Epstein and
Schneider \citeyear{EpsteinSchneider2003} and Hayashi
%Sam28* I think it should be "assume" instead of "assumes"? But I'm not sure
\citeyear{Hayashi2009} is that the latter assume
%\citeyear{Hayashi2009} is that the latter assumes
a \emph{filtration} 
information structure.
%Sam26
With a filtration information structure, the DM's knowledge is
%joe22: you need to say a little ore here
%represented by a fixed, finite sequence of partitions.
%More specifically, at time $t$, if the true state was $s$, then the
%DM knows that the true state is some state in the cell
%Since the sequence of partitions is fixed, the DM's knowledge,
%$F(t,s)$, does not depend on the choices that she makes.
represented by a fixed, finite sequence of partitions.
More specifically, at time $t$, the DM uses a partition $F(t)$ of
the state space, and if the true state is $s$, then all that the DM
%joe24
%knows that the true state is the cell of $F(t)$ containing $s$.
knows is that the true state is in the cell of $F(t)$ containing $s$.
Since the sequence of partitions is fixed, the DM's knowledge
is independent of the choices that she makes,
and her options and preferences cannot depend on past choices.
%joe24
%This information structure induces a significant restriction on the
This assumption significantly restricts the
types of problems that can be naturally modeled.
%joe22: moved up from below
For example, if the DM prefers to have one apple over two oranges at
time $t$, then this must be her time $t$ preference, regardless of
whether she has already consumed five apples at time $t-1$.
Moreover, consuming an apple at time $t$ cannot preclude consuming an
apple at time $t+1$.
%joe22: I wouldn't call our approach pragmatic
%This modeling assumption is different from our pragmatic approach that
%represent a decision problem using a single-player extensive-form
%game.
Since we effectively represent a decision problem as a single-player
extensive-form game, we can capture all of these situations in a
straightforward way.
The models of Epstein, Schneider, and Hayashi can be viewed
as a special case of our model.
%joe20*: Sam, you need to explain better what it means to be
%independent of past actions.  An example may help
%joe22*: I made some changes here, then cut it because I realized that
%I simply didn't understand it.  See comments below
%Moreover, instead of relating preferences or choice to a decision problem,
%Epstein, Schneider and Hayashi focus only on the ranking of different
\commentout{
Moreover,
Epstein, Schneider, and Hayashi focus on the ranking of different
choices at different times.
This focus has two main consequences.
Firstly, the DM's preferences at a certain time are assumed to be
independent of the choices made at different times.
%joe22*: I don't understand this point.
Secondly, the model has no specification of the actual choice set at
time $t$, since the focus is only on the ranking between different
acts.
%joe22: this has now been moved up
For example, while the DM may prefer one apple over two oranges at
time $t$, there is no specification on what bundles are actually
available to be chosen at time $t$.
%joe22: I
Furthermore, the model does not capture the structure of a dynamic
decision problem, e.g., the fact that consuming an apple at time $t$
will preclude the opportunity of having apple juice at time $t+1$.
This modeling assumption is different from our pragmatic approach that represent a decision problem using a single-player extensive-form game.
%Sam26
In fact, the models of Eptein, Schneider, and Hayashi can be thought
of as special cases of our model, where the information structure is
fixed and independent of the DM's choices, and where the decision tree
has a structure such that past actions do not affect current and
future choices.
}
%joe22: \end{commentout}

%joe16*: Again, we need to introduce C_{M,E} more formally,not in the
%midst of forgone conclusions
In a dynamic decision problem, as we shall see, two different menus
are relevant for
making a decision using regret-minimization: the menu with respect to
which regrets are computed, and the menu of feasible choices.
%Sam16: make it clear this is for an arbitrary decision rule
%joe14*: assumed E \ne \emptyset
%The choice of a DM (using any menu-dependent decision rule)
%joe16*: not quite, this needs to be explained in any case
%$C_{M,E}(M')$, where $E, M \ne \emptyset$, denotes
%the choice of a DM (using any menu-dependent decision rule)
%from the menu $M'$ when she considers the
%states in $E$ possible and
We formalize this dependence by considering \emph{choice functions}
of the form
$C_{M,E}$, where $E, M \ne \emptyset$.  $C_{M,E}$ is a function
mapping a nonempty menu $M'$
to a nonempty subset of $M'$.  Intuitively, $C_{M,E}(M')$ consists of
the DM's most preferred choices
from the menu $M'$ when she considers the
states in $E$ possible and
her decision are made relative to menu $M$.  (So, for example, if the
DM is making her choices choices using regret minimization, the regret
is taken with respect to $M$.)
Note that there may be more than one plan in $C_{M,E}(M')$;
intuitively, this means that the DM does not view any of the plans in
$C_{M,E}(M')$ as strictly worse than some other plan.

%joe16*: added, and tried to motivate.  Note that this does *not* mean
%that the player does the same thing at all histories h and h' such
%that E(h) = E(h'), which is good.  It just says that the DM's
%preferences depend only on the states that she considers possible
What should $M$ and $E$ be when the DM makes a decision at
a history $h$?  We always take $E = E(h)$.  Intuitively, this says
that all that matters about a history as far as making a decision is
the set of states that the DM considers possible; the previous moves
made to get to that history are irrelevant.  As we shall see, this
seems reasonable in many examples.  Moreover, it is consistent with
our choice of taking probability distributions only on the state space.

The choice of $M$ is somewhat more subtle.  The most obvious choice
%joe25: shortened
%(and the one that has typically been made in the literature, with no
%further comment) is that $M$ consists of the plans that are still
(and the one that has typically been made in the literature, without
comment) is that $M$ consists of the plans that are still
feasible at $h$, where
a plan $f$ is \emph{feasible} at a history $h$ if,
for all strict prefixes $h'$ of $h$, $f(h')$ is also a
prefix of $h$.  So $f$ is feasible at $h$ if $h$ is compatible with
all of $f$'s moves.  Let $M_h$ be the set of plans feasible at $h$.
While taking $M=M_h$ is certainly a reasonable choice, as we shall
see, there are other reasonable alternatives.

%joe16*:
Before addressing the choice of menu in more detail, we consider how
to apply regret
in a dynamic setting.
If we want to apply MER or MWER, we must update the probability
distributions.
%Sam26
Epstein and Schneider \citeyear{EpsteinSchneider2003} and Hayashi \citeyear{Hayashi2009}
consider \emph{prior-by-prior updating}, the most common way to update
a set of probability measures, 
defined as follows:
%joe24
%\fullv{
%joe24
%If $\cP$ is a set of probability measures, then {prior-by-prior}
%If $\cP$ is a set of probability measures and $E \subseteq S$, then
%{prior-by-prior} 
%updating is defined as 
\fullv{$$
\cP|^p E = \{\Pr | E: \Pr \in \cP, \Pr(E) > 0\}.
$$}
\shortv{$\cP|^p E = \{\Pr | E: \Pr \in \cP, \Pr(E) > 0\}.$}
%}
\fullv{ We can also apply prior-by-prior updating to a weighted set of probabilities:
%Sam35 fixed notation
$$\cP^+|^p E = \{(\Pr | E, \alpha ): (\Pr,\alpha) \in \cP^+, \Pr(E) > 0\}.$$
%joe24: 
%If $\Pr(E) = 0$ for all $\Pr \in \cP$, then $\cP^+|^p E$ is undefined.
}
%joe24: added paragraph break
%However, prior-by-prior updating can produce some rather

Prior-by-prior updating can produce some rather
counter-intuitive outcomes.
For example, suppose we have a coin of unknown bias in $[0.25,0.75]$,
%joe24
%and that we are going to flip the coin $100$ times. 
%We can represent our belief using a set of probability measures.
and flip it $100$ times. 
We can represent our prior beliefs using a set of probability measures.
However, if we use prior-by-prior updating, then after each flip of the coin
%joe24
%the coin our belief essentially does not change, since 
%each measure essentially does not change because each flip is
%independent of the other flips. 
the set $\cP^+$ representing the DM's beliefs does not change, 
because the beliefs are independent.
Thus, in this example, prior-by-prior updating is not capturing the information provided by the flips.

We consider another way of updating weighted sets of
probabilities, called \emph{likelihood updating}
\cite{HalpernLeung2012}.
The intuition is that the weights are updated as if they were a
second-order probability distribution over the probability measures.
%joe24
%Formally, let $E\subseteq S$ be an \emph{event}.
%Define $\ucP(E) = \sup\{\alpha_{\Pr}\Pr(E): \Pr \in \cP\}$;
%if $\ucP(E) > 0$, set $\alpha_{\Pr,E}= \sup_{\{\Pr'\in\cP :
Given an event $E\subseteq S$,
%Sam35 fixed notation for weights
define $\ucP(E) = \sup\{\alpha\Pr(E): (\Pr,\alpha) \in \cP^+\}$;
if $\ucP(E) > 0$, let $\alpha^l_{E}= \sup_{\{(\Pr',\alpha') \in\cP^+ :
\Pr'| E = \Pr|E\}} \frac{\alpha'\Pr'(E)}{\ucP(E)}$.  Given a
measure  $\Pr \in \cP$, there may be
several distinct measures $\Pr'$ in $\cP$ such that $\Pr'| E = \Pr
| E$.  Thus, we take the weight of $\Pr | E$ to be the $\sup$ of the
possible candidate values of  $\alpha^l_{E}$.  By dividing by
$\ucP(E)$, we guarantee that $\alpha^l_{E} \in [0,1]$, and that
there is some weighted measure $(\Pr,\alpha)$ such that $\alpha^l_{E} = 1$,
as long as there is some pair $(\Pr,\alpha) \in \cP^+$ such that
$\alpha \Pr(E) = \ucP(E)$.
If $\ucP(E) > 0$,
%joe11
%we take $\cP^+ \mid E$, the result of applying likelihood updadting by
we take $\cP^+ |^l E$, the result of applying likelihood updating by
$E$ to $\cP^+$, to be
%joe11
%$$\{(\Pr\mid E, \alpha_{\Pr,E} ): \Pr \in \cP\}.$$
%If $\ucP(E) = 0$, then $\cP^+ \mid E$
%joe24
%$$\{(\Pr | E, \alpha_{\Pr,E} ): \Pr \in \cP, \Pr(E) > 0\}.$$ 
%Sam35 fixed notation for weights
$$\{(\Pr | E, \alpha^l_{E} ): (\Pr,\alpha) \in \cP^+, \Pr(E) > 0\}.$$ 
%If $\ucP(E) = 0$, then $\cP^+ |^l E$ is undefined.is empty

%joe11
%In computing $\cP^+ \mid E$, we update not just the probability
In computing $\cP^+ |^l E$, we update not just the probability
%Sam33
measures in $\Pr \in \cP$, but also their weights, which are updated to
$\alpha_{E}^{l}$. 
Although prior-by-prior updating does not change the weights, for
%joe27
%purposes of exposition we will use $\alpha_E^{p}$ to denote the
%``updated weights'' in $\cP^+|^p E$. 
purposes of exposition,
%Sam35 
given a weighted probability measure $(\Pr,\alpha)$, 
we use $\alpha_{E}^{p}$ to denote the
``updated weight'' of $\Pr| E \in \cP^+|^p E$; of course,
$\alpha_{E}^{p} = \alpha$.

%joe27: aded paragraph break
Intuitively, probability measures that are supported by the new
information will get larger weights 
using likelihood updating
than those not supported by the new information.
Clearly, if all measures in $\cP$
start off with the same weight and
assign the same probability to the
event $E$, then likelihood updating will give the same weight to each probability measure, resulting in measure-by-measure
updating.
This is not surprising, since such an observation $E$ does not give us
information about the relative likelihood of measures.
%joe23*: It's not clear what form of updating is appropriate if the
%probability distribution isn't stable.  I would cut this discussion.
\commentout{
Using likelihood updating is appropriate only if
the measure generating the observations is
assumed to be stable.
%Sam Aug 8
This is because if the generating probability distribution changes
with time, then we cannot converge to a single generating
distribution.
For example, if observations of heads and tails are generated by coin
tosses, and a coin of possibly different bias is tossed in each round,
then likelihood updating would not be appropriate.
%joe23:
It is not obvious what kind of updating should be done in this more
general setting.
}

%Sam34
Let $\regret^{\cP^+|^l E}_M(f)$ denote the regret of act $f$ computed
with respect to menu $M$ and beliefs $\cP^+|^l E$.   
If $\cP^+|^l E$ is empty 
%joe27
(which will be the case if $\ucP(E) = 0$) 
then $\regret^{\cP^+|^l E}_M(f) = 0$ for all acts $f$.
We can similarly define $\regret^{\cP^+|^p E}_M(f)$ for beliefs updated using prior-by-prior updating.
Also, let $C^{\regret,\cP^+|^l E}_{M}(M')$ be the set of acts in $M'$ that minimize the weighted expected regret $\regret^{\cP^+|^l E}_M$.
If $\cP^+|^l E$ is empty, then $C^{\regret,\cP^+|^l E}_{M}(M') = M'$. 
We can similarly define \fullv{$C^{\regret,\cP^+|^p E}_{M}$,}
$C^{\regret,\cP|E}_{M}$ and $C^{\regret,\Pr|E}_{M}$.
%joe27*: Unfortunately, later we talk about \regret^{\P^+|E}(f), and I
%don't think we've efined whta this is if \ucp(E) = 0.
%Sam34 added definitions in above paragraph

%Sam Oct 16, 2014
\shortv{
\vspace{-5pt}
}
\section{Forgone opportunities}\label{sec:FO}
%Sam16: define dynamic decision problem
%We view dynamic decision problems as single-player extensive form
%joe14
%A \emph{dynamic decision problem} is a single-player extensive form

%joe16: changed story
%All the regret-minimizing decision rules discussed in
%Section~\ref{sec:MWER} define regret with respect to a menu.
%Here, a menu is a set of plans.
%In a dynamic setting, we must decide how \emph{forgone opportunities},
%plans that were initially available to the DM, but are no longer
%available due to earlier actions, should affect the menu.
%As the following example shows, the menu used in regret minimization can have a
%significant impact on the plan chosen.
%Sam29 moved figure up here to save space
\shortv{
\begin{wrapfigure}{r}{0.50\textwidth}
\vspace{-60pt}
\includegraphics[width=0.48\textwidth]{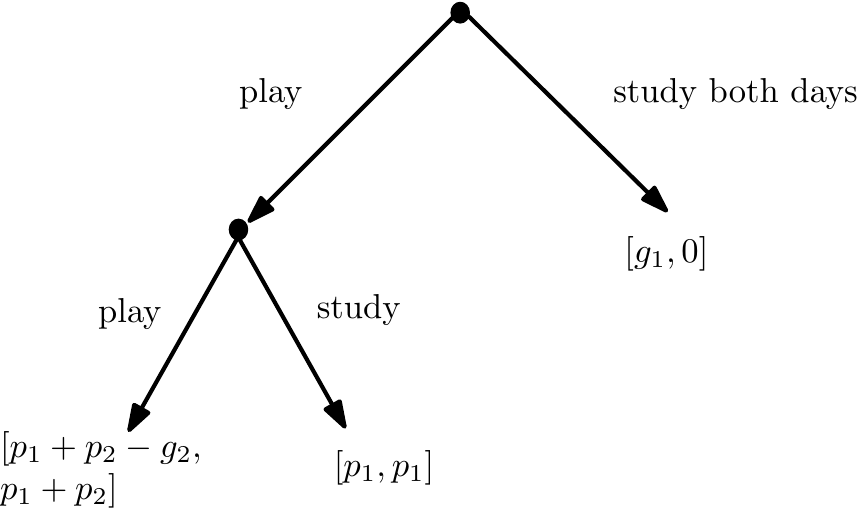}
	\caption{An explanation for procrastination.}\label{fig:procrastination}
\vspace{-20pt}
\end{wrapfigure}
}
As we have seen, when making a decision at a history $h$ in a dynamic
decision problem, the DM must decide what menu to use.  In this
section we focus on one choice.  Take a \emph{forgone opportunity} to
be a plan that was initially available to the DM, but is no longer
available due to earlier actions.  As we observed in the introduction,
while it may seem irrational to consider forgone opportunities, people
often do.  Moreover, when combined with regret, behavior that results
by considering forgone opportunities may be arguably \emph{more}
rational than if forgone opportunities are not considered.
Consider the following example.
\begin{example}\label{xam:student}
Suppose that
a student has an exam in two days.
She can either start studying today, play today and then study tomorrow, or just play on both days and never study.
There are two states of nature: one where the exam
is difficult, and one where the exam is easy.
The utilities reflect a combination of the amount of pleasure that the
student derives in the next two days, and her score on the exam
relative to her classmates.
Suppose that the first day of play gives the student $p_1>0$ utils,
and the second day of play gives her $p_2>0$ utils.
Her exam score affects her utility only in the case where the exam is
hard and she studies both days, in which case she gets an
additional $g_1$ utils for doing much better than everyone else, and
in the case where the exam is hard and she never studies, in which
case she loses $g_2 > 0$ utils for doing much worse than everyone
else.
Figure~\ref{fig:procrastination} provides a graphical representation
of the decision problem.
Since, in this example, the available actions for the DM
are independent of
nature's move,
for compactness, we omit nature's initial move (whether the exam is
easy or hard).
%joe11
%Instead, we describe the payoffs of the DM as a pair $(a_1,a_2)$,
Instead, we describe the payoffs of the DM as a pair $[a_1,a_2]$,
where $a_1$ is the payoff if the exam is hard, and $a_2$ is the payoff
if the exam is easy.
\fullv{
\begin{figure}[htb]
	\centering
		\includegraphics[width=0.48\textwidth]{procrastination}
			\caption{An explanation for procrastination.}\label{fig:procrastination}
\end{figure}
}

%joe11: added paragraph break
%Consider the case where $2p_1 + p_2 > g_1 > p_1 + p_2$.
Assume that $2p_1 + p_2 > g_1 > p_1 + p_2$ and $2p_2 > g_2 > p_2$.
That is, if the test were hard, the student would be happier studying and doing well on the test
than she would be if she played for two days,
%Sam12 Sam13 added
%joe11: expanded
%but not too much happier.
but not too much happier; similarly, the penalty for doing badly in
the exam if the exam is hard and she does not study is greater than
the utility of playing the second day, but not too much greater.
Suppose that the student uses minimax regret to make her decision.
%joe11
%On the first day, she observes that playing for one day and then
%studying on the next day has a worst-case regret of $g_1 - p_1$, while
On the first day, she observes that playing one day and then
studying the next day has a worst-case regret of $g_1 - p_1$, while
studying on both days has a worst-case regret of $p_1 + p_2$.
%Suppose that  $g_1 - p_1 <  g_1$.
Therefore, she plays on the first day.
On the next day,
%joe1: let's bring out the story, which gets lost here!
%she compares her two available options: to study, or to play.
suppose that she does not consider forgone opportunities and just
compares her two available options, studying and playing.
Studying has a worst-case regret of $p_2$, while playing has a
%Sam13 fixed typo in regret values
%joe11: we shouldn't throw in this assumption here
%worst-case regret of $g_2-p_2$, so, assuming $g_2 < 2p_2$,
worst-case regret of $g_2-p_2$, so, since $g_2 < 2p_2$,
she plays again on the second day.
On the other hand, if the student had included the forgone opportunity
in the menu on the second day, then studying would have regret $g_1 -
p_1$, while playing would have regret $g_1 + g_2 - p_1 - p_2$.
%Sam13 : need to add another assumption
%Given our assumptions, in that case, studying minimizes regret.
%joe11: made the assumption earlier
%Assuming that $g_2 > p_2$, studying minimizes regret.
Since $g_2 > p_2$, studying minimizes regret.
%joe16
\hfill \wbox
\end{example}
Example~\ref{xam:student} emphasizes the roles of the menus $M$ and
$M'$ in $C_{M,E}(M')$. Here we took $M$, the menu relative to which
choices were evaluated, to consist of all plans, even
the ones that were no longer feasible, while $M'$ consisted of only
feasible plans.
In general, to determine the menu component $M$ of the choice function
$C_{M,E(h)}$ used at a history $h$, we use a \emph{menu-selection
  function} $\mu$.  The menu $\mu(h)$ is the menu relative to which
choice are computed at $h$.
We sometimes write $C_{\mu,h}$ rather than $C_{\mu(h),E(h)}$.

We can now formalize the notion of \emph{no preference reversal}.
%Sam26
\noindent
%joe22
%If there is no preference reversal, then if $f$
%is one of the best plans at $h$ and it is still feasible at history $h'$,
%joe24: shortened
\commentout{
There is \emph{no preference reversal} if, for all histories $h$ and
$h'$ such that $h$ precedes $h'$ and all plans $f$, if
is one of the best plans at $h$ and it is still feasible at history $h'$,
%joe22*
%then $f$ is still considered by the agent to be one of the best plans at $h'$.
then $f$ is one of the best plans at $h'$; that is, if $f \in
C_{\mu,E(h)}(M(h))$ and $f \in M(h')$, then $f \in C_{\mu,E(h')}(M(h'))$.}
%Sam26
\commentout{No preference reversal implies that any plan initially considered
among the best plans will
%joe11
%always be one of the best plans whenever it is feasible.
be one of the best plans whenever it is feasible.
If the DM moves according to the plan starting from the beginning,
%joe7
%then the plan always remain feasible.
then the plan always remains feasible.
Therefore, the DM will be able to carry out the plan,
even if we require that, at each step the DM always takes the first
step of a plan currently considered among the best.
}
%joe24: no space
\fullv{Roughly speaking, this says that if a plan $f$ is considered one of
  the best at history $h$ and is still feasible at an extension $h'$
  of $h$, then $f$ will still be considered one of the best plans at $h'$.}

\commentout{
%joe11*: You may want this definition; I'm not sure.
\begin{definition}
A plan $f$ is \emph{ex ante optimal} if $f \in
C_{\mu,\langle s\rangle}(M_{\langle s\rangle})$, where $s$ is the initial state chosen by nature
(i.e., nature's first move).
\end{definition}
}

%Sam25
\shortv{
\vspace{-5pt}
}
\begin{definition}[No preference reversal]
A family of choice functions $C_{\mu, h}$ has \emph{no preference reversals}
%joe24
%if, for all histories $h$, all optimal plans $f \in C_{\mu,h}(M_h)$
%and all histories $h'$ extending 
%$h$ such that $f \in M_{h'}$, we have $f \in C_{\mu, h'}(M_{h'})$.
if, for all histories $h$ and all histories $h'$ extending 
$h$, if $f \in C_{\mu,h}(M_h)$ 
and $f \in M_{h'}$, then $f \in C_{\mu, h'}(M_{h'})$.
\end{definition}

%joe16
%The fact that we do not get a preference reversal if we take forgone
The fact that we do not get a preference reversal in
Example~\ref{xam:student} if we take forgone
opportunities into account here is not just an artifact of this example.
As we now show, as long as we do not get new information and also use
a constant menu (i.e., by keeping all forgone opportunities in the
menu),
then there will be no preference reversals if we
minimize (weighted) expected regret in a dynamic setting.

\begin{proposition}
%joe16
\label{pro:noprefrev}
%Sam16: ported joe12
%Sam14 add assumption about subsets
%joe7
%If for all histories $h, h'$, we have $E(h) = S$ and $\mu(h) =
%joe12*: actually, since we're doing regret here, we just need to
%argue that regret satisfies this property (known as Sen's alpha). We
%don'te need to assume it.
If, for all histories $h, h'$, we have $E(h) = S$ and $\mu(h) =
%\mu(h')$, and whenever $f \in M \subseteq M'$ and $f \in C_{\mu,h}(M')$,
%we have $f \in C_{\mu,h}(M)$, then no preference reversals occur.
\mu(h')$, and decisions are made according to MWER
%joe16: added
(i.e., the agent has a set $\cP^+$ of weighted probability
distributions and a utility function $u$, and
$f \in C_{\mu,h}(M_h)$ if $f$ minimizes weighted expected regret with respect
%joe23*: since you've introduced two flavors of updating, just writing
%| is undefined.  If it works for both notions of updating, you have
%to make that clear.  It's a nonissue for the abstract, since we
%should only do likehood updating, but for the full paper you'll need
%to be really careful.
to $\cP^+|^lE(h)$\fullv{ or $\cP^+|^p E(h)$}),
then no preference reversals occur.
\end{proposition}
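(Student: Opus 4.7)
The plan is to observe that the two hypotheses together make the regret function completely independent of the history, after which the conclusion reduces to a trivial monotonicity fact about minimizers on nested sets.

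First, I would note that when $E(h) = S$, conditioning on $E(h)$ is the identity operation on $\cP^+$: every measure $\Pr$ in $\cP$ satisfies $\Pr \mid S = \Pr$, and the weight rescaling in likelihood updating collapses (the normalizer $\ucP(S)$ equals $\sup_{(\Pr,\alpha)\in \cP^+} \alpha = 1$ by the normalization convention, and each numerator $\alpha \Pr(S) = \alpha$). So $\cP^+ \mid^{l} E(h) = \cP^+ \mid^{p} E(h) = \cP^+$ at every history $h$. Combined with the assumption that $\mu(h) = \mu(h') = \mu$ is some fixed menu, this means the weighted-expected-regret functional $\regret^{\cP^+}_{\mu}(\cdot)$ used to rank plans is literally the \emph{same} function at every history.

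Next, I would verify the purely combinatorial fact that feasibility can only shrink along extensions: if $h'$ extends $h$, then $M_{h'} \subseteq M_h$. This is immediate from the definition of feasibility — any plan whose moves are compatible with the longer history $h'$ is also compatible with every prefix of $h'$, in particular with $h$.

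Finally I would close the argument in one line. Suppose $f \in C_{\mu,h}(M_h)$ and $f \in M_{h'}$. By Step 1 plus constancy of $\mu$, $C_{\mu,h}(M_h)$ consists exactly of the minimizers of the fixed functional $\regret^{\cP^+}_{\mu}$ over $M_h$, and $C_{\mu,h'}(M_{h'})$ consists of the minimizers of the same functional over $M_{h'}$. Because $M_{h'} \subseteq M_h$, any minimizer over the larger set that happens to lie in the smaller set is also a minimizer over the smaller set. Hence $f \in C_{\mu,h'}(M_{h'})$, as required.

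The only real content is verifying that both updating rules act as the identity when $E = S$, and in particular that likelihood updating preserves the normalization $\alpha_{\Pr} = 1$ for some $\Pr$; there is no substantive obstacle, since once the regret functional is shown to be history-invariant, the no-reversal conclusion is an elementary observation about restricting an $\argmin$ to a subset.
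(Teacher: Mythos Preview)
Your proposal is correct and follows essentially the same approach as the paper: both arguments observe that the hypotheses make the choice function $C_{\mu,h}$ history-independent, note that $M_{h'}\subseteq M_h$ whenever $h'$ extends $h$, and then conclude via the elementary fact that an argmin over a larger set that lies in a smaller subset remains an argmin there (the paper names this step \emph{Sen's $\alpha$}, while you state it directly). Your verification that likelihood updating acts as the identity on $E=S$ is a bit more explicit than the paper's, but the logic is the same.
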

\fullv{
\begin{proof}
%Sam14: replaced h_0 with <s>
%joe16: tightened up
%Let $\langle s \rangle$ be the history at which the DM initially makes
%a decision.  Suppose that $f \in C_{\mu,\langle s \rangle}(M_{\langle
%s \rangle})$ and $h$ is an arbitrary history.
Suppose that $f \in C_{\mu,\langle s \rangle}$, $h$
is a history
  extending $\langle s \rangle$, and $f \in M_h$.
Since $E(h) = S$ and $\mu(h) = \mu(\langle s \rangle)$ by assumption,
%joe16
%we have $C_{\mu(h),E(h)}(M_h) = C_{\mu(\langle s \rangle),E(\langle s
%\rangle)}(M_h)$.
we have $C_{\mu(h),E(h)} = C_{\mu(\langle s \rangle),E(\langle s \rangle)}$.
%joe11*: I don't see why this follows, unless we make some additional
%assumptions (in particular, that if f \in M \subseteq M', and
%f \in C(M'), then f \in C(M).  Am I missing something here?
%joe16; this is out of place, and duplicated at the end of the proof.
%(I suspect it's a bug in the ECSQARU submission.)
%Because $M_h \subseteq M_{\langle s \rangle}$ and $f \in
%C_{\mu,\langle s \rangle}(M_{\langle s \rangle})$, we
%have $f \in C_{\mu(h),E(h)}(M_h)$, as required.
By assumption, $f \in C_{\mu(\<s\>, M_{\<s\>}}(M_{\<s\>}) =
C_{\mu(h),E(h)}(M_{\<s\>})$.
%joe12*
It is easy to check that MWER satisfies what is known in decision
theory as \emph{Sen's $\alpha$ axiom} \citeyear{Kreps}: if $f \in M'
  \subseteq M''$ and $f \in   C_{M,E}(M'')$, then $f \in C_{M,E}(M')$.
%joe16
%That is, if $f$ among the preferred acts in menu $M''$,
%if $f$ is in the smaller menu $M'$, then it must also be among the
That is, if $f$ is among the most preferred acts in menu $M''$,
if $f$ is in the smaller menu $M'$, then it must also be among the most
preferred acts in menu $M'$.
Because $f \in M_h \subseteq M_{\langle s \rangle}$ and $f \in
C_{\mu,\langle s \rangle}(M_{\langle s \rangle})$, we
have $f \in C_{\mu(h),E(h)}(M_h)$, as required.
\end{proof}
}

%joe7: again, no need to use this awkward table placement in the full paper.
\fullv{
\begin{table}
	\centering
		\begin{tabular}{|c|c|c|c|c|}\hline
		& \multicolumn{2}{|c|}{Hard} & \multicolumn{2}{|c|}{Easy} \\ \hline
		& Short & Long & Short & Long \\ \hline
		${\Pr}_1$ & 1 & 0 & 0 & 0 \\
		${\Pr}_2$ & 0 & $0.2$ & $0.2$ & $0.2$ \\ \hline
		play-study & 1 & 0 & 5 & 0 \\
		play-play & 0 & 3 & 0 & 3 \\ \hline
		\end{tabular}
		\caption{$\alpha_{\Pr_1} = 1, \alpha_{\Pr_2} = 0.6$.}
	\label{tab:reversalex}
\end{table}
}
%joe16: added a sentence
\shortv{
\begin{wraptable}{l}{5.2cm}
\vspace{-20pt}
	\centering
		\begin{tabular}{|c|c|c|c|c|}\hline
		& \multicolumn{2}{|c|}{Hard} & \multicolumn{2}{|c|}{Easy} \\ \hline
		& Short & Long & Short & Long \\ \hline
		${\Pr}_1$ & 1 & 0 & 0 & 0 \\
		${\Pr}_2$ & 0 & $0.2$ & $0.2$ & $0.2$ \\ \hline
		play-study & 1 & 0 & 5 & 0 \\
		play-play & 0 & 3 & 0 & 3 \\ \hline
		\end{tabular}
		\caption{$\alpha_{\Pr_1} = 1, \alpha_{\Pr_2} = 0.6$.}
	\label{tab:reversalex}
\vspace{-40pt}
\end{wraptable}}Proposition~\ref{pro:noprefrev} shows that we cannot have preference
reversals if the DM does not learn about the world.
However, if the DM learns about the world, then we can have preference reversals.
Suppose, as is depicted in Table~\ref{tab:reversalex},
that in addition to being hard and easy, the exam can also be short or long.
The student's beliefs are described by the set of weighted
probabilities $\Pr_1$ and $\Pr_2$, with weights $1$ and
$0.6$, respectively.

We take the option of studying on both days out of the picture by
%joe23*: why does this take it out of the picture.  This needs to be
%clarified
%Sam28 clarified
assuming that its utility is low enough for it to never be preferred, and for it to never affect the regret computations.
After the first day, the student learns whether the exam will be hard
or easy.
One can verify that the ex ante regret of playing then studying is
lower than that of playing on both days, while after the first day,
the student prefers to play on the second day, regardless of whether
%joe25
%she learns that the exam is hard, or if she learns that the exam is easy.
she learns that the exam is hard or easy.

\shortv{
\vspace{-5pt}
}
\section{Characterizing no preference reversal}\label{sec:consistence}
\label{sec:dc}
%joe1*:  I think we're confusing two notions here: one is ``dynamic
%consistency'', and the other is ``no preference reversal''.  I
%thought they were the same thing, but you define dynamic consistency
%here as meaning essentially that the sure-thing principle holds.  So
%in this section, we're not characterizing dynamic consistency; we're
%characterizing when no preferenc reversal holds.  You need to rewrite
%the intro and this section to make the distinction clear.
%In this section, we examine conditions under which a naive optimal
%plan can always be carried out to completion.
%This is a weaker requirement than full dynamic consistency, since
%preferences over non-optimal plans need not be dynamically
%consistent.
\shortv{
\begin{wrapfigure}{r}{0.58\textwidth}
\vspace{-30pt}
\includegraphics[width=0.58\textwidth]{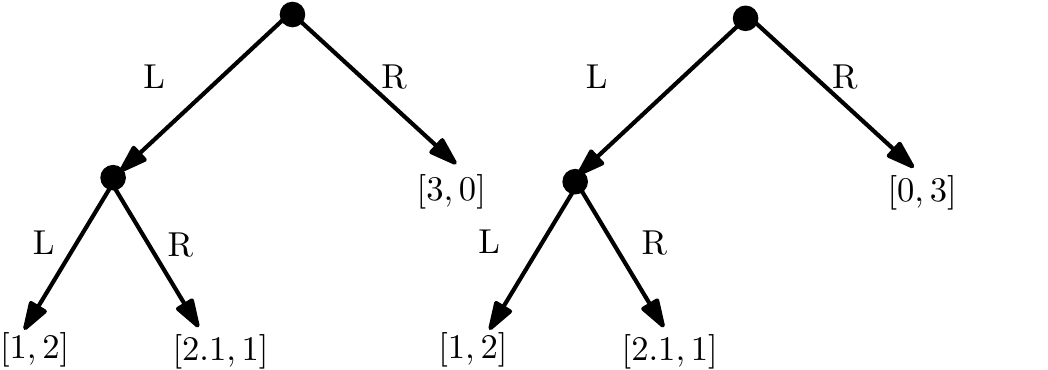}
	\caption{\label{fig:lostcause}}
\vspace{-20pt}
\end{wrapfigure}
}
We now consider conditions under which there is no preference reversal
in a more general setting, where
the DM can acquire new information.
While including all forgone opportunities is no longer a sufficient
%Sam2 changed dynamic consistency to preference reversals
condition to prevent preference reversals, it is necessary, as the
following example shows:
%We first establish that it is necessary to include all forgone
%opportunities in regret computation if we are to have any hope of
%ensuring dynamic consistency.
%Consider two similar decision problems depicted in Figure~\ref{fig:lostcause}.
Consider the two similar decision problems depicted in
Figure~\ref{fig:lostcause}.
%joe1*: I have a stry s i the upper left hand corner of the figure, as
%well as one above the caption.  Again, I think the circles indicating
%nodes are a little too large.  FInally, the edge labels (``L'' and
%``R'') seem somewhat misplaced.
\fullv{
\begin{figure}[htb]
\vspace{-10pt}
	\centering
		\includegraphics[width=0.80\textwidth]{lostcause}
	\caption{Two decision trees.}
	\label{fig:lostcause}
\vspace{-10pt}
\end{figure}
}
%joe1: they're identical whether or not forgone opportunities are ignored.
%Note that at the node after first going $L$, the utilities and
%available choices are identical if forgone opportunities are ignored.
%As a result, a decision maker that ignores forgone opportunities
%necessarily makes the same choice in both sub-trees.
Note that at the node after first playing $L$, the utilities and
available choices are identical in the two problems.  If we ignore
forgone opportunities, the DM necessarily makes the same decision in
both cases if his beliefs are the same.
%Sam2 replaced A and B with left and right tree
%joe7: have we defined ``ex ante optimal plan''; if not, we should?
%(I think earlier we said ``initially optimal'').
%Sam9: changed the "initially optimal" earlier to ex ante optimal
However, in the tree to the left, the ex ante optimal plan is $LR$, while in the tree to the right, the
ex ante optimal plan is $LL$.
%Sam2 added
If the DM ignores forgone opportunities, then after the first step, she cannot tell whether she is in the decision tree on the left side, or the one on the right side.
%joe2
%Therefore, if she follows the ex ante optimal plan of one side, she
%necessarily violates the ex ante optimal plan on the other side.
Therefore, if she follows the ex ante optimal plan in one of the trees, she
necessarily is not following the ex ante optimal plan in the other tree.
%joe1*: This doesn't follow.  It could be that the ex ante choice
%is not the optimal plan, so there's no preference reversal.
%Therefore, it is impossible for a decision maker to ignore forgone
%opportunities and be dynamically consistent all the time.

%joe11: it's strange to have just one subsecyion in a section.  I
%adde some glue (which was needed in any case.
In light of this example, we now consider what happens if the DM
learns information over time.
%\subsection{Dynamic consistency}
Our no preference reversal condition is implied by a well-studied notion called \emph{dynamic consistency}.
One way of describing dynamic consistency is that a plan considered
optimal at a given point in the decision process is also optimal at
any preceding point in the process, as well as any future point
that is reached with positive probability \cite{Siniscalchi2011}.
%Sam12 removed. Doesn't seem to add anything
%When dynamic consistency fails to hold, an ex ante optimal plan may
%not be favored in the future, and hence be impossible to carry out.
For menu-independent preferences, dynamic consistency is
usually captured axiomatically by variations of an axiom called \emph{Dynamic
Consistency} (DC) or the \emph{Sure Thing Principle} \cite{Savage1954}.
%Sam25
We define a \emph{menu-dependent} version of DC relative to events $E$ and $F$ using the following axiom.
The second part of the axiom implies that if $f$ is strictly preferred
conditional on $E \cap F$ and at least weakly preferred on $E^c \cap F$, then $f$ is also strictly preferred on $F$.
%Sam29
An event $E$ is \emph{relevant to a dynamic decision problem $\cD$} if
it is one of the events that
the DM can potentially learn in $D$, that is,
if there exists a history $h$ such that $E(h) = E$.
%Sam35 added
A dynamic decision problem $\cD = (S,\Sigma,X,u,\cP)$ is ``proper'' if $\Sigma$ is generated by the subsets of $S$ relevant to $\cD$.
%joe28*: I think here you have to say something like D =
%(S,\Sigma,...) is ``acceptable'' (or some other word) if \Sigma is
%generated by the sets relevant to D.  
Given a decision problem $D$, we take the \emph{measurable sets} to be
the $\sigma$-algebra generated by the events relevant to $D$.  
The following axioms hold for all measurable sets $E$ and $F$, menus $M$ and $M'$, and acts $f$ and $g$.  
%Sam28 combined to save space
%The next two axioms are defined relative to fixed event $E$ and $F$.
%joe24*: I see no benefit in the relativzied version.  I would rewrite
%this for all E and F.  Then you can say that all axioms hold for all
%E and F.  Change SEP too of course.  I don't think it gives any
%particular insight to consider the relativized version, and it takes
%up space
%joe26; this looks pretty awful in the full version
%\vspace{-10pt}
\shortv{\vspace{-5pt}}
\begin{axiom}[DC-M] \label{axiom:DC-M}
%joe23
%For all menus $M$, and $M'\subseteq M$, and for all plans $f,g$ in
%$M'$,
%If $f \in
%joe23
%C_{M_{},E \cap F }(M') and $f \in C_{M_{},E^c \cap F}(M')$, then
%$f\in
If $f \in C_{M_{},E \cap F }(M') \cap C_{M_{},E^c \cap F}(M')$, then $f\in
C_{M_{},F}(M')$.
If, furthermore, $g \notin C_{M_{},E \cap F }(M')$, then $g \notin C_{{M},F}(M')$.
\end{axiom}

\commentout{
If there exists a history $h$ such that $E(h) = E$,
 $f$ and $g$ are plans such that $f$ and $g$
%Sam16.5
%agree at all histories $h$ where $E(h) \supseteq E$ , then we let $fEg$ denote
agree at all histories $h$ where (1) $f$ is feasible and (2) $E(h)
%joe14*: this is too complicated; I think we want f,g \in M_h.  That
%means that f and g are feasible at h, and agree up to h.  It follows
%that f is *never* feasible at a history h' such that E(h') \supseteq
%S, in which case E(h) is also a supserset of E.
\supseteq E$ or $E(h) \supseteq E^c$, then we let $fEg$ denote
the plan that follows the plan $f$ at all histories $h$ where $E(h)
\subseteq E$,
but follows the plan $g$ at all other histories;
%joe14
otherwise $fEg$ is undefined.
%Sam16: ported joe12
%joe13*: I believe the next sentence, provided you make the
%assumptions about I(h) that I stated above.  Otherwise, I'm not
%convinced.  In any case, it requires proof. (TODO: proof)
%joe14*: please prove this!  I believe that with my changes, it's true.
%If $E$ is relevant to $D$, then $fEg$ gives the outcome of $f$ on states in
If $fEg$ is defined, the $fEG$  gives the outcome of $f$ on states in
$E$ and the  outcome of $g$ on states in $E^c$.
}
%Sam29
\shortv{\vspace{-10pt}}
%joe13: added
%joe14
%Since $f$ and $fEg$ give the same outcome on all states in $E$, the
%following axiom seems reasonable.
\begin{axiom}[Conditional Preference]\label{axiom:offE}
%joe14*: strengthened axiom to combine with what was axiom 5, and
%remove mention of fEg
%For all menus $M$, $M'$, and plans $f \in M'$ and $g$,
%if $f \in C_{M,E}(M')$ then $fEg \in C_{M,E}(M')$ for all $fEg \in M'$.
%joe23
%For all menus $M$ and  $M'$,  plans $f, g \in M'$, and events $E
%\subseteq S$,
If $f$ and $g$, when viewed as acts, give the same
outcome on all states in $E$, then
$f \in C_{M,E}(M')$ iff $g \in C_{M,E}(M')$.
\end{axiom}
\fullv{ %Sam29 
The next two axioms put some weak restrictions on choice functions.}
%Finally, we put some weak restrictions on the choice functions.
%\begin{axiom}[Nonempty choice]\label{axiom:nonempty}
%$C_{M,E}(M') \neq \emptyset$.
%Sam29
\shortv{\vspace{-10pt}}
\begin{axiom}\label{axiom:nonempty}
%joe23
$C_{M,E}(M') \subseteq M'$ and
$C_{M,E}(M') \neq \emptyset$ if $M' \ne \emptyset$.
\end{axiom}
%Sam29
\shortv{\vspace{-10pt}}
\begin{axiom}[Sen's $\alpha$]\label{axiom:sens}
If $f \in C_{M,E}(M')$ and $M'' \subseteq M'$, then $f \in C_{M,E}(M'')$.
\end{axiom}

\begin{theorem}\label{prop:norev}
%joe27
%Given a dynamic decision problem $D$, if
%Axiom~\ref{axiom:DC-M}--\ref{axiom:sens} hold, 
For a dynamic decision problem $D$, if
Axiom~\ref{axiom:DC-M}--\ref{axiom:sens} hold 
%joe24*: you probaby should say somewhere that we assume that all
%events are measurable (or perhaps just say that relevant events are
%measurable after you define relevant).
and $\mu(h)=M$ for some fixed menu $M$,
then there will be no preference reversals in $D$.
\end{theorem}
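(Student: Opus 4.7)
The plan is to chain together the four axioms in the following order: shrink the menu with Sen's $\alpha$, then invoke the second clause of DC-M on a two-way decomposition of $E(h)$, and then verify the hypothesis of that clause via a patched plan handled by Conditional Preference. Let $h$ and $h'$ be histories with $h'$ extending $h$, and suppose $f\in C_{M,E(h)}(M_h)$ with $f\in M_{h'}$; the goal is $f\in C_{M,E(h')}(M_{h'})$.

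First, I would observe that every plan feasible at $h'$ is also feasible at $h$, so $M_{h'}\subseteq M_h$. Applying Sen's $\alpha$ (Axiom~\ref{axiom:sens}) to $f\in C_{M,E(h)}(M_h)$ with $f\in M_{h'}$ gives $f\in C_{M,E(h)}(M_{h'})$. By perfect recall, $E(h')\subseteq E(h)$; if $E(h')=E(h)$ we are done, so assume the inclusion is strict. The idea now is to apply DC-M (Axiom~\ref{axiom:DC-M}) with $F=E(h)$ and with the axiom's ``$E$'' instantiated to $E(h')$, so that $E\cap F=E(h')$ and $E^c\cap F=E(h)\setminus E(h')$. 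Reading the second clause of DC-M contrapositively, once its hypothesis is met any $g$ in $C_{M,F}(M_{h'})$ must also lie in $C_{M,E(h')}(M_{h'})$; this would deliver $f\in C_{M,E(h')}(M_{h'})$ immediately.

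The substantive step is therefore to exhibit an act $f^{*}\in M_{h'}$ that is simultaneously in $C_{M,E(h')}(M_{h'})$ and in $C_{M,E(h)\setminus E(h')}(M_{h'})$. By Axiom~\ref{axiom:nonempty} pick $f_1\in C_{M,E(h')}(M_{h'})$ and $f_2\in C_{M,E(h)\setminus E(h')}(M_{h'})$. I would then construct $f^{*}$ as a patched plan: it follows the moves of $h'$ at every strict prefix of $h'$ (both $f_1$ and $f_2$ already do this, since they lie in $M_{h'}$), it copies $f_1$'s actions at every information set reachable from $h'$, and it copies $f_2$'s actions at every information set lying on branches from $h$ that correspond to states in $E(h)\setminus E(h')$. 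These two classes of information sets are disjoint because perfect recall forces any information set downstream of $h'$ to have its $E$ contained in $E(h')$ and any information set reached from $h$ along an $E(h)\setminus E(h')$ branch to have its $E$ contained in $E(h)\setminus E(h')$. Thus $f^{*}$ is a well-defined plan in $M_{h'}$ with $f^{*}(s)=f_1(s)$ for $s\in E(h')$ and $f^{*}(s)=f_2(s)$ for $s\in E(h)\setminus E(h')$. Conditional Preference (Axiom~\ref{axiom:offE}) then transfers optimality: $f^{*}$ agrees with $f_1$ on $E(h')$ so $f^{*}\in C_{M,E(h')}(M_{h'})$, and $f^{*}$ agrees with $f_2$ on $E(h)\setminus E(h')$ so $f^{*}\in C_{M,E(h)\setminus E(h')}(M_{h'})$. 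This certifies the hypothesis of DC-M; the contrapositive of its second clause, applied to $g=f$, yields $f\in C_{M,E(h')}(M_{h'})$, which equals $C_{\mu,h'}(M_{h'})$ since $\mu(h')=M$.

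The main obstacle is the patching construction of $f^{*}$. Every other step is a direct invocation of an axiom, but to use DC-M one needs existence of an act optimal on both cells of the partition, and pasting $f_1$ and $f_2$ into a single strategy requires that the extensive-form structure, perfect recall, and the strict containment $E(h')\subsetneq E(h)$ together ensure that the two branches involve disjoint information sets at which one can freely and independently specify DM moves. Once this structural lemma about the decision tree is in hand, Conditional Preference turns the patched $f^{*}$ into the joint optimizer required to fire DC-M's second clause.
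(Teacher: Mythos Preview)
Your proposal is correct and follows essentially the same route as the paper's proof. The paper also picks optimizers on the two cells (your $f_1,f_2$), patches them into a single plan (which it calls $f'I(h')g$, defined via information-set refinement of $I(h')$), uses Axiom~\ref{axiom:offE} to certify the patched plan is optimal on both cells, and then invokes both clauses of Axiom~\ref{axiom:DC-M} together with Axiom~\ref{axiom:sens}. The only cosmetic difference is that the paper argues by contradiction (assume $f\notin C_{M,E(h')}(M_{h'})$, push through DC-M to get $f\notin C_{M,E(h)}(M_{h'})$, then contradict Sen's~$\alpha$), whereas you run the same chain directly; and the paper spends most of its effort on exactly the point you flag as the main obstacle, namely proving carefully that the patched plan lies in $M_{h'}$ and realizes $f_1$'s outcome on $E(h')$ and $f_2$'s outcome on $E(h')^c$.
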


%joe7
%We next provide a representation theorem, which characterizes
We next provide a representation theorem that characterizes
when Axioms~\ref{axiom:DC-M}--\ref{axiom:sens} hold for a MWER decision maker.
%joe2
%The following condition, SEP, says that the unconditional regret can
The following condition says that the unconditional regret can
be computed by separately computing the regrets conditional on 
%Sam34
measurable events 
$E \cap F$ and on $E^c \cap F$.
%Sam34
%joe28: I would cut this
%Note that since not all states are necessarily measuarble, we are
%requiring this condition to hold for fewer sets than all subsets of $S$.   
\begin{definition}[SEP]
The weighted regret of $f$ with respect to $M$ and $\cP^+$ is
%Sam30
\shortv{
\emph{separable} with respect to $|^l$ if for all measurable sets $E$
and $F$, 
\vspace{-3pt}
\begin{align*}
\regret_{M}^{\cP^+|^lF} (f) & =
\sup_{(\Pr,\alpha) \in \cP^+} \alpha \left( \Pr(E \cap F)
\regret_{M}^{\cP^+|^l(E \cap F)}(f)
+ \Pr(E^c \cap F) \regret_{M}^{\cP^+|^l(E^c \cap F)}(f)\right),
\end{align*}
\vspace{-3pt}
and if $\regret_{M}^{\cP^+|^l(E \cap F)}(f) \neq 0$, then
\begin{align}\label{equ:sep2}
\regret_{M}^{\cP^+|^lF} (f) & >  \sup_{(\Pr,\alpha) \in \cP^+} \alpha \Pr(E^c\cap F) \regret_{M}^{\cP^+|^l(E^c \cap F)}(f).
\end{align}
}
\fullv{
\emph{separable} with respect to $|^{\chi}$ ($\chi \in \{p,l\}$) if
%joe27*: since you talk about meausrable sets here, it would be good
%to introduce \Sigma earlier.  (The standard econ assumption at this
%point would be that all sets are measurable, since S is finite.)  I
%would make the point that by restricting the measurable sets, we are
%requiring this condition to hold for fewer sets.  More importantly, 
%I''m not sure we've defined regret this if
%\ucp(E) = 0. Perhaps the easiest fix is just to require that this
%hold only if \ucp(E\inter F) > 0 and \ucp(E^c \inter F) > 0.  This
%solves some problems later too
%for all measurable sets $E$ and $F$, 
for all measurable sets $E$ and $F$ such that  
$\ucP(E \cap F) > 0$ and $\ucP(E^c \cap F) > 0$,
\begin{align*}
\regret_{M}^{\cP^+|^{\chi}F} (f) & =
\sup_{(\Pr,\alpha) \in \cP^+} \alpha \left( \Pr(E \cap F)
\regret_{M}^{\cP^+|^{\chi}(E \cap F)}(f)
+ \Pr(E^c \cap F) \regret_{M}^{\cP^+|^{\chi}(E^c \cap F)}(f)\right),
\end{align*}
and if $\regret_{M}^{\cP^+|^{\chi}(E \cap F)}(f) \neq 0$, then
$$
\regret_{M}^{\cP^+|^{\chi}F} (f)  >  \sup_{(\Pr,\alpha) \in \cP^+} \alpha
\Pr(E^c\cap F) \regret_{M}^{\cP^+|^{\chi}(E^c \cap F)}(f). 
$$
}
\end{definition}

%joe24
%\noindent A sufficient condition for satisfying (\ref{equ:sep2}) for
%all possible $f$ is that $\inf_{(\Pr,\alpha) \in \cP^+} \alpha \Pr(E
%joe27: It's strange to call (1) complicated when we have so many
%other more complicated conditions floating around!
%\noindent While (\ref{equ:sep2}) may seem complicated, note that a
%a sufficient condition for satisfying it for
%all plans $f$ is that 
%$\inf_{(\Pr,\alpha_{\Pr}) \in \cP^+} \alpha_{\Pr} \Pr(E \cap F) > 0$. 
%joe24
%The following result shows that
%Axioms~\ref{axiom:DC-M}--\ref{axiom:sens} characterize SEP. 
%Together, Theorem~\ref{prop:norev} and Theorem~\ref{thm:equivalence}
%imply that SEP and a fixed, constant menu guarantees no preference
%reversal.
%joe24
We now show that 
Axioms~\ref{axiom:DC-M}--\ref{axiom:sens} characterize SEP. 
%joe24
%We say that a decision problem $T$ is \emph{based on} a state space
%Sam29
%joe28: if we include \Sigma in the decision problem, as I think we
%should, then we should say that D is based on (S,\Sigma) if 
%$D = (S,\Sigma, ...).  As a separate matter, we also want D to be
%acceptable (or whatever word you want to use).  That is, I want to
%separate the fact that (S,\Sigma) are the components in D (which is
%what ``based on'' should say form the fact that \Sigma is generated by
%relevant sets (which is what ``acceptable'' should say).  
Say that a decision problem $\cD$ is \emph{based on} $(S,\Sigma)$ if 
%Sam35 added
$\cD = (S,\Sigma,X,u,\cP)$ for some $X,u$, and $\cP$. 
%Sam34
%joe27*: I think that the \sigma-algebra should be introducd earlier;
%see earlier comment, with an explanation as to why you're not taking
%it to be 2^S.  It might even be better to say ``based on
%(S,\Sigma)'', since we should also talk about distribution on (S,\Sigma).  
%joe24
%$S$, if $S$ is the set of states from which nature can choose in the
%$S$ if $S$ is the set of states from which nature can choose in the
%first step in $D$,
In the following results, we will also make use of an alternative interpretation of weighted probability measures.
Define a \emph{subprobability measure} ${p}$ on $(S,\Sigma)$ to be like a
probability measure, in that it is a function mapping measurable subsets of
$S$ to $[0,1]$ such that ${p}(T \cup T') = {p}(T) +  {p}(T')$ for
%joe24
%disjoint sets $T$ and $T'$), except that it may not satisfy the
disjoint sets $T$ and $T'$, except that it may not satisfy the
requirement that ${p}(S) = 1$.
We can identify a weighted probability distribution $(\Pr, \alpha)$
with the subprobability measure $\alpha \Pr$.  (Note that given a
subprobability measure ${p}$, there is a unique pair $(\alpha,\Pr)$
such that ${p} = \alpha \Pr$: we simply take $\alpha = {p}(S)$ and
$\Pr = {p}/\alpha$.)
Given a set $\cP^+$ of weighted probability measures, we let $C(\cP^+) = \{ p \geq \vec{0}: \exists c, \exists \Pr, (c,\Pr) \in \cP^+ \text{ and } p \leq c\Pr \}$.
\begin{theorem}\label{thm:equivalence}
If $\cP^+$ is a set of weighted distributions
%joe27*: technically, they're distribution on (S,\Sigma), since the
%domain of each distribution is \Sigma, not 2^S.  I think it would be
%good to make this change everywhere (or somehow bring \Sigma into the picture.
%on $S$ such that $C(\cP^+)$ is closed, then the following are
on $(S,\Sigma)$ such that $C(\cP^+)$ is closed, then the following are
%Sam33
equivalent for $\chi \in \{p,l\}$: 
\begin{enumerate}
\item[(a)] For all decision problems $D$ based on $(S,\Sigma)$ and all menus
  $M$ in $D$, Axioms~\ref{axiom:DC-M}--\ref{axiom:sens} hold for
the family $C_M^{\regret, \cP^+|^{\chi}E}$ of choice functions.
\item[(b)] For all decision problems $D$ based on $(S,\Sigma)$, 
states $s \in S$, and acts $f\in M_{\< s \>}$, the
weighted regret of  
 $f$ with respect to $M_{\< s \>}$ and $\cP^+$ is separable with
respect to $|^{\chi}$. 
\end{enumerate}
\end{theorem}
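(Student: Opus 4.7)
I will prove the two implications separately.

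\emph{Direction (b) $\Rightarrow$ (a).} For any decision problem $D$ based on $(S,\Sigma)$ and any menu $M$ in $D$, the family $C_M^{\regret,\cP^+|^{\chi}E}$ is defined by minimizing the functional $g \mapsto \regret_M^{\cP^+|^{\chi}E}(g)$ over the finite menu $M'$. Axiom~\ref{axiom:nonempty} is then immediate, and Axiom~\ref{axiom:sens} (Sen's $\alpha$) is the standard fact that an argmin over a subset contains the argmins of the larger set that happen to lie in the subset. Axiom~\ref{axiom:offE} (Conditional Preference) follows because every subprobability measure in $\cP^+|^{\chi}E$ is supported on $E$, so acts that agree on $E$ receive identical weighted expected regret. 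The substantive content is Axiom~\ref{axiom:DC-M} (DC-M). Given $f\in C_{M,E\cap F}(M')\cap C_{M,E^c\cap F}(M')$, I apply the equality part of SEP to each $g \in M'$ and to $f$; since $\regret_M^{\cP^+|^{\chi}(E\cap F)}(f)\le\regret_M^{\cP^+|^{\chi}(E\cap F)}(g)$ and similarly on $E^c\cap F$, and the weights $\alpha\Pr(E\cap F)$ and $\alpha\Pr(E^c\cap F)$ are nonnegative, monotonicity of the supremum over $\cP^+$ yields $\regret_M^{\cP^+|^{\chi}F}(f)\le\regret_M^{\cP^+|^{\chi}F}(g)$, hence $f\in C_{M,F}(M')$. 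For the strict part, if $g\notin C_{M,E\cap F}(M')$ then $\regret_M^{\cP^+|^{\chi}(E\cap F)}(g)>\regret_M^{\cP^+|^{\chi}(E\cap F)}(f)\ge0$, so $\regret_M^{\cP^+|^{\chi}(E\cap F)}(g)\neq 0$; the strict inequality part of SEP applied to $g$ together with the equality part applied to $f$ then yields $\regret_M^{\cP^+|^{\chi}F}(g)>\regret_M^{\cP^+|^{\chi}F}(f)$, so $g\notin C_{M,F}(M')$.

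\emph{Direction (a) $\Rightarrow$ (b).} I argue by contraposition: assuming SEP fails, I construct a decision problem based on $(S,\Sigma)$ in which one of the axioms fails. Fix $f$, $M=M_{\<s\>}$, and measurable $E,F$ witnessing the failure, and identify weighted measures as subprobability measures in $C(\cP^+)$. Because $C(\cP^+)$ is closed and uniformly bounded in total variation, it is compact, so each of the suprema appearing in the SEP identities is attained; this lets me select a witness subprobability measure $p^\star=\alpha^\star\Pr^\star$ around which the construction is built. I then extend the decision problem (using the freedom in (a) to quantify over all $D$ based on $(S,\Sigma)$) to include an auxiliary act $g$ that agrees with $f$ on $F^c$ and whose outcomes on $E\cap F$ and $E^c\cap F$ are calibrated to the direction of the discrepancy in SEP. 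If the equality part of SEP fails, I choose $g$ so that the conditional regrets of $g$ and $f$ on $E\cap F$ and on $E^c\cap F$ are aligned in a way that makes $f$ optimal on both subevents yet not optimal on $F$ (or vice versa), contradicting DC-M; Axiom~\ref{axiom:offE} lets me ignore the outcomes on $F^c$. If instead the strict inequality part of SEP fails, then there is ``slack'' that allows me to choose $g$ which ties $f$ conditional on $F$ but strictly beats $f$ conditional on $E\cap F$ while tying on $E^c\cap F$, so that $f\in C_{M,E\cap F}(M')\cap C_{M,E^c\cap F}(M')$ fails with $g$ as a strict witness, yet $f\in C_{M,F}(M')$ still holds, contradicting the strict half of DC-M.

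\emph{Main obstacle.} The hard direction is (a) $\Rightarrow$ (b). The difficulty is two-fold. First, SEP is formulated with a supremum over the \emph{original} set $\cP^+$, whereas DC-M compares conditional choice functions computed from updated beliefs $\cP^+|^{\chi}E$; translating a supremum-level violation into an act-level violation of DC-M requires identifying a specific worst-case subprobability measure, which is precisely where the closedness of $C(\cP^+)$ is used (to pass from near-optima to attained optima). Second, the auxiliary act $g$ must be realizable as a plan in some decision problem over $(S,\Sigma)$, have utility outcomes yielding prescribed pointwise regrets against $M$, and be placed in a submenu $M'$ so that all of Axioms~\ref{axiom:offE}--\ref{axiom:sens} are available to simplify the comparison to the single inequality that finally fails. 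The bookkeeping required to align the signs of the discrepancy in SEP with the direction of strict preference in DC-M is the most delicate part of the argument and must be handled separately for $\chi=p$ and $\chi=l$ because the two updating rules rescale the weights differently.
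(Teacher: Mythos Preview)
Your plan tracks the paper's strategy closely, but there is a structural gap in direction (b)~$\Rightarrow$~(a) that you should not skip over. Hypothesis (b) asserts SEP only for \emph{initial} menus $M_{\langle s\rangle}$, whereas (a) requires the axioms for \emph{every} menu $M$ in every decision problem. When you write ``apply the equality part of SEP to each $g\in M'$ and to $f$'' for an arbitrary menu $M$, you are invoking exactly what (b) does not directly supply. The paper closes this gap by observing that any finite collection of acts over $(S,\Sigma)$ can be realized as the initial menu of a two-stage problem $D'$ (nature picks $s$, then the DM picks once); regrets in $D'$ relative to its initial menu coincide with regrets in $D$ relative to $M$, so SEP for initial menus of all $D'$ transfers to SEP for the given $M$. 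The construction is easy but not omittable; as written, you have only shown that SEP for \emph{all} menus implies the axioms for all menus, which is the wrong hypothesis.

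Two smaller points. First, your derivation of the strict half of DC-M is too fast: from ``SEP strict for $g$ plus SEP equality for $f$'' the conclusion $\regret_M^{\cP^+|^{\chi}F}(g)>\regret_M^{\cP^+|^{\chi}F}(f)$ does not follow in one line. The paper in fact uses closedness of $C(\cP^+)$ \emph{in this direction} (not only in (a)~$\Rightarrow$~(b), where you located it) to attain the supremum defining $\regret_M^{\cP^+|^{\chi}F}(g)$ and then argue, via the strict clause of SEP, that any maximizer must put positive mass on $E\cap F$; alternatively one can run a short convexity/monotonicity argument on the function $(x,y)\mapsto\sup_{(\Pr,\alpha)\in\cP^+}\alpha[\Pr(E\cap F)x+\Pr(E^c\cap F)y]$, but either way the step needs to be written out. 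Second, your remark that the two updating rules ``must be handled separately'' is not borne out: the paper's argument is literally uniform in $\chi\in\{p,l\}$, writing $|$ for either $|^p$ or $|^l$, since all that is used about conditioning is that the updated measures are supported on the conditioning event and carry a nonnegative weight.

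For (a)~$\Rightarrow$~(b), your contraposition plan is essentially what the paper does: adjoin an auxiliary act whose pointwise regret on $E\cap F$ (resp.\ $E^c\cap F$) equals $\regret_M^{\cP^+|^{\chi}(E\cap F)}(f)$ (resp.\ $\regret_M^{\cP^+|^{\chi}(E^c\cap F)}(f)$), so that the auxiliary ties $f$ on both subevents while the assumed failure of a SEP clause forces a discrepancy on $F$; this yields the DC-M violation. The paper does not need closedness in that direction, so your appeal to compactness there is harmless but unnecessary.
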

%joe24: shortened
%In addition to providing a way to check whether one could expect no
%preference reversals for a specific DM and decision problem,
%Theorem~\ref{thm:equivalence} contain two subtle implications. 
%Firstly, the fact that the menu $M$ is arbitrary in (a) means that the
%result can be applied even if the DM uses some menu other than the
%initial menu $M_{\<s\>}$.
%Secondly, we do not need to check separability for all menus $M$; it
%suffices that separability hold for the initial menus $M_{\<s\>}$ in
%each decision problem. 
\fullv{Note that Theorem~\ref{thm:equivalence} says that to check
that Axioms 1--4 hold, we 
%Sam29 
%must only check the separability holds
%joe25
%only need to check that separability holds
need to check only that separability holds
for initial menus $M_{\<s\>}$.}

%joe1*: Again, it would be useful to have more intuition for SEP.
%Also, what does it mean that a belief ``contains all of \Delta(S)?
%That it gives each probability in \Delta(S) positive weight?  It
%would be worth stating a lemma here.
%Sam2 : changed wording of "contains all of \Delta(S)"
%joe2: what does it mean that ``beliefs [that] are singetons''
%Sam3: corrected singletons
%joe3: so what are ``probabilistic beliefs''?
%joe16:
%It is easy to see that single probability distributions, as well as the
%%joe16: after rewriting it, I decidded ig wasn't so obvious, so I cut it.
%%It is easy to see that beliefs that are singletons, as well as the
%set of weighted probability measures that gives weight $1$ to each
%distribution in $\Delta(S)$, satisfy SEP for all acts
%On the other hand, it may be difficult to determine whether a set of
%joe24
%One can verify that single probability distributions satisfy SEP
%for all acts and all events. But, in general, it seems difficult
It is not hard to show that SEP holds 
%Sam29
%(for all acts and  events) 
if the set $\cP$ is a singleton. 
%joe1
%Therefore, we also have a sufficient condition for SEP.
%joe24
%We thus provide a condition sufficient for SEP to hold that should be
%joe27*: I'm not sure that this is true any more
%We thus provide a condition on $\cP^+$ sufficient for SEP to hold that
%But, in general, it seems difficult
%to determine whether a set of weighted probabilities satisfies SEP.
%should be easier to check.
But, in general, it is not obvious when a set of probability measures
is separable.  
We thus provide a characterization of separability, in the spirit of
Epstein and LeBreton's \citeyear{Epstein1993} rectangularity condition.
%Sam Aug 17, 2014
%joe16: removed hyphen (universally)
%Define a \emph{sub-probability measure} ${p}$ on $S$ to be like a
\fullv{
%joe24
%We present conditions sufficient to guarantee that SEP holds, one for
We actually provide two conditions, one for
the case of prior-by-prior updating, and another for the case of
likelihood updating. 
%Sam33
%joe27
%These definitions use the notion of \emph{maximum expected value},  
These definitions use the notion of \emph{maximum weighted expected
  value of $\theta$},  
defined as $\overline{E}_{\cP^+}(\theta) = \sup_{(\Pr,\alpha) \in \cP^+} \sum_{s \in S} {\alpha \Pr(s)}\theta(s).$
%Sam34
%joe28; this is nonstandard.  Usually, \overline{X} or cl(X) is used
%to denote the clousre.  I'd prefer \overline{X}
We use $\overline{X}$ to denote the closure of a set $X$.
}
%joe24: since it's a condition on P^+, it clearly is independent of utilities
%Unlike SEP, these conditions are independent of the acts that are
%being compared. Thus, they can be verified without needing to know payoffs.

%Sam7: removed for space constraints
%It is not difficult to show that the largest weighted regret of any
%act with respect to a set $C$ of subprobability measures is equal to
%the largest weighted regret of the act computed with respect to the
%convex hull of the set $C$.
%Sam25
\fullv{
\begin{definition}[$\chi$-Rectangularity]\label{def:rectangular}
A set $\cP^+$ of weighted probability measures is \emph{$\chi$-rectangular} ($\chi \in \{ p,l\}$) if
for all measurable sets $E$ and $F$,
\begin{enumerate}
%Sam34
\item[(a)] if
$(\Pr_1,\alpha_1), (\Pr_2,\alpha_2), (\Pr_3,\alpha_3) \in \cP^+$, 
$\Pr_1( E\cap F) > 0 $, and $\Pr_2( E^c \cap F) > 0$, then
$$
%joe27*: this expression is not well defined if \Pr_1(E \cap F) = 0 or
%\Pr_2(E^ \cap F) = 0$.  
\alpha_3 {\Pr}_3( E \cap F) \alpha^{\chi}_{1,E\cap F} {\Pr}_1|(E \cap
F) + \alpha_3 {\Pr}_3(E^c\cap F) 
%joe27*: typo right?
%\alpha^{\chi}_{2, E^c \cap F} {\Pr}_2|(E^c\cap F) \in C(\cP^+)
%\text{, and} 
\alpha^{\chi}_{2, E^c \cap F} {\Pr}_2|(E^c\cap F) \in \overline{C(\cP^+ \mid^{\chi} F)}, 
%joe27
%\end{align}
$$
%Sam34 changed as per Joe's suggestion
%joe28*: I think we need strict inequality here, and we need the
%distribution to be in \cP^+|F
%\item[(b)] for all $\delta > 0$, there exists $(\Pr,\alpha) \in \cP^+$
%$\alpha( \delta \Pr(E \cap F) + \Pr(E^c \cap F))  \ge
\item[(b)] for all $\delta > 0$, if $\ucP(F) > 0$, then there
  exists $(\Pr,\alpha) \in \cP^+ |^{\chi} F$  such that
$\alpha( \delta \Pr(E \cap F) + \Pr(E^c \cap F))  >
  \sup_{(\Pr',\alpha') \in \cP^+} \alpha' \Pr'(E^c \cap F)$, and  
\item[(c)] for all nonnegative real vectors $\theta \in \R^{|S|}$, 
$$\begin{array}{ll}
&\sup_{(\Pr,\alpha) \in \cP^+|^{\chi}F} \alpha \left( \Pr(E \cap F ) \overline{E}_{\cP^+|^{\chi}(E \cap F)}(\theta) + \Pr(E^c\cap F) \overline{E}_{\cP^+|^{\chi}(E^c \cap F)}(\theta) \right) \geq \overline{E}_{\cP^+|^{\chi}F}(\theta).   
\end{array}$$
\end{enumerate}
\end{definition}
Recall that Epstein and Schneider proved that rectangularity is a
condition that guarantees no preference reversal in the case of MMEU
\cite{EpsteinSchneider2003}, and Hayashi proved a similar result for
MER \cite{Hayashi2009}. 
With MMEU and MER, only unweighted probabilities are considered.
%Sam28: actually there are differences (the most obvious being that their rectangularity is defined with respect to a partition of the states).
Definition~\ref{def:rectangular} essentially gives the generalization 
of Epstein and Schneider's condition to weighted probabilities.
%Sam34: 
Part (a) of $\chi$-rectangularity is analogous to the rectangularity condition of Epstein and Schneider. 
Part (b) of $\chi$-rectangularity corresponds to the assumption that
$(E\cap F)$ is non-null, which is analogous to Axiom~5 in Epstein and
Schneider's axiomatization.  
%joe28*: Does our characterization of part (b) look at all like the ES
%characterization of non-nullnes?  If so, we should say so at the
%appropriate point.
%Sam35: I don't think our part (b) look like the ES Axiom 5 (non nullness)
Finally, part (c) of $\chi$-rectangularity holds for MMEU when weights
are in $\{ 0,1 \}$, and thus is not necessary for Epstein and
Schneider.  
}
%joe22*: in what sense does it generalize the rectangularity
%condition?  Are you saying that it is equivalent to rectangurity as
%defined by E&S/Hayashi if \alpha_1, \alpha_2, \alpha_3 are either 0
%or 1?  This needs to be clarified.
%joe20*: you'll need to relate this to the Epstein-Schneider condition
%\begin{definition}[Richness]\label{def:richness}
%A set $\cP^+$ of weighted probability measures is \emph{rich} relative to $E$ and $F$
%if
%\begin{enumerate}
%%joe22
%%\item whenever $(\Pr_1,\alpha_1), (\Pr_2,\alpha_2) \in \cP^+$, then
%\item[(a)] if $(\Pr_1,\alpha_1), (\Pr_2,\alpha_2) \in \cP^+$, then
%there exists some $\alpha_3 \Pr_3 \in C(\cP^+)$ such that $\Pr_3 | (E
%\cap F ) = \Pr_1 |(E \cap F )$, $\Pr_3 |(E^c \cap F ) = \Pr_2 |(E^c
%%joe22
%%\cap F )$, and $\alpha_{3} \Pr_3(E \cap F ) \geq \alpha_1 \Pr_1(E \cap
%\cap F )$, $\alpha_{3} \Pr_3(E \cap F ) \geq \alpha_1 \Pr_1(E \cap
%F )$, and
%$\alpha_3 \Pr_3(E^c \cap F ) \geq \alpha_2 \Pr_2(E^c \cap F )$, and
%%joe22
%%\item for some $(\Pr,\alpha) \in \cP^+$, $\alpha \Pr(E \cap F) > 0$.
%\item[(b)] for some $(\Pr,\alpha) \in \cP^+$, $\alpha \Pr(E \cap F) > 0$.
%\end{enumerate}
%\end{definition}
%joe24*: although the previous version is on arxiv, by default, people
%get the latest version.  There's also a technical problem here (see
%below), so I just cut it.
\commentout{
In a previous version of this paper, we considered only the case where
$F = S$, and proposed the condition that if $(\alpha,\Pr) \in \cP^+$,
then $(\alpha_{\Pr,E},\Pr|E) \in \cP^+$ and
$(\alpha_{\Pr,E^c},\Pr|E^c) \in \cP^+$, 
as a definition of richness.
%joe24*: this is not true, unless we also assume downward closure
%(which we can without loss of generality, of course.
It is not hard to see that when $F=S$, this older condition implies
condition (a) of our current definition of richness. 
The natural extension of the older condition to general sets $F$ also implies condition (a) of our current definition of richness.
%Sam27
Condition (b) is a fairly mild technical assumption, so the only
significant restriction that we have added is condition (c).}
%joe24*: we could add the next sentence in the full paper.  THis is
%also true for Epstein and Schneider and they presumably didn't make a
%fuss about it, so I suspect we shouldn't either.
It is not hard to show that we can replace condition (a) above by the
requirement that $\cP^+$ is closed under conditioning, 
in the sense that if $(\Pr,\alpha) \in \cP^+$, then so
%Sam29 I'm not sure if this is what we need for richness (a). I think we need the alphas below to be 1
are $(\Pr|(E \inter F),\alpha)$ and $(\Pr|(E^c \inter F),\alpha)$.

\fullv{
%joe24*: Again, this is inappropriate; you have to be more careful in
%stating the theorem and not write things like %``if $\cP^+|E = \cP^+
%|^p E$.''; \cP^+|E is undefined.   This must be rewritten to use |^p
%and |^e approrpiately.
As the following result shows, $\chi$-rectangularity is indeed sufficient to give us
Axioms~\ref{axiom:DC-M}--\ref{axiom:sens} under prior-by-prior updating and likelihood updating.
}
\shortv{
As the following result shows, richness is indeed sufficient to give us Axioms~\ref{axiom:DC-M}--\ref{axiom:sens} under likelihood updating.
}
%joe23*: Sam, I strongly suspect that the rectangularity condition
%should be an iff, just like for Epstein and Schneider.  I don't think
%it should be hard to show.  Please check.  I'm less sure about
%richness, although that too might be iff
\shortv{
%\vspace{-2pt}
}
\begin{theorem}\label{thm:rich}
If $C(\cP^+)$ is closed and convex, then Axiom~\ref{axiom:DC-M} holds
for the family of choices $C_{M}^{\regret,\cP^+ |^{\chi} E}$ if and
only if $\cP^+$ is $\chi$-rectangular. 
\end{theorem}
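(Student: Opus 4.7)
The strategy is to reduce to the separability condition SEP via Theorem~\ref{thm:equivalence} and then characterize SEP by $\chi$-rectangularity. Observe first that any family of MWER choice functions automatically satisfies Axioms~\ref{axiom:offE}--\ref{axiom:sens}: Conditional Preference holds because weighted expected regret under $\cP^+|^{\chi}E$ only integrates over states in $E$; nonemptiness and Sen's $\alpha$ are immediate since $\argmin$ over a nonempty set of an $\R$-valued objective is nonempty and argmin is preserved under restriction of the domain. Hence for MWER choice functions, Axiom~\ref{axiom:DC-M} holds iff all of Axioms~\ref{axiom:DC-M}--\ref{axiom:sens} hold, which by Theorem~\ref{thm:equivalence} holds iff SEP holds. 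It therefore suffices to prove: when $C(\cP^+)$ is closed and convex, SEP holds for all decision problems based on $(S,\Sigma)$ iff $\cP^+$ is $\chi$-rectangular.

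For the direction ``$\chi$-rectangular $\Rightarrow$ SEP'', fix an act $f$ and menu $M$, and let $\theta(s)=\regret_M(f,s)\ge 0$. Writing weighted expected regret as $\sup_{p\in C(\cP^+|^{\chi}F)}\sum_s p(s)\theta(s)$ and splitting $p$ according to $E\cap F$ and $E^c\cap F$, part (a) of rectangularity provides, for any triple $(\Pr_1,\alpha_1),(\Pr_2,\alpha_2),(\Pr_3,\alpha_3)\in\cP^+$, a subprobability in $\overline{C(\cP^+|^{\chi}F)}$ whose values on $\theta$ realize the separable sum; taking suprema (and using closedness to move from the closure back inside the sup) yields the $\geq$ half of the first equality in SEP. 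Part (c), applied to $\theta$, gives the matching $\leq$ inequality, so equality holds. For the strict inequality (\ref{equ:sep2}), suppose $\regret_{M}^{\cP^+|^{\chi}(E\cap F)}(f)\neq 0$; pick $\delta$ so that the gain of $\delta\Pr(E\cap F)$ against $\Pr(E^c\cap F)$ in part (b) translates, after multiplication by the positive conditional regret, into a strict excess over the $E^c\cap F$-only sup.

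For the direction ``SEP $\Rightarrow$ $\chi$-rectangular'', the key reduction is that as $f$ and $M$ range over all acts in all decision problems based on $(S,\Sigma)$, the induced regret vectors $\theta$ exhaust the non-negative orthant of $\R^{|S|}$: given any $\theta\ge\vec 0$, take a two-act menu $M=\{f,g\}$ on utilities $u$ in which $g$ is a constant high act and $f$'s utility shortfall on state $s$ is exactly $\theta(s)$. SEP therefore becomes the identity, for every $\theta\ge\vec 0$,
\[
\sup_{p\in C(\cP^+|^{\chi}F)}\!\langle p,\theta\rangle \;=\; \sup_{p\in C(\cP^+)}\!\Big(p(E\cap F)\,\overline{E}_{\cP^+|^{\chi}(E\cap F)}(\theta)+p(E^c\cap F)\,\overline{E}_{\cP^+|^{\chi}(E^c\cap F)}(\theta)\Big),
\]
whose ``$\geq$'' direction is exactly condition (c). The ``$\leq$'' direction says that the support functional of $C(\cP^+|^{\chi}F)$ dominates the RHS on the non-negative orthant; since $C(\cP^+|^{\chi}F)$ is closed and convex (inherited from $C(\cP^+)$), a separation argument on the non-negative orthant shows every subprobability of the form appearing in part~(a) lies in $\overline{C(\cP^+|^{\chi}F)}$. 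Condition (b) is extracted from the strict-inequality clause of SEP: if (b) failed for some $\delta_0>0$, then choosing $f$ so that $\regret_M^{\cP^+|^{\chi}(E\cap F)}(f)$ equals a small positive multiple of $\delta_0$ produces a counterexample to (\ref{equ:sep2}).

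The main obstacle will be the separation argument in the reverse direction: support functionals of $C(\cP^+|^{\chi}F)$ on the whole of $\R^{|S|}$ are what classical duality gives, but SEP only constrains them on the non-negative orthant. Convexity of $C(\cP^+)$ together with the fact that $C(\cP^+)$ is downward closed (since $p\le c\Pr$ is built into the definition) ensures that the support functionals on $\R^{|S|}_{\ge 0}$ already determine $\overline{C(\cP^+|^{\chi}F)}$, allowing the separation to conclude membership in the closure. Keeping straight the interplay between the normalization of weights, the $(\Pr,\alpha)\mapsto\alpha\Pr$ identification, and the $\sup$ definitions of $\alpha^{\chi}_{E}$ is the most delicate bookkeeping.
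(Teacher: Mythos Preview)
Your proposal is correct and follows essentially the same route as the paper: reduce Axiom~\ref{axiom:DC-M} to SEP via Theorem~\ref{thm:equivalence}, then establish SEP $\Leftrightarrow$ $\chi$-rectangularity. The paper argues the reverse implication by contrapositive (if any of (a)--(c) fails, build a two-act menu whose regret vector $\theta$ witnesses failure of SEP), while you phrase it as a direct argument, but the content is identical: the separation step you describe for recovering (a) is exactly the paper's key lemma, and your observation that $C(\cP^+)$ is downward closed is precisely what forces the separating normal to lie in the nonnegative orthant. One imprecision: for condition~(b), what matters is not that the $(E\cap F)$-regret be ``a small positive multiple of $\delta_0$'' but that the \emph{ratio} of the $(E\cap F)$-regret to the $(E^c\cap F)$-regret equal $\delta_0$ (the paper simply takes these to be $\delta$ and $1$); without fixing both, the inequality in (b) does not align with the strict clause of SEP.
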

\fullv{
%Sam33
The proof that $\chi$-rectangularity implies Axiom~\ref{axiom:DC-M}
%joe27
%only requires $C(\cP^+)$ to be closed (i.e., convexity is not
requires only that $C(\cP^+)$ be closed (i.e., convexity is not
required). 
Hayashi \citeyear{Hayashi2011} proves an analogue of
Theorem~\ref{thm:rich} for MER using prior-by-prior updating.  He also
essentially assumes that the menu includes forgone opportunities, but his
interpretation of forgone opportunities is quite different from ours.
%joe24
%He also shows that if if forgone opportunities are not included in the
He also shows that if forgone opportunities are not included in the
menu, then the
set of probabilities representing the DM's uncertainty at all but the
initial time must be a singleton.
This implies that the DM must behave like a Bayesian at all but the
initial time, since MER acts like expected utility maximization
if the DM's uncertainty is described by a single probability measure.

Epstein and Le Breton \citeyear{Epstein1993} took this direction even further and prove that, if a few
axioms hold, then only Bayesian beliefs can be dynamically
consistent.
%joe7: reordered
While Epstein and Le Breton's result was stated in a menu-free
setting, if we use a constant menu throughout the decision problem,
then our model fits into their framework.
%joe7
%At first glance this impossibility result may seem to contradict our
At first glance, their impossibility result may seem to contradict our
sufficient conditions for no preference reversal.
%joe7
%If we use a constant menu throughout a decision problem, then we fall
%into Epstein and Le Breton's framework.
However, Epstein and Le Breton's impossibility result does not apply
%joe24
%in our case
%joe7
%This is because one of the conditions of Epstein and Le Breton's theorem,
%joe11
%because one of their assumptions,
because one of their axioms,
%joe24
%$P4^c$, does not hold in our case.
$P4^c$, does not hold for MER (or MWER).
\shortv{ %Sam27
We discuss this in more detail in the full paper.
}
\fullv{
For ease of exposition, we give $P4^c$ for static decision problems.
Given acts $f$ and $g$ and a set $T$ of states, let $fTg$ be the act
that agrees with $f$ on $T$ and agrees with $g$ on $T^c$.  Given an
outcome $x$, let $x^*$ be the constant act that gives outcome $x$ at all states.
\begin{axiom}[Conditional weak comparative probability]\label{axiom:cwcp}
%joe16: this is bad notation.  We should use x^* and x to both denote
%outcomes.  We also need to define x_Tg.  Finally, to be consistent
%with the notation fIg, we shouldn't write T in the subscript (or we
%should write I in the subscript).
%Sam19: changed notation
%joe17
%As usual, for all event $T$ and acts $f,g$, we let $fTg$ denote the
%act that agrees
%with $f$ on states in $T$, and agrees with $g$ on $T^c$.
%joe17: your notation here is oicnosistent with the example.  You
%should introduce this notation before you define the axiom, and then
%use it consistently.
For all events $T,A,B$, with $A \cup B \subseteq T$, outcomes $w, x,
y$, and $z$, and acts $g$,
%joe17: unnecessary
%let $wA xT g$ denote an
%act that maps states in $A$ to $w$, states in $T\backslash A$ to
%$x$, and agrees with $g$ on $T^c$.
%If
%%$$ wTg \succ xT g \text{ and } zT g \succ o_1^* T g,$$ then
if $ w^*Tg \succ x^*T g,$  $z^*T g \succ y^*T g$,  and
%joe17
%wA xT g \succeq wB xT g \Rightarrow zA yT g \succeq zB yT g.$$
$(w^*A x^*)T g \succeq (w^*B x^*)T g$, then $(z^*A y^*)T g \succeq
(z^*B y^*)T g$.
\end{axiom}

%joe7
%\noindent $P4^c$ implies Savage's $P4$, and hence does not hold for
%minimax regret, MER, and MWER in general.
$P4^c$ implies Savage's $P4$, and does not hold for
MER and MWER in general.
%joe7
%
%A simple counterexample for MER and MWER consists of
%$S=\{s_1,s_2,s_3\}$ where $A=\{s_1\},B=\{s_2\}$.
For a simple counterexample, let
%Sam14: added T=S
$S=\{s_1,s_2,s_3\}$,
%Joe16: added. Can we simplify this space?  This seems like overkill.
$X = \{o_1, o_5, o_7,  o_{10}, o_{20}, o_{23}\}$,
$A=\{s_1\}$, $B=\{s_2\}$, $T=A\cup B$,
%joe16
%$u(x^*) = 20$, $u(y^*) = 10$, $u(x) = 7$, $u(y) = 4$, and
$u(o_k) = k$,
%joe16: g is an act; we have to define the act.
%$u(g) = \{(s_1,22),(s_2,23),(s_3,5)\}$.
$g$ is the act such that $g(s_1) = o_{20}$, $g(s_2) = o_{23}$, and
$g(s_3) = o_5$.
%joe7: you've just said it consists of p_1 and p_2, and then you say
%it also includes p_3 and p_4
%Let the decision maker's belief consist of $p_1$, where $p_1(s_1) =
%0.25$ and $p_1(s_2)=0.75$; and $p_2$, where $p_2(s_3) = 1$.
%To make the set of probability measures rich, we also include $p_3$,
%where $p_3(s_1)=0.25$ and $p_3(s_3)=0.75$, and $p_4$, where $p_4(s_1)
%= 1/16$, $p_4(s_2)=3/16$, and $p_4(s_3) = 1/4$.
%joe23*: why are you using C insted of \P?  :
%Sam28 I'd wanted to distinguish $\cP (a set of normal probabilities)
%with $\cP^+ ( a set of weighted probabilities) and C (a set of
%subprobabilities) 
%joe24: first, you never used C for a set of subprobabilities (the two
%earlier occrrences should have been C(\cP^+)).  Second, we must as
%well work with unweighted probabilities and MER..
Let $\cP = \{p_1, p_2, p_3\}$, where
\begin{itemize}
\item $p_1(s_1) = 0.25$ and $p_1(s_2)=0.75$;
\item $p_2(s_3) = 1$;
%joe24
%\item $p_3(s_1)=0.25$ and $p_3(s_3)=0.75$;
\item $p_3(s_1)=0.25$ and $p_3(s_3)=0.75$.
%\item $p_4(s_1) = 1/16$, $p_4(s_2)=3/16$, and $p_4(s_3) = 1/4$.
\end{itemize}
%joe16: added
%joe24
%Thus, all probability distributions in $C$ have weight 1.
Let the menu $M = \{o_1^*, o_7^*, o_{10}^*, o_{20}^*,g\}$.
%Let the menu $M = \{w^*, x^*, y^*, z^*,g\}$.
%joe11*: what's T?  I also don't know what ``worst regret'' means.  I
%wrote what I thought you meant.
%The worst regret of $x^*_T g$ is $15$ with respect to $p_2$, and the
%worst regret of $x_T g$ is $15.75$ with respect to $p_1$, therefore
%joe16*: since you've set T=S, that means that wT g = w, so it's
%regret is certainly not 15.  I'm not sure what you wanted here, but I
%am sure that this example needs to be rewritten.  Hopefully it can
%also be simplified.  Note that you need to say explicitly that
%\succeq is detemrined by MWER.  I'll leave this to you.
%Sam19
%joe24
%Let $\succeq$ be the preference relation determined by MWER.
Let $\succeq$ be the preference relation determined by MER.
%joe17
%The regret of $wTg$ is $15$ (this is the regret with respect to
The regret of $o_{10}^*Tg$ is $15$ (this is the regret with respect to
$p_2$), and the
%joe11
%worst regret of $xT g$ is $15.75$ with respect to $p_1$, therefore
%joe17
%regret of $xTg$ is $15.25$ (this is the regret with respect to
regret of $o_7^*Tg$ is $15.25$ (the regret with respect to
$p_1$), therefore
%joe17
%$wT g \succ xT g$.
$o_{10}^*T g \succ o_7^*T g$.
%joe11
%Similarly, the worst regret of $zT g$ is $15$ with respect to
%$p_2$, and the worst regret of $yT g$ is $18.75$ with respect to
%$p_1$, therefore $zT g \succ yT g$.
%joe17
%It is also easy to see that the regret of $zTg$ is $15$ (with respect
%to $p_2$),  and the regret of $yT g$ is $21.25$ (with respect to $p_1$),
%so $zT g \succ yT g$.
%Moreover, the regret of $wA xT g$ is $15$ (with respect to $p_2$),
% and the regret of $wB xT g$ is $15$ (with respect to $p_2$),
%therefore $wA xT g \succeq wB xT g$.
%However, the regret of $zA yT g$ is $16.5$ (with respect to $p_1$),
%joe17
% and the worst regret of $zB yT g$ is $16$ (with respect to $p_3$),
%therefore $zA yT g \not\succeq zB yT g$. The regret of $wTg$ is $15$
%(this is the regret with respect to
It is also easy to see that the regret of $o_{20}^*Tg$ is $15$ (the
regret with respect to $p_2$),  and the regret of $o_1^*T g$ is $21.25$
(the regret with respect to $p_1$), so $o_{20}^*T g \succ o_1^*T g$.
Moreover, the regret of $(o_{10}^* A o_7^*) T g$ is $15$ (the regret
with respect to $p_2$),
 and the regret of $(o_{10}^* B o_1^*)T g$ is $15$ (the regret with
 respect to $p_2$), so $(o_{10}^*A o_7^*)T g \succeq (o_{10}^*B o_1^*)T g$.
However, the regret of $(o_{20}^*A o_1^*)T g$ is $16.5$ (the regret with
respect to $p_1$),
 and the regret of $(o_{20}^*B o_1^*)T g$ is $16$ (the regret with with
 respect to $p_3$),  therefore $(o_{20}^*A o_1^*)T g \not\succeq
 (o_{20}^*B o_1^*)T g$.
%joe17: added
Thus, Axiom~\ref{axiom:cwcp} does not hold (taking $y=o_1,x=o_7,w=o_{10},z=o_{20}$).
}

%Sam Dec 27, 2014
%joe7
%In \citeyear{Siniscalchi2011}, Siniscalchi shows (Proposition 1) that
%joe17
%Siniscalchi \citeyear[Proposition 1]{Siniscalchi2011} also proves
%that ``conditional preference systems'' that are dynamically consistent
Siniscalchi \citeyear[Proposition 1]{Siniscalchi2011} proves that his
notion of dynamically consistent conditional preference systems
must essentially have beliefs that are updated by Baysian updating.
%joe7
%However, Siniscalchi's Proposition 1 does not apply in our case,
%because Proposition 1 assumes consequentialism -- that the conditional
However, his result does not apply in our case either,
because it assumes consequentialism: that the conditional
preference system treats identical subtrees equally, independent of
the greater decision tree within which the subtrees belong.
%joe7
This does not happen if, for example, we take forgone opportunities
into account.

%joe7*: I moved the example here, but my temptation would be to cut
%it.  I'm not even sure what the point of the example is.  It's
%trivial to construct examples where a set of probabilities is used
%(e.g., by MMEU and MER) where you get a preference order that's
%incompatible with MEU.  This follows immediately from the fact that
%these decision rules have different axiomatizations than MEU.  I'm
%not sure that you're really showing anything else.  I cut this for
%now, although we can reinstate if there's a better story.
%This doesn't sound right.
%We show that richness does not force preferences to become
%expected-utility maximization.
%We demonstrate using an example that we can have a set of beliefs
%satisfying the richness criterion, and for which the minimax weighted
%expected regret preferences are incompatible with maximizing expected
%utility.
\commentout{
We show by example that if a DM's beliefs satisfy the richness
condition, then he may not be an expected-utility maximizer.

\begin{example}
It is not difficult to verify that the set of beliefs depicted in
Table~\ref{tab:ambiguityaverserich} is rich.
Consider the choices shown in Table~\ref{tab:ambiguityaverserich}.
%joe7: MWER doesn't choose; DMs choose
%MWER with respect to beliefs $\{{p}_1, {p}_2\}$ chooses $A$ from
A DM with beliefs  $\{{p}_1, {p}_2\}$ using the decision rule MWER
would chooses $A$ from
$\{A,B\}$, $A'$ from $\{A',B'\}$, and $C$ from $\{C,D\}$.
%joe7
%However there is no probability distribution over $\{s_1,s_2,s_3\}$
%that can result in these preferences.
It is not hard verify that no probability distribution over $\{s_1,s_2,s_3\}$
that can result in these preferences for a decision maker using SEU.
\begin{table}[h]
	\centering
		\begin{tabular}{|c|c|c|c|}\hline
			& $s_1$ & $s_2$ & $s_3$ \\ \hline
		${p}_1$ & $\frac{2}{3}$ & 0 & $\frac{1}{3}$ \\
		${p}_2$ & 0 & $\frac{2}{3}$ & $\frac{1}{3}$ \\ \hline
		$A$ & 0 & 0 & $\epsilon$  \\
		$B$ & 1 & 0 & 0  \\ \hline
		$A'$ & 0 & 0 & $\epsilon$  \\
		$B'$ & 0 & 1 & 0  \\ \hline
		$C$ & 1 & 1 & 0  \\
		$D$ & 0 & 0 & 1  \\ \hline
		\end{tabular}
	\caption{Example showing that richness does not force preferences to be expected-utility maximizing. }
	\label{tab:ambiguityaverserich}
\end{table}
%joe16
\hfill \wbox
\end{example}
}
%joe7:

%joe23*: this section should be cut unless we have something more
%interesting to say, and replaced by a paragraph in the conclusion.
%Sam28 cut as suggested above
\fullv{
%Sam26
\commentout{
\section{Consistent planning and unachievable plans} \label{sec:cp}
We assume that at each history, the decision maker's choice is defined
%joe16
%by the choice function $C_{M,E}(M')$, which can be minimax regret,
%MER, or even some arbitrary choice function.
by a choice function $C_{M,E}$.
%Sam19 added
Our discussion applies to any menu-dependent decision rule, including
minimax regret, MER, and MWER.
%joe16
%Our definition of menu-dependent consistent planning is based on the
%definition of menu-independent consistent planning by Siniscalchi
%\citeyear{Siniscalchi2011}, which is in turn based on definitions of
%Strotz \citeyear{Strotz1955} and Gul and Pesendorfer
%\citeyear{GP2005}.
%We let $CP$ denote the \emph{consistent planning solution}, where
%$CP_{T}(I)$ is the set of plans that the decision maker is willing to
%take in decision tree $T$ at history $h$.
%Because a regret-minimizing decision maker requires a menu in order to
%make a choice, unlike the definitions of Siniscalchi
%\citeyear{Siniscalchi2011} and others \citeyear{GP2005,Strotz1955},
%our menu-dependent version of consistent planning also depends on
%joe7*: you have to relate the \mu here to the \mu we used earlier.  I
%don't understand at all your statement ``the latter is required in
%order to compute the set of unachievable plans at history $h$.''  We
%need *much* more intuition here.
%Sam9
%joe11
%a function $\mu$ which maps each history to a menu to be used
To determine $M$, our version of consistent planning also depends on
a function $\mu$ that maps each history to a menu to be used
in that history.
%Sam19: motivate the \mu function
Recall that in Section~\ref{sec:FO}, to determine the menu component
$M$ of the choice function
$C_{M,E}$ used at a history $h$, we used a \emph{menu-selection function} $\mu$.
The menu $\mu(h)$ is the menu relative to which choice are computed at $h$.
Here, we still require a menu-selection function.
However, unlike in the earlier sections, the menu-selection function
$\mu$ here requires more input.
This is because a sophisticated DM can potentially use a menu that
depends not only on what has been done and what is known about
%joe17
%nature's state, but also depends on what will be done in future steps.
nature's state, but also depends on what will be done in the future.
As a result, in addition to the history $h$, here
%joe16*: Sam, you need *much* more intuition here.  This comes out of the
%blue, and is not at all motivated.
%In addition to the history $h$,
%joe17: ``successor'' seems more reasonable when talking about
%histories than child (which makes sense if you think of h as a node
%in the tree).
%$\mu$ also takes as input a set of plans for each child of $h$.
$\mu$ also takes as input a set of plans for each successor of $h$.
%The latter is required in order to compute the set of unachievable plans at history $h$.
Intuitively, this set represents the consistent planning solutions at
%joe11
%these subsequent histories.
%As we will see later, we may want $\mu(h)$ to exclude plans that are not
these subsequent histories;
we may want $\mu(h)$ to exclude plans that are not
part of the consistent planning solutions at subsequent histories.
%joe17*: It might be nice to give a concrete example here.
We sometimes refer to $\mu$ as a \emph{menu attitude}.
%Sam20:
%joe18
%For example, one menu attitude $\mu$ is one that always return the
For example, $\mu$ could be a menu attitude that always return the
initial menu: i.e., $\mu(h) = \mu(M_{\< s \>})$ for all histories $h$.
%Sam22
%joe20
%We emphasize that a menu attitude is considered a fixed attribute of
We emphasize that a menu attitude is viewed as a fixed attribute of
the DM that cannot be changed; but a sophisticated DM can plan around
her own menu attitude.

%Sam July 27
%From the perspective at history $h$, two plans that only differ at the ancestors and nondescendents of $h$ are essentially the same.
%Therefore, we will consider such plans to be equivalent with respect to history $h$.
%Sam19:
We now describe how the DM computes a consistent planning solution for
a dynamic decision problem.
%joe17
%Recall that a dynamic decision problem is an extensive form game $T$.
Recall that a dynamic decision problem is an extensive-form game $T$.
The consistent planning solution, $CP_T$, is computed by backward induction.
At the \emph{terminal histories}, or leaves in the decision tree,
there are no choices to be made.
If we move up one level to a non-terminal history $h$, then the DM can
choose from a set $A(h)$ of \emph{actions} (recall that a plan
%joe17: Sam, this is not how we defined plans before.  WE should
%introduce A(h) here for the first time.  If you want to use this
%definition (which is fine) you should use the same definition when
%you first define plans.  You're also using the notation $T$ for
%dynamic decision problems, and thinking of T as a tree.  Again, it's
%fine to do that, but you should be consistent throughout the paper.
%Sam20:
%determines an action for every history).
maps a history to a successor history, which implicitly determines the
action taken).
%joe17
%The (set of) consistent planning solution (i.e., optimal plans) at
%history $h$ is $CP_T(h)$.
%We then move up level-by-level until the first information set (i.e.,
%the ``root'' of the tree) is reached.
%Therefore, $CP_T(\< s\>)$ is the (set of) consistent planning solutions for $T$
Let $CP_T(h)$ denote the (set of) consistent planning solutions (i.e.,
optimal plans) at history $h$.
$CP_T(\< s\>)$ is the (set of) consistent planning solutions for $T$.
\begin{definition}[Menu-Dependent Consistent Planning]
Given a dynamic decision problem $T$ and a menu attitude $\mu$, we define
%joe16*: what does CP_T return?  What's the intuition?  This is coming
%*much* too quickly
$CP_{T}$ inductively.
%joe16: we should define terminal history earlier.
%For all histories $h$ for which there are no subsequent histories,
For all terminal histories $h$,
define
%joe16
%\begin{align*}
%CP_{T}(h) = C_{M, E(h) }(M_h), \\
$$CP_{T}(\mu,h) = C_{M, E(h) }(M_h),
%joe7*: it seems here that the menu attitude includes a menu as its
%second argument, not a plan.
\mbox{ where  $M = \mu(h, M_h)$.}
$$
\noindent Inductively, for all histories $h=\langle s,a_1,\ldots,a_k
\rangle$ for which $CP_{T}(\langle s,a_1,\ldots,a_{k+1} \rangle)$ is
%joe16*: we've never talked about A(h) before.  I understand what you
%mean, but it shouldn't be introduced here for the first time.
defined for all $a_{k+1} \in A(h)$, we define
%\begin{align*}
%joe16*: Sam, you *must* give more intuition here.  Why is this the
%``right'' menu (whatever right means).  What happens if we take a
%different menu?  This comes out of the blue.  I certainly have no
%intuition why this should be the right choice.  To what extent do our
%results depend on this being the menu?
%CP_{T}(h) = C_{M, E(h) }(P(h)), \\
$$CP_{T}(h) = C_{M, E(h) }(M_h \cap M), %Sam20 added \cap M
\mbox{ where  $M = \mu(h, \{CP_{T}(\langle s,a_1,\ldots,a_{k+1}
%joe16: is this what you meant?  If not, I'm confused
%\rangle)\}_{a_{k+1} \in A(h)} ).
\rangle): a_{k+1} \in A(h)\})$}.
$$
%\end{align*}
\end{definition}
%Sam July 27
%joe16
%\noindent Note that in the inductive case, $\langle s,a_1,\ldots,a_k
\noindent Note that in the inductive step, $\langle s,a_1,\ldots,a_k
\rangle$ is always an history.
%The definition of consistent planning begins at the terminal nodes, where there is no choice but to accept the outcomes.
%joe16*: you need to slow down and explain why what you're doing is
%the right thing to do!
%Sam19:
The menu, $M = \mu(h, \{CP_{T}(\langle
s,a_1,\ldots,a_{k+1}\rangle): a_{k+1} \in A(h)\})$, depends on the
already-computed consistent planning solutions $CP_T$ at the child
nodes.
The specific menu that will be used depends on the menu attitude
function $\mu$.
In the next section, we examine several candidates for $\mu$.
}
%end commentout

%Sam26
\commentout{
In general, there are two types of plans that the DM may or may not take into consideration when computing regret:
\begin{itemize}
\item plans that have been rendered infeasible by previous steps: \emph{forgone opportunities}.
\item plans that are not forgone but nonetheless cannot be committed to: \emph{unachievable plans}.
\end{itemize}
%joe16
%These notions are defined formally in Section~\ref{sec:behavioral}.
We define unachievable plans more formally below.
By definition, the set of forgone opportunities is disjoint from the set of unachievable plans.
Moreover, once the set of forgone opportunities and the set of unachievable plans are removed, all remaining plans are feasible plans that can actually be carried out.
Therefore, the largest reasonable set of plans that can be ignored is the union of the forgone opportunities and the unachievable plans.

Although it is possible to consider arbitrary subsets of all plans in
the mother decision tree, some natural classes of menus are
%joe16
%particularly interesting -- those generated by systematic elimination of
particularly interesting, namely, those generated by a systematically
eliminating
an entire class of
%Sam14
plans.
%infeasible plans.
Thus, we examine four approaches to choosing menus, depending on how we deal with forgone opportunities and unachievable plans, and we characterize the implications of using each of these approaches, under the assumption of sophistication.

%joe16
%We consider four {menu attitudes}, that is, consistent use of menus
%that either include or exclude forgone opportunities, and either
%include or exclude unachievable plans.
We consider four {menu attitudes}, determined by whether the menu
includes or excludes forgone opportunities, and whether it
includes or excludes unachievable plans.
%joe16: I would replace \not FO by \overline{\mbox{FO}}, and similarly
%for UP.
We denote these four {menu attitudes} by $(FO,UP)$, $(\overline{\mbox{FO}},UP)$,
$(FO,\overline{\mbox{UP}})$, and $(\overline{\mbox{FO}},\overline{\mbox{UP}})$, respectively.
These menu attitudes yield possibly different menus at each information node.
We show that each menu attitude results in a different behavioral pattern.

The climate change example depicted in Figure~\ref{fig:climateChangeExample} illustrates how the four different menu attitudes result in four different plans being chosen for the same decision problem.
Figure~\ref{fig:climateChangeExample} is a compact representation of the decision problem.
There are four actions that nature can choose from: ``low sensitivity and technology is not ready'', ``high sensitivity and technology is not ready'', ``low sensitivity and technology is ready'', and ``high sensitivity and technology is ready''.
In this case, the true state does not affect the available actions in
%joe16
%each stage, so we draw only one of the four subtrees, listing out all
each stage, so we draw only one of the four subtrees, listing all
four outcomes for each terminal node.
We do not include nature's action.

%checked example correctness on May5 2014
%joe16*: The payoff boxes need to be fixed.  If you're going to use
%boxes, the numbers should be centered, rather than being squeezed
%into the lower left corner.  But why not make this consistent with
%the approach used in Figure 1, where the payoffs were enclosed in
%square brackets [ ...].
\begin{figure}[h]
	\centering
		\includegraphics{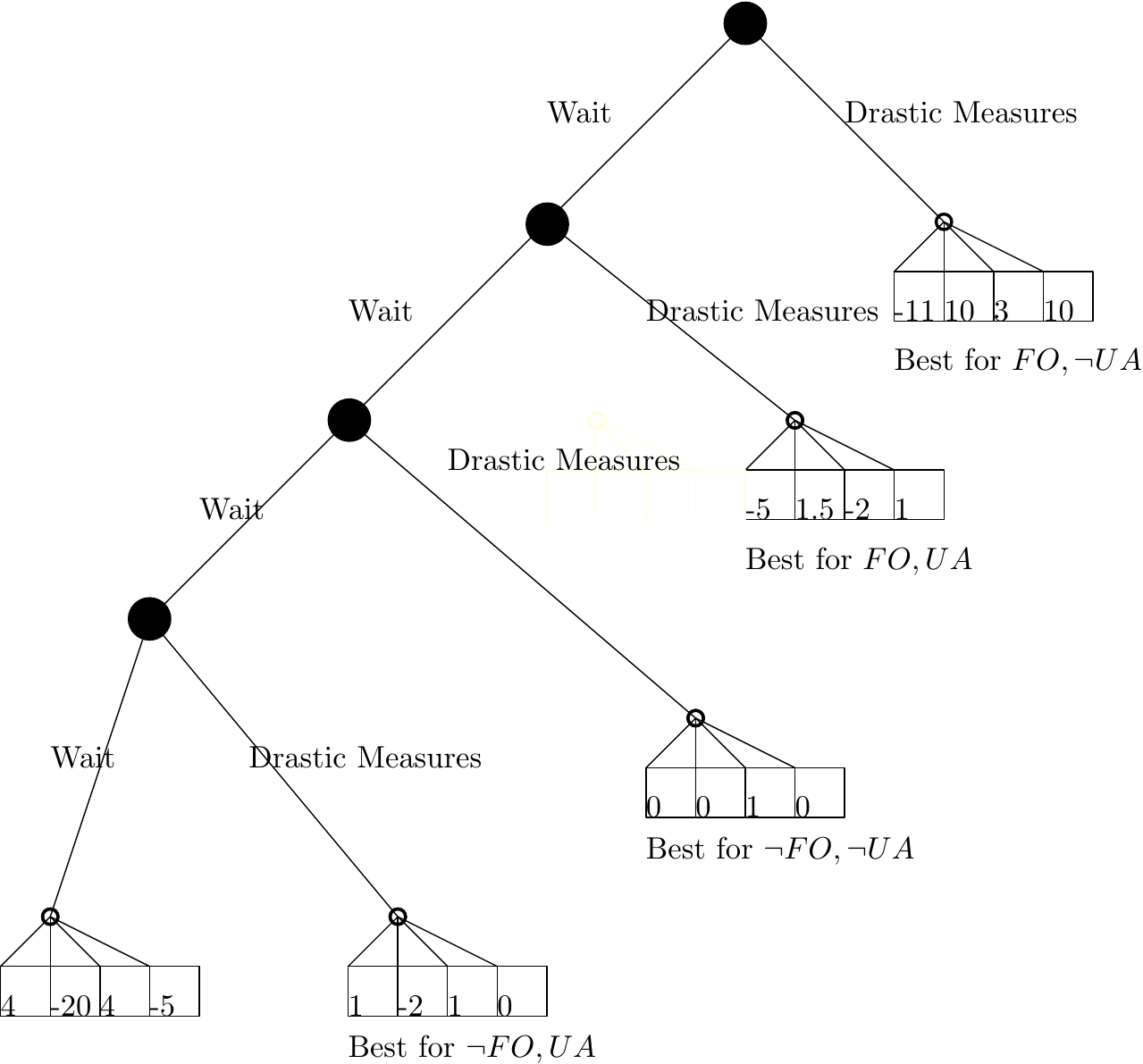}
	\caption{Climate change example illustrating four different
          choices depending on which menu is used for backward
          induction. The four possible states, from left to right, are
%joe16
%          $ls \lnot r$, $hs \lnot r$, $ls r$, and $hs r$,
%          respectively. The four different extreme menu attitudes
%          result in four very different choices.}
          $ls\overline{r}$, $hs\overline{r}$, $ls r$, and $hs r$,
          respectively. The four different menu attitudes
          result in four different choices.}
	\label{fig:climateChangeExample}
\end{figure}

%Sam Aug 11
%joe16
%The payoffs in the climate change example depends on whether the
The payoffs in the climate change example depend on whether the
technology for eco-friendly alternatives is ready or not, the amount
of time that the governments wait before taking action, as well as the
intrinsic sensitivity of the Earth.
A high-sensitivity Earth punishes waiting because the impact of climate change will be severe.
On the other hand, if the technology is not yet ready, the costs of implementing drastic measures will be high.
In this example, ignoring forgone opportunities makes one content to wait longer than if forgone opportunities are included in regret computations.
This is expected, since waiting reduces the number of alternatives that one can feel regret about.
} %end commentout
%Which menu attitude is appropriate might depend on the application.
%For example, consider the problem of ordering a main course,
%consuming it, and then ordering dessert \citeyear{Machina1989}.
%Using the  $(\lnot FO,\lnot UP)$ menu attitude means that when you are choosing a dessert, what you chose for the main course does not affect your preferences.
%This implies that your preferences are separable across time: your preference over dessert options are independent of what you did and did not eat for the main course.
%For many scenarios, this may very well be the case.
%However, this is not the case when, for example, you would like to try each type of cooking technique offered by a restaurant (e.g. deep-fried, steamed, baked), in which case having a baked appetizer and a steamed main course would leave you preferring deep-fried ice cream over a baked pie.
%In this case, you would be using the big menu when computing ex-post regret -- your preference over the available desserts {do depend } on what you did and did not eat in earlier courses.

%Sam26
\commentout{
%joe16
%\section{Implications of the menu attitudes}
%\section{Behaviorial Analysis of the Menu Attitudes}
%\label{sec:behavioral}
%joe16
%In this section, we explore the implications of the different menu
%attitudes, that is, we try to answer the question ``what does the menu
We now explore in more detail the implications of the different menu
attitudes; that is, we try to answer the question ``what does the menu
attitude tell us about a decision maker?''.

%%Sam Aug12,2014
%For our purposes, we view a decision problem as a collection of plans.
%Since consistent planning is the process of selecting a plan (or a set of equally-good plans) given a decision tree, it is natural to capture consistent planning behavior using choice functions, as opposed to preference relations.
%The assumption that we can ask a decision maker to tell us the set of most-preferred plans given a decision tree is weaker than the alternative assumption that we can ask a decision maker to rank all available plans given a decision tree.
%In particular, as Stoye \citeyear{Stoye07} observed, one may in principle
%never be able to observe entire menu-dependent rankings.
%For example, if $a,b,c$ are plans in decision tree $T$, how does one observed the preferences $a \succ_T b \succ_T c$?
%Clearly $a$ would be chosen by the choice function, but $b \succ_T c$ cannot be observed.
%Finally, all the properties and axioms that we discuss can be translated into properties about tree-indexed preference relations, so nothing is lost by modeling preferences using choice functions indexed by menus (e.g., $C_M(M')$).

To ensure that the axioms of interest are nontrivial, we need the underlying preferences over plans to be \emph{nontrivially menu-dependent}.
That is, there should be a menu where the addition of an unchosen plan changes the choice.
\begin{definition}
A family of menu-dependent preferences $C$ is \emph{nontrivially
%joe19*: menus consist of plans, not acts.  Also, in a dynamic
%decision problem, you have M and an event E in the subscript.  What
%happened to E?  Finally, did you mean to add the new act to the
%menu in the argument or the menu in the susbcript.
%Sam21
  menu-dependent} if there exists a decision problem with distinct plans $f_1,f_2,f_3$, and a
menu $M$ containing $f_1,f_2$, such that $C_{M \cup \{f_3\}}(M) = C_{M \cup \{f_3\}}(M\cup \{f_3\}) = f_1$
and $C_{M}(M) = f_2$.
\end{definition}

%joe16*: I don't think this is easy to see. Actually, I don't even
%think it's true.  If you think it's true, prove it!
It is easy to see that when there is a nonconstant utility function
%Sam19 added condition
 and at least two states,
regret-minimization satisfies nontrivial menu dependency.
%Sam19
%joe19*: Sam, there are many, many problems here.  First, a nimor issue:
%what you've written is inconsistent with our notation.  For example,
%utiities aren't pairs; they're functions from outcomes to
%numbers. We've been using f and g for acts, not a.  Also, we have not
%assumed that outcomes are lotteries (or convex).  Next, menus consist
%of plans, not acts, so you have to argue that there are plans such
%that the corresponding acts have the right properties.  Finally, our
%example does not match the definition.  In the definition, the menu
%that you're choosing from is fixed.  They menu that you're using to
%compute regret varies.  In your example, it seems to be the other way
%around.  I started rewriting the example to al least make the
%notation correct, but I'm not at all sure what you ultimately want to
%say. In any case, this surely isn't easy to see without further assumptions.
%For example, suppose $u_2 > u_1$.
%Let $u(a_3)=(u_1,u_2)$ in states $s_1,s_2$ respectively, let $u(a_1) =
%(u_1+\frac{u_2-u_1}{3},u_1+\frac{u_2-u_1}{3})$, and let $u(a_2) =
%(u_2,u_1)$.
%Sam21 fixed example
Suppose there are three outcomes $o_1$, $\ldots, o_5$, with $u(o_j) =
u_j$, and $u_j > u_{j-1}$, and $o_5 - o_3 > o_3 - o_2 > o_4 - o_3$, and at least two states, $s_1$ and $s_2$.
Consider three acts $f_1$, $f_2$, and $f_3$.  Suppose that for all states
$s \ne s_1, s_2$, $f_j(s) = o_1$, for $j=1,\ldots 5$.  In addition:
\begin{itemize}
\item $f_1(s_1) = o_2$ and $f_1(s_2) = o_4$;
\item $f_2(s_1) = o_3$ and $f_2(s_2) = o_3$;
%joe19: As I suggested above, without further assumptions, there may
%not be such an f_3.
\item $f_3(s_1) = o_1$ and $f_3(s_2) = o_5$.
\end{itemize}
The minimax regret decision rule would choose $f_1$ from the menu
$\{f_1,f_2,f_3\}$ since its maximum regret is $o_3 - o_2$ while $f_2$ has a maximum regret of $o_5 - o_3$.
However it chooses $f_2$ from the menu $\{f_1,f_2\}$ since $f_1$ has a maximum regret of $o_3 - o_2$ while $f_2$ has a maximum regret of only $o_4 - o_3$.
%Sam Aug 12, 2014
%For a regret-minimizing decision maker, there can be a plan that has very high regret, but also has very high utility in some state(s), thus affecting the comparison of the other plans in the subtree.

%Sam Sep 1, 2014 removed word "ignoring"
%joe16: we've already defined forgone opportunities.  Why are we
%redefining them?
%\subsection{Forgone opportunities}\label{sec:ifo}
%Forgone opportunities are plans that used to be feasible, but can no
%longer be carried out from the current history.
%Given a plan $p$, let $p/I$ denote the plan $p$ restricted to the
%history $h$ and the descendants of $h$.
%\begin{definition}[Forgone Opportunity]
%A plan $p$ is a \emph{forgone opportunity} at history $h$ if $p \in
%M_{\langle \rangle}$ and $p \notin M_h$.
%\end{definition}

%joe16
%If forgone opportunities are never included in the computation of
If forgone opportunities are not included in the computation of
regret, then it is easy to see that the choices at a particular point
in the decision problem will be independent of what the original
decision tree was.
This can be captured by the following axiom.
%Sam Aug 14, 2014
%joe16*: there is no h' in sight!
%Let $T \sim_h T'$ if $T$ and $T'$, when restricted to $h$ and $h'$ and
%their descendents, respectively, are identical up to relabeling.
%Sam19 I think I do want T' to have h' instead of h
%Let $T \sim_h T'$ if $T$ and $T'$, when restricted to $h$ and
%its descendents, respectively, are identical up to relabeling.
%Sam20
For two dynamic decision problems $T$ and $T'$,
let $T \sim_{h,h'} T'$ if $T$ restricted to $h$ and its descendents
is identical up to relabeling to $T'$ restricted to $h'$ and its descendents.
%Sam19
For consistent planning solutions $CP_T(h)$ and $CP_{T'}(h')$, let
$CP_T(h)\sim CP_{T'}(h')$ if the two sets of plans, restricted to $h$
and $h'$, respectively,
are identical up to relabeling.
\begin{axiom}[Context Independence]
%Sam20
%For all decision trees $T$ and $T'$, if $T \sim_{h,h'} T'$ then $CP_T(h)
For all dynamic decision problems $T$ and $T'$, if $T \sim_{h,h'} T'$
then $CP_T(h)
%joe16*: what is the \sim on the next line?
\sim CP_{T'}(h')$.
\end{axiom}
} %end commentout

There may be reasons to exclude forgone opportunities from the menu.
\emph{Consequentialism}, according to Machina
\citeyear{Machina1989}, is `snipping' the decision tree at the current
choice node, throwing the rest of the tree away, and calculating
preferences at the current choice node by applying the original
preference ordering to alternative possible continuations of the tree.
With this interpretation, consequentialism implies that forgone opportunities should be removed from the menu.

Similarly, there many be reasons to exclude unachievable plans from the menu.
%Sam Oct 16, 2014
Preferences computed with unachievable plans removed from the menu would be independent of these unachievable plans.
This quality might make the preferences suitable for iterated elimination of suboptimal plans as a way of finding the optimal plan.
In certain settings, it may be difficult to rank plans or find the most preferred plan among a large menu.
For instance, consider the problem of deciding on a career path.
In these settings, it may be relatively easy to identify bad plans, the elimination of which simplifies the problem.
Conversely, computational benefits may motivate a decision maker to ignore unachievable plans.
That is, a decision maker may choose to ignore unachievable plans because doing so simplifies the search for the preferred solution.

%Sam26
\commentout{
\begin{proposition}\label{prop:contextindep}
A family $CP$ of choice functions resulting from consistent planning using a nontrivially menu-dependent set of preferences on plans satisfies context independence if and only if the menu attitude excludes forgone opportunities.
\end{proposition}

%joe16
%\subsection{Unachievable plans}
%
%In this section, we show a simple property that distinguishes
%consistent planning behavior by their attitude toward unachievable
%plans, when the underlying preferences over plans show {nontrivial
%menu dependence}.

%joe16
We now characterize the impact of including unachievable plans in the
menu.
Recall that unachievable plans are plans that are {not forgone
  opportunities} but cannot be carried out because they are not part
of the consistent planning solution. More precisely:
\begin{definition}[Unachievable Plan]
%joe16*: we've been using f and g for plans.  We shouldn't change
%notation now.
%Sam19
At a terminal history $h$, there are no unachievable plans.
Inductively, for all non-terminal histories $h$, a plan $f$ is unachievable at history $h$
if $f \in M_h$ and $f \notin CP_{T}(h)$.
%joe16*: what's p(h)?
\end{definition}
%joe16*: I didn't really read the rest of this section, since I don't
%have a good intuition for what's going on.  I made a few minor
%changes, but I'll read it more carefully once the intuition is added.

The property we discuss is an analog of the Independence of Irrelevant
Alternatives (IIA) axiom from choice theory.
The intuition is that if a consistent planning process ignores plans that cannot be carried out, then adding such plans to a decision tree will not affect the consistent planning solution.
%Sam22
Note that a plan $f$, along with a state $s$, determines a complete
history in the decision problem.
For any plan $g$ that is distinct from plan $f$, there must exist at
%joe20*: ``penterminal'' is is a new word which, if needed, should be
%defined separately.  But, more importantly, the next line is false.
%There is no reason to believe that two different plans will differ at
%states that are one step away from the end.  Suppose that states
%encode the time.  We can have two plans that differ in their move at
%time 1, but agree at all later times.  Of course, they will generate
%different histories.
least one state $s$ and a penterminal (i.e., its sucessor is terminal)
history $h$ where $g(h) \neq f(h)$.
TODO

For any decision problem $T$, history $h$, and a plan $f$ that is unachievable at $h$,
let $T \backslash_h \{f\}$ denote the decision problem that is the same as $T$, except that at history $h$ and any prefix of $h$, the set of feasible plans is $M_h
\backslash \{f\}$, instead of $M_h$.
%joe17*: I don't understand how you can remove a plan from a decision
%tree.   A plan is a function.  YOu need to define this formally
%Sam20
Thus, $f$ does not enter into any regret computations or consistent
planning computations at any prefix of $h$.
\begin{axiom}[Independence of Unachievable Plans (IUP)]
%Sam20
%For all $T\in \mathscr{T}$, for all $s\in S$, if $f \notin CP_{T}(\langle s \rangle)$ then
For all dynamic decision problems $T$, for all $s\in S$, if $f$ is unachievable at $h$ then
$CP_{T}(\langle s \rangle) = CP_{T \backslash_h \{f\}}(\langle s \rangle)$.
\end{axiom}
} %end commentout

%Sam Aug 17, 2014
%Sam Sep 1, 2014
%joe17
%\subsubsection{Commitment and procrastination}\label{sec:candp}
%joe16*: this is fine, except that we motivated sophistication by
%avoiding preference reversal.  Does the motivation still make sense?
%joe17*: Sam, you didn't address the point above
%In this section, we show that procrastination can happen even t
%sophisticated decision makers, providing another explanation for
%procrastination.
%joe17
%In this section, we show that procrastination can happen even with
%Sam26
\commentout{
We conclude this section by showing that procrastination can occur even with
sophisticated decision makers,
%Sam20
%joe18*: what does it mean that they ``knowingly'' decide to procrastinate
though, in this case, the sophisticated decision makers knowingly
decide to procrastinate,
%Sam21 clarified
%joe19*: So that must mean that they prefer to play to days in a row
%to studying and then either playing or studying.  Is that right?  If
%not, I'm confused as to how this can happen.  Again, I thought that
%the point of sophistication was to deal with procrastination.  That's
%how we introduced it.  So we really need a good explanation as to why
%it's *not* dealing with it.
in the sense that the sophisticated decision makers know that they
will not study on the second day, and yet they choose to play on the
first day.

Consider the decision problem in Figure~\ref{fig:commitment}, which is a variation on the decision problem in Figure~\ref{fig:procrastination}.
The scenario is the same as before, with the only difference being that there is now a third state, where the exam is of intermediate difficulty.
In this third state, studying for one day gives the student an additional $45$ utils.

Assume that the student ignores forgone opportunities.
As before, after playing on the first day, the student would decide to play for another day, which has a worst-case regret of $20$, instead of studying, which has a worst-case regret of $25$.
Therefore, playing on the first day and then studying on the second
day is an unachievable plan.
%joe16
%
If the student were to ignore unachievable plans in regret
%joe11
%computation, then studying for both days has a worst-case regret of
computation, then studying both days has a worst-case regret of
$50$, while playing on both days has a worst-case regret of $60$.
Therefore, the student would start studying on the first day.
%joe16
However, if the student takes unachievable plans into account when
%joe11
%computation, then studying for both days ha a worst-case regret of
computing regret, then studying both days has a worst-case regret of
$70$, while the worst-case regret of playing on both days remains at
$60$.
Therefore, the student would play on both days instead.

%joe16
%This example illustrates the fact that one can often justify any
This example illustrates the fact that, when using regret, one can
often justify any
arbitrary decision by making up irrelevant alternatives. Therefore,
when modeling a decision problem, it is not only important to have a
good set of states and outcomes and acts, but the set of available
choices must also be chosen carefully. Of course, this problem applies
equally well to static decision problems.

\begin{figure}[h]
	\centering
		\includegraphics{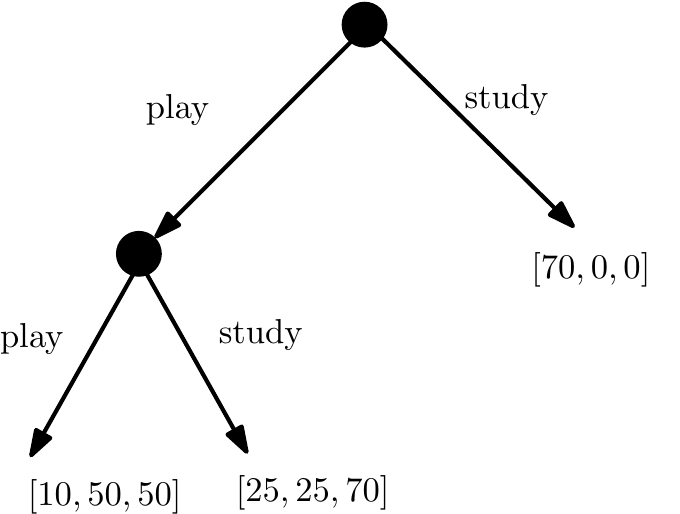}
	\caption{\label{fig:commitment}Procrastination example. In the first state, the exam is difficult; in the second state, the exam is easy.}
	\label{fig:commitmentExample}
\end{figure}
}%end commentout

%Sam26
\commentout{
Table~\ref{tab:props} summarizes the properties of each of the menu attitudes.
Note that context independence and IUP together identify the four menu
attitudes.
%Sam Aug 12, 2014
In other words, these are a subset of the axioms characterizing the different choice behavior.
%joe16
%This is true for all forms of regret-minimization, as well as
This is true for all forms of regret minimization, as well as
arbitrary choice functions.
\begin{table}[h]
	\centering
		\begin{tabular}{|c|c|c|}\hline
							& Context independence& IUP  \\ \hline
		$FO,UP$			 	& No	& No		\\ \hline
		$FO,\lnot UP$ 		& No	& Yes	 \\ \hline
		$\lnot FO,UP$ 		& Yes	& No		 \\ \hline
		$\lnot FO,\lnot UP$ 	& Yes	& Yes	 \\ \hline
		\end{tabular}
	\caption{Properties of the different menu attitudes.}
	\label{tab:props}
\end{table}
} % end commentout

}

\fullv{
\section{Conclusion} %Sam12 added
\label{sec:conclusion}
In dynamic decision problems, it is not clear which menu should be used to compute regret.
However, if we use MWER with likelihood updating, then in order to avoid preference reversals, we need to include all initially feasible plans in the menu, as well as richness conditions on the beliefs.
%Sam28 condensed from consistent planning section
Another, well-studied approach to circumvent preference reversals is \emph{sophistication}.
A sophisticated agent is aware of the potential for preference
reversals, and thus uses backward induction to determine the \emph{achievable plans}, which are the plans that can
actually be carried out.
In the procrastination example, a sophisticated agent would know that
she would not study the second day.
Therefore, she knows that playing on the first day and then studying on the second day is an unachievable plan.

Siniscalchi \citeyear{Siniscalchi2011} considers a specific type of sophistication, called
\emph{consistent planning}, based on earlier definitions of Strotz \citeyear{Strotz1955} and Gul and Pesendorfer \citeyear{GP2005}.
%joe24
%Under the assumption of the filtration information
%structure\footnote{In an unpublished version of the paper
%  \citeyear{Siniscalchi2009}, Siniscalchi considers a more general
%  information structure where the information that the DM receives can
%  depend on her actions.}, Siniscalchi axiomatizes behavior resulting
%from consistent planning using any menu-independent decision rule. 
Assuming a filtration information
structure, Siniscalchi axiomatizes behavior resulting
from consistent planning using any menu-independent decision rule.%
\footnote{Siniscalchi considers a more general
  information structure where the information that the DM receives can
  depend on her actions in an unpublished version of his paper
\cite{Siniscalchi2009}.}
With a menu-dependent decision rule, we need to consider the choice of menu when using consistent planning.
Hayashi \citeyear{Hayashi2009} axiomatizes sophistication using regret-based choices, including MER and the smooth model of anticipated regret, under the fixed filtration information setting.
However, in his models of regret, Hayashi assumes that the menu that the DM uses to compute regret includes only the achievable plans.
In other words, forgone opportunities and those plans that are not achievable are excluded from the menu.
%joe24
%Since both of these types of plans can arguably be included in the
%menu, it is not clear what the menu should be for a sophisticated DM. 
It would be interesting to investigate the effect of including such
in the menus of a sophisticated DM.
A sophisticated decision maker who takes unachievable plans into
account when computing regret can be understood as being
``sophisticated enough'' to understand that her preferences may change
in the future, but not sophisticated enough to completely ignore the
plans that she cannot force herself to commit to when computing regret.
On the other hand, a sophisticated decision maker who ignores
unachievable plans does not feel regret for not being able to commit
to certain plans.

%joe23*: this paragraph is misplaced.  It should be in the consistent
%planning section (assuming that it survives)
%joe24*: this paragraph is still out of place.  The point here isn't
%to do a literature review; this is a conclusion and we no longer
%discuss sophisticated planners in the paper.  We should either
%cut it, or tie it in better to our discussion of regret.
\commentout{
Hanany and Klibanoff provide an alternative method to consistent
planning that achieves dynamic consistency. 
For maxmin expected utility, Hanany and Klibanoff \citeyear{HK07} propose applying Bayes' rule to subsets of the set of priors, ``where the specific subset depends on the preferences, the conditioning event, and the choice problem.''
In \cite{HananyKlibanoff2009}, they expand this approach to general ambiguity averse, including regret-based preferences.
}
Finally, we have only considered ``binary'' menus in the sense that an act is either in the menu and affects regret computation, or it is not.
A possible generalization is to give different weights to the acts in the menu, and multiply the regrets computed with respect to each act by the weight of the act.
For example, with respect to forgone opportunities, ``recently forgone'' opportunities may warrant a higher weight than opportunities that have been forgone many timesteps ago.
Such treatment of forgone opportunities will definitely affect the behavior of the DM.
}
%joe24: \end{fullv}
%Sam19 added but decided to remove it
%One can also consider a model where the different selves at different decision nodes may be aware of each others' perspectives.
%This is a deviation from the standard view that each self at a different time sees the decision problem from only her own perspective, and represent different interpretations of `regret' in a dynamic setting.
%Forward-looking of regret is intuitive when viewed as a multi-generational decision problem.
%For example, earlier generations may take the perspectives (and regrets) of future generations in mind.
%This may be appropriate, for example, in issues such as climate change.
%In this case, the `regret' at a particular decision node may be some weighted sum of the anticipated regrets at future nodes.
%Backward-looking regret seems reasonable when the setting concerns a single decision maker making decisions over time.
%That is, the regrets anticipated by the earlier self may have an effect on the self at a later time.
%In this case, the `regret' at a particular decision node may be some weighted sum of the regrets experienced.
%Combining both forward and backward-looking regrets means that the `regret' at a particular decision node may be some weighted sum of the regrets at all past and future nodes.
}

%Sam27 appendices not included in ecsqaru version
%\fullv{
\appendix

%Sam20: I don't see a good way of weaving the secretary problem into
%the story, so I'll just leave it out
\commentout{
\section{The secretary problem\label{sec:secretary}}
%Joe's comments
 %If you're going to bother putting in the secretary problem, you have
%to relate it to what was discussed earlier, rather than making a big
%deal about "weighting" your memories (which wasn't discussed
%earlier).  What do different menu attitudes (of the kind we've
%discused before) tell us about the secretary problem?  Why is it
%interesting/reasonable?  If you can't make a case for it being
%interesting, why should the reader read this section
%joe16*: why are we bothering to do this?  What are we illustrating
%with this example beyond what we saw with the climate-change example?
%Is there a high-level message?  If so, perahps taht should be in the
%main text?  If not, perhaps we should cut this altogether.
%Sam19 I think since the secretary problem is a well-known problem, the comparatative statics between the different menu attitudes and vs expected utility maximization are interesting.
%joe17*: I'm OK with keeping it, but then you should have a high-level
%story in the main text, summarizing the key results and observations,
%and leave the technical calculations to the appendix.
In this section, we consider regret-minimization in a well-known
%joe16
%optimal stopping problem --- \emph{the secretary problem}
optimal stopping problem: {the secretary problem}
\citeyear{Ferguson}.
%Sam July 28
The secretary problem shows that the menu attitude substantially affects the decision maker's behavior.

Since the secretary problem and its solutions are relatively simple,
%joe16
%we forgo formal definitions in this section.
we just explain the problem somewhat informally.
In the general monotonic utility secretary problem, there are $n$ applicants for a single open position, with one coming in at each round.
The decision maker can rank the applicants that she has seen relative to one another, but does not know how the applicants seen rank against unseen applicants.
At each round, the decision maker can either choose to hire the current applicant and end the decision process, or she can forgo the opportunity to ever hire this applicant, and move on to the next round.
The reward for hiring an applicant depends on the rank of the hired applicant among all $n$ applicants.

More formally, at each round $j$, the decision maker observes $X_j$, the relative rank of the $j$th applicant among the first $j$ applicants observed.
The reward for hiring an applicant with absolute rank $k$ is $U(k)$.
For simplicity, we assume that the reward is monotonically decreasing
%joe16
%in the absolute rank of the applicant, and the reward for not hiring
%anyone at all is $-\infty$.
in the absolute rank of the applicant.

This problem has been well studied in the context of maximizing
expected utility with respect to a uniform prior distribution over all
%joe16
%possible sequence of absolute rankings of the candidates \citeyear{Ferguson}.
possible sequences of absolute rankings of the candidates \citeyear{Ferguson}.
In this case, it is known that the optimal strategy is always a \emph{threshold rule}.
That is, at any point in the decision problem, there exists a
%joe16
%threshold -- a number corresponding to a relative ranking -- for which
threshold such that
the decision maker should stop and hire the applicant if the
applicant's relative ranking is better than the threshold.
The actual optimal thresholds are characterized by recursive formulas
%joe16
%that are quite complicated \citeyear{Ferguson}.
that are quite complicated \citeyear{Ferguson}.

%Sam July 27
We now consider minimax regret behavior.
%joe16*: when we analyzed menu attitudes earlier, did that analysis
%also apply to regret?  If so, we should make that clear(er).  Moving
%up a level, we need to make clear which of our results really need
%MWER, which apply to MER and MWER, and which apply to MER, MWER, and MR
%The same analysis can be applied to MER or MWER, albeit with more
%complexity due to the many different possible beliefs.
The same analysis can also be applied to MER or MWER, but there is
more overhead, so we have to then take into account $\cP$ (or $\cP^+$).
We first consider the case where forgone opportunities do not affect regret.
%joe16*: this is not ``in other words''.  The set of choices is
%orthogonal to the menu used to evaluate regret
%In other words, at each round, there are only two possible choices in
%the ``menu'' used to compute regret: stopping and continuing.

%joe16
%At round $n$, the decision maker simply chooses to accept the final
%applicant, or to hire no one and receive $-\infty$.
%joe16* I didn't read the rest of this carefully; we first need to
%decide on the high-level story and whether to keep it.
At the final round $n$, the DM must accept whichever applicant comes along.
At round $j < n$, the worst-case regret of stopping is realized if the
applicant in the next round is the absolute best, and the current
candidate has the lowest absolute rank possible that is consistent
with its relative rank.
If $X_j = x_j$, then the worst-case regret for stopping is:
$$R^{j}_{stop}({x_{j}}) = U(1) - U(x_j + (n-j)).$$

The worst-case regret of continuing at round $j$ is realized if the decision maker keeps seeing worse and worse applicants at all subsequent rounds.
It is not hard to see that, regardless of the stopping rule used after round $j$, it is possible to stop at the absolute worse applicant.
Therefore, the largest possible regret occurs if the round $j$ applicant turns out to have absolute rank equal to $x_j$.
Thus, the worse-case regret of continuing at round $j$ is:
$$R^{j}_{cont}(\vec{x_j}) = U(x_j) - U(n).$$

Suppose that the decision maker uses the rule of stopping if and only if the worst-case regret for stopping is less than the worst-case regret for continuing.
That is, at round $j$, the decision maker stops if and only if
$$ U(1) - U(x_j + (n-j)) < U(x_j) - U(n).$$

\noindent Like the case of maximizing expected utility, the cut-off value at each round {does not} depend on the history of applicants seen in the first $j-1$ rounds.
%Sam July 28
This is the manifestation of the context independence axiom in the secretary problem.

If the utility difference between consecutively ranking applicants is some constant $c$, then the optimal cut-off value can be simply expressed as
$$ x_j < \frac{j + 1}{2}.$$

Table~\ref{tab:cutoffs} compares the cut off values of maximizing expected utility (assuming a uniform distribution on all possible sequences of absolute rankings) \citeyear{Ferguson} and minimax regret without forgone opportunities, when $n=10$.
In this example, the cut-off value for minimax regret is always weakly larger than the cut-off value for maximizing expected utility.
This means that at every stage, a minimax-regret decision maker is
willing to stop at a worse candidate than her expected utility
maximizing counterpart.

\begin{table}[h]
	\centering
		\begin{tabular}{|c|c|c|}\hline
		Round & EU & Minimax Regret (no forgone opportunities)  \\ \hline
		1 & - & -  \\ \hline
		2 & 1 & 1  \\ \hline
		3 & 1 & 1  \\ \hline
		4 & 2 & 2  \\ \hline
		5 & 2 & 2  \\ \hline
		6 & 2 & 3  \\ \hline
		7 & 2 & 3  \\ \hline
		8 & 3 & 4  \\ \hline
		9 & 4 & 4  \\ \hline
		\end{tabular}
	\caption{\label{tab:cutoffs}Cut-off values for $n=10$. Each entry represents the relative ranking for which it is strictly better to stop.}
\end{table}

%joe3
Next, we include forgone opportunities in the computation of regret.
At round $n$, our assumption that hiring even the worst applicant is better than not hiring anyone at all means that at round $n$, the regret-minimizing option is to hire the $n$th applicant.

At round $j < n$, the worst-case regret of stopping is identical to the previous case, where forgone opportunities are not part of the regret computations.
The worst-case regret is realized if the applicant in the next round is the absolute best, and the current applicant ranks lower than all future applicants.
The worst-case regret is:
$$R^{j}_{stop}(\vec{x_{j}}) = U(1) - U(x_j + (n-j)).$$
The worst-case regret of continuing is different when forgone opportunities can affect regret.
As before, regardless of what the decision maker does after continuing, it is always possible to stop at the absolute worst applicant.
The regret, however, now involves comparison against hiring any of the $j$ forgone applicants.
The worst-case regret of continuing at round $j$ is then
$$R^{j}_{cont}(
{x_{j}}) = \max_{t \leq j} U(x_t) - U(n) = U(1) - U(n).$$

As before, the optimal stopping strategy for the decision maker is to stop if and only if the worst-case regret of stopping is less than the worse-case regret of continuing, that is, if
$$U(1) - U(x_j + (n-j)) < U(1) - U(n),$$
i.e., $U(x_j + (n-j)) < U(n)$.

Comparing with the previous case where forgone opportunities are ignored, we see that this decision maker is more conservative -- she has a {greater tendency to stop} at every round.
In fact, the decision maker must stop if the current round applicant is not relatively-worst amongst the applicants already seen.
%Sam Nov 30, 2014
This is consistent with the intuition that when forgone opportunities are included in the menu, then the more opportunities are passed up, the higher the potential regret; as a result a decision maker that includes forgone opportunities in the menu are less inclined to create forgone opportunities.

Unlike the case of maximizing expected utility and minimax regret without forgone opportunities, here the cut-off value at any particular stage $j$ may depend on the relative ranking of the first $j$ applicants.

%Sam Aug 17, 2014
As we see, there are multiple reasonable solutions to the classical secretary problem, depending on whether one is an expected-utility maximizer, a regret-minimizer who does not regret forgone opportunities, or a regret-minimizer who do regret forgone opportunities.
These decision makers are increasingly conservative.

} % end commentout of secretary problem section

\section{Proof of Theorem~\ref{prop:norev}}\label{apdx:norev}
%joe16: added
\newenvironment{RETHM}[2]{\trivlist \item[\hskip 10pt\hskip\labelsep{\sc #1\hskip 5pt\relax\ref{#2}.}]\it}{\endtrivlist}
\newcommand{\rethm}[1]{\begin{RETHM}{Theorem}{#1}}
\newcommand{\erethm}{\end{RETHM}}
%joe24
%We restate the theorem
We restate the theorem (and elsewhere in the appendix)
for the reader's convenience.
\rethm{prop:norev}
%joe27
%Given a dynamic decision problem $D$, if
%Axiom~\ref{axiom:DC-M}--\ref{axiom:sens} hold, 
For a dynamic decision problem $D$, if
%Axiom~\ref{axiom:DC-M}--\ref{axiom:sens} hold
and $\mu(h)=M$ for some fixed menu $M$,
then there will be no preference reversals in $D$.
\erethm

\begin{proof}
Before proving the result, we need some definitions.
Say that an information set $I$ \emph{refines} an information set $I'$
if, for all $h \in I$, some prefix $h'$ of $h$ is in $I'$.
Suppose that there is a history $h$ such that $f, g \in M_h$ and $I(h)
= I$.  Let $fIg$ denote
%joe16
%the plan that follows the plan $f$ at all histories $h$ such that $(h)
the plan that agrees with $f$ at all histories $h'$ such that $I(h')$
refines $I$ and agrees with $g$ otherwise.
%and follows the plan $g$ at all other histories.
As we now show, $fIg$ gives the
%joe16
%outcome of $f$ on states in
%$E$ and the outcome of $g$ on states in $E^c$.
same outcome as $f$ on states in
$E = E(h)$ and the same outcome as $g$ on states in $E^c$;
moreover, $fIg \in M_h$.

Suppose that $s(h) = s$ and that $s \in E$.  Since $E(h) = E$,
there exists a history $h' \in I(h)$ such that $s(h') = s'$ and
$R(h') = R(h)$.
Since $f, g \in M_h$, there must exist some $k$ such
that $f^k(\<s\>) = g^k(\<s\>) = h$
%joe16: this perhaps should have been defined earlier, where we associate
%plans with outcomes
(where, as usual, $f^0(\<s\>) = \<s\>$
and for $k' \ge 1$, $f^{k'}(\<s\>) = f(f^{k'-1}(\<s\>))$).
  We claim that for all $k' \le k$,
$f^{k'}(\<s'\>) = g^{k'}(\<s'\>)$, and $f^{k'}(\<s'\>)$ is in the same
information set as $f^{k'}(\<s\>)$.  The proof is by induction on
$k'$.  If $k' = 0$, the result follows from the observation that since
$\<s\>$ is a prefix of $h$, there must be some prefix of $h'$ in
$I(\<s\>)$.  For the inductive step, suppose that $k' \ge 1$.  We must
have $f^{k'}(\<s\>) =
g^{k'}(\<s\>)$ (otherwise $g$ would not be in $M_h$).  Since
$g^{k'-1}(\<s\>) = f^{k'-1}(\<s\>)$ and $f^{k'-1}(\<s'\>) =
g^{k'-1}(\<s'\>)$ are in the same information set, by the inductive
hypothesis, $g$ must perform the same action at $g^{k'-1}(\<s\>)$
and $g^{k'-1}(\<s'\>)$, and must perform the same action at $f^{k'-1}(\<s\>)$
and $f^{k'-1}(\<s'\>)$.  Since $g^{k'}(\<s\>)$ and
$f^{k'}(\<s\>)$ are both prefixes of $h$, $g$ and $f$ perform the
same action at $f^{k'-1}(\<s\>) = g^{k'-1}(\<s\>)$.  It follows that
$f$ and $g$ perform the same action at $f^{k'-1}(\<s'\>) =
g^{k'-1}(\<s'\>)$, and so $f^{k'}(\<s'\>) = g^{k'}(\<s'\>)$.  Thus,
$g^{k'}(\<s'\>)$ must be a prefix of $h'$, and so must be in the same
information set as $f^{k'}(\<s\>)$.   This completes the inductive
proof.

Since $f^k(\<s'\>) = g^k(\<s'\>) =h'$, it follows that $f^k(\<s'\>) =
(fIg)^k(\<s'\>)$.  Below $I$, all the information sets are refinements
of $I$, so by definition, for $k' \le k$, we must $f^{k'}(\<s'\>) =
(fIg)^{k'}(\<s'\>)$.  Thus, $f$ and $fIg$ give the same outcome for
$s'$, and hence all states in $E$.  Note it follows that
$(fIg)^k(\<s\>) = h$, so $fIg \in M_h$.

For $s' \notin E$ and all $k'$, it cannot be the case that
$I((fIg)^{k'}(\<s'\>))$ is a refinement of $I$, since
the first state in $(fIg)^{k'}(\<s'\>))$ is $s'$, and no history in a
refinement of $I$ has a first state of $s'$.  Thus, $fIg^{k'}(\<s'\>) =
g^{k'}(\<s'\>)$ for all $k'$, so $f$ and $fIg$ give the same outcome for
$s'$, and hence all states in $E^c$.

%Suppose that there is a history $h$ such that $f, g \in M_h$ and $I(h)
%= I$.  Let $fIg$ denote
%the plan that agrees with $f$ at all histories $h'$ such that $I(h')$
%refines $I$ and agrees with $g$ otherwise.
%As we have shown previously, $fIg$ gives the
%same outcome as $f$ on states in
%$E = E(h)$ and the same outcome as $g$ on states in $E^c$;
%moreover, $fIg \in M_h$.

Returning to the proof of the proposition, suppose that $f \in C_{\mu,h}(M_h)$,
$h'$ is a history extending $h$, and $f \in M_{h'}$.
We want to show that $f \in C_{\mu,h'}(M_{h'})$.
%joe24
%By perfect recall, we know that $E(h') \subseteq E(h)$.
By perfect recall, $E(h') \subseteq E(h)$.
%joe24: this is unncessary
%If $E(h') = E(h)$ then it must be the case that $f\in  C_{\mu,E(h')}(M_{h'})$.
%Thus consider the case that $E(h') \subset E(h)$.
Suppose, by way of contradiction, that $f \notin C_{\mu,h'}(M_{h'})$.
%joe 24
Since $f \in C_{\mu,h'}(M_{h'})$, we cannot have $E(h') = E(h)$, so
%Sam29
$E(h') \subset E(h)$.
%$E(h') \subseteq E(h)$.
Choose $f' \in C_{\mu,E(h')}(M_{h'})$ and $g \in C_{\mu,E(h')^c \cap E(h)}(M_{h'})$
(note that $C_{\mu,E(h')}(M_{h'}) \ne \emptyset$ and
$C_{\mu,E(h')^c \cap E(h)}(M_{h'}) \ne \emptyset$ by Axiom 3).
Since $f', g \in M_{h'}$ (by Axiom 3), $f'I(h')g$ is in $M_{h'}$.
Since $f'I(h')g$ and $f'$, when viewed as acts, agree on states in
$E(h')$, we must have $f'I(h')g \in C_{\mu,E(h')}(M_{h'})$ by
Axiom~\ref{axiom:offE}.
Similarly, since $f'I(h')g$ and $g$, when viewed as acts, agree on states in
$E(h')^c \cap E(h) $, we must have $f'I(h')g \in C_{\mu,E(h')^c \cap E(h)}(M_{h'})$.
Therefore, by Axiom~\ref{axiom:DC-M}, $f'I(h')g \in C_{\mu,h}(M_{h'})$.
Also by Axiom~\ref{axiom:DC-M}, since $f \notin C_{\mu,h'}(M_{h'})$,
we must have $f \notin C_{\mu,h}(M_{h'})$.
By Axiom~\ref{axiom:sens}, this implies that $f \notin
%joe24
%C_{\mu,h}(M_{h})$ (since $M_{h'} \subseteq M_h$). 
%This contradicts the assumption that $f \in C_{\mu,h}(M_h)$.
C_{\mu,h}(M_{h})$ (since $M_{h'} \subseteq M_h$),
giving us the desired contradiction.
\end{proof}

\section{Proof of Theorem~\ref{thm:equivalence}}\label{apdx:thm1}
%Sam Dec 30, 2014 don't seem to need this lemma
%\begin{lemma}\label{lemma:rich2}
%If $C$ is closed and convex, and is rich with respect to $E$, then for all vectors $\mathbf{b} \in [-1,0]^{|S|}$, for $E' \in \{E,E^c\}$, there exists some $\mathbf{p} \in C$ such that $\sum_{s \in E'} \mathbf{p}(s) \mathbf{b}_s \geq \sum_{s \in E'} \mathbf{p'}(s)  \mathbf{b}_s$ for all $p' \in C$ and $\mathbf{p}(E') \geq \mathbf{p'}(E')$ for all $p' \in C$.
%\end{lemma}
%\begin{proof}
%Since $C$ is closed, there must exist some $\mathbf{p} \in C$ such that
%\begin{align}\label{equ:prop1}
%\sum_{s \in E'} \mathbf{p}(s) \mathbf{b}_s \geq \sum_{s \in E'} \mathbf{p'}(s)  \mathbf{b}_s \text{ for all } p' \in C.
%\end{align}
%Now we argue that there is always such a $\mathbf{p}$ such that $\mathbf{p}(E') \geq \mathbf{p'}(E')$ for all $p' \in C$.
%Suppose for the purpose of contradiction that for some $\mathbf{p^*}$, $\mathbf{p^*}(E') > \mathbf{p}(E')$ for all $\mathbf{p}$ satisfying (\ref{equ:prop1}).
%By richness,
%$$\mathbf{p}^{**} = \mathbf{p}^*(E') \frac{\mathbf{p}|E'}{|\mathbf{p}|E'|} + (1-\mathbf{p}^*(E')) \frac{\mathbf{p}|E'^c}{|\mathbf{p}|E'^c|}
%$$
%is in $C$.
%Clearly, (\ref{equ:prop1}) holds for $\mathbf{p}^{**} $.
%Moreover, $\mathbf{p}^{**} (E') = \mathbf{p}^{*} (E')$, which is a contradiction.
%\end{proof}

\rethm{thm:equivalence}
If $\cP^+$ is a set of weighted distributions
%joe27
%on $S$ such that $C(\cP^+)$ is closed, then the following are
on $(S,\Sigma)$ such that $C(\cP^+)$ is closed, then the following are
equivalent: 
\begin{enumerate}
\item[(a)] For all decision problems $D$ based on $(S,\Sigma)$ and all menus
  $M$ in $D$, Axioms~\ref{axiom:DC-M}--\ref{axiom:sens} hold for
choice functions represented by $\cP^+|^l E$ (resp., $\cP^+|^p E$).
\item[(b)] For all decision problems $D$ based on $(S,\Sigma)$, 
states $s \in S$, and acts $f\in M_{\< s \>}$, the
weighted regret of $f$ with respect to $M_{\< s \>}$ and $\cP^+$ is separable.
\end{enumerate}
\erethm

We actually prove the following stronger result.
%joe24*: Sam, you'll need to rewrite the proof and statement using |^l and |^p.
\begin{theorem}
If $\cP^+$ is a set of weighted distributions on $(S,\Sigma)$ such that $C(\cP^+)$ is closed, then the following are equivalent:
\begin{enumerate}
\item[(a)] For all decision problems $D$ based on $(S,\Sigma)$, Axioms~\ref{axiom:DC-M}--\ref{axiom:sens} hold for menus of the form $M_{\< s \>}$ for 
choice functions represented by $\cP^+|^l E$ (resp., $\cP^+|^p E$).
\item[(b)] For all decision problems $D$ based on $(S,\Sigma)$ and all menus
  $M$ in $D$, Axioms~\ref{axiom:DC-M}--\ref{axiom:sens} hold for choice functions represented by $\cP^+|^l E$ (resp., $\cP^+|^p E$).
\item[(c)] For all decision problems $D$ based on $(S,\Sigma)$, 
states $s \in S$, and acts $f\in M_{\< s \>}$, the weighted regret of $f$ with respect to $M_{\< s \>}$ and $\cP^+$ is separable.
\item[(d)] For all decision problems $D$ based on $(S,\Sigma)$, menus $M$ in $D$, and acts $f\in M_{}$, the weighted regret of
$f$ with respect to $M$ and $\cP^+$ is separable.
\end{enumerate}
\end{theorem}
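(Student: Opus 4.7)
The plan is to prove the cycle $(c) \Rightarrow (d) \Rightarrow (b) \Rightarrow (a) \Rightarrow (c)$, with the two trivial restrictions $(b) \Rightarrow (a)$ and $(d) \Rightarrow (c)$ handled by specializing to initial menus.

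For the menu-embedding implications $(c) \Rightarrow (d)$ and $(a) \Rightarrow (b)$, I would show that any pair $(M, f)$ and triple of events $(E \cap F, E^c \cap F, F)$ arising in some decision problem based on $(S,\Sigma)$ can be realized inside an auxiliary decision problem $D'$ in which $M$ is literally an initial menu $M_{\< s \>}$. Concretely, construct $D'$ so that nature first picks $s \in S$, then reveals which of $E \cap F$, $E^c \cap F$, or $F^c$ contains $s$ (ensuring all the relevant events are relevant in $D'$), and then the DM picks a single act from $M$. Since separability of $f$ with respect to $(M,\cP^+)$ and satisfaction of the axioms on $C_M^{\regret,\cP^+|^\chi E}$ depend only on $(\cP^+, M, f, E)$ and not on the tree structure, the initial-menu versions in $D'$ transfer back to the arbitrary-menu versions in the original problem.

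For $(d) \Rightarrow (b)$, I would verify each axiom for the MWER choice rule. Axiom~\ref{axiom:offE} is immediate because $\Pr\mid E$ is supported on $E$. Axiom~\ref{axiom:nonempty} follows because closedness of $C(\cP^+)$ makes the sup in $\regret_M^{\cP^+\mid^\chi E}$ attained on any finite menu, so the set of minimizers is nonempty. Axiom~\ref{axiom:sens} (Sen's $\alpha$) is immediate, since removing acts from $M'$ cannot increase the regret of a minimizer. The substantive case is Axiom~\ref{axiom:DC-M}: if $f \in C^{E \cap F}(M') \cap C^{E^c \cap F}(M')$, then for every $g \in M'$ and every $(\Pr,\alpha) \in \cP^+$, both bracketed terms in the separability formula are pointwise weakly smaller for $f$ than for $g$, so taking the sup gives $\regret^F(f) \le \regret^F(g)$. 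For the strict part, if additionally $g \notin C^{E \cap F}(M')$ so $\regret_M^{\cP^+\mid^\chi(E\cap F)}(g) > \regret_M^{\cP^+\mid^\chi(E\cap F)}(f)$, then the inequality (\ref{equ:sep2}) applied to $g$ guarantees the sup defining $\regret^F(g)$ is achieved with strictly positive weight on $E \cap F$, which forces strict inequality $\regret^F(g) > \regret^F(f)$.

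The main direction is $(a) \Rightarrow (c)$, and this is where the bulk of the work lies. The idea is two-step. First, apply the non-strict half of DC-M to the two-element menu $\{f_1, f_2\}$: if $\regret^{E \cap F}(f_1) = \regret^{E \cap F}(f_2)$ and $\regret^{E^c \cap F}(f_1) = \regret^{E^c \cap F}(f_2)$, then both acts lie in $C^{E \cap F}(\{f_1,f_2\}) \cap C^{E^c \cap F}(\{f_1,f_2\})$, so DC-M and Sen's $\alpha$ force $\regret^F(f_1) = \regret^F(f_2)$. Hence there is a well-defined map $\Phi$ sending the pair $(\regret^{E \cap F}(f), \regret^{E^c \cap F}(f))$ to $\regret^F(f)$. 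Second, calibrate $\Phi$ by introducing witness acts that have \emph{constant} state-wise regret $c$ on $E \cap F$ and $d$ on $E^c \cap F$, constructed inside an enlarged decision problem via the menu embedding above; for such an act the formulas $\regret^{E\cap F}(f) = c$ and $\regret^{E^c \cap F}(f) = d$ follow directly from the normalization of weights, and $\regret^F(f)$ can be computed from the definition of $\cP^+\mid^\chi F$ to produce precisely the RHS of the separability identity. The strict half of DC-M, converted into a monotonicity statement about $\Phi$ in each argument, then rules out degenerate behaviour and delivers inequality (\ref{equ:sep2}). The main obstacle is the second step: constructing witness acts whose regret profile is exactly a chosen $(c,d)$ while still lying in a bona-fide decision problem whose \emph{menu-relative} regret computation aligns with the unconditional quantities on the RHS. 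The closure hypothesis on $C(\cP^+)$ is crucial here, since it allows us to identify supporting subprobability measures at which the sup is attained and thereby translate equalities of regrets into identities of the weighted sup functional.
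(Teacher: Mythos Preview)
Your cycle $(c)\Rightarrow(d)\Rightarrow(b)\Rightarrow(a)\Rightarrow(c)$ is correct and rests on the same two devices as the paper: the menu-embedding (realizing an arbitrary menu $M$ as an initial menu $M_{\<s\>}$ of a one-shot problem) and the witness-act construction (an act with constant state-wise regret $c$ on $E\cap F$ and $d$ on $E^c\cap F$, added to $M$ without raising the pointwise envelope, so that all regrets computed relative to the enlarged menu agree with those relative to $M$). The paper organizes things slightly more economically, proving $(a)\Rightarrow(d)$ and $(c)\Rightarrow(b)$ directly---i.e.\ it folds the embedding into each of the two nontrivial implications rather than isolating $(c)\Rightarrow(d)$ and $(a)\Rightarrow(b)$ as separate steps---but the content is the same. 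Your reframing of $(a)\Rightarrow(c)$ via a well-defined map $\Phi$ is a clean way to say what the paper does by two contradiction arguments with the witness act $a_{f'}$.

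One misplacement worth flagging: the closure of $C(\cP^+)$ is \emph{not} used in the axioms-to-separability direction (your $(a)\Rightarrow(c)$); the witness acts there are written down explicitly and need no attained sup. Closure is used in the separability-to-axioms direction (your $(d)\Rightarrow(b)$, the paper's $(c)\Rightarrow(b)$), precisely in the strict half of DC-M: to conclude $\regret^F(g)>\regret^F(f)$ from $\regret^{E\cap F}(g)>\regret^{E\cap F}(f)$ you need the sup defining $\regret^F(g)$ via the separability identity to be attained at some $(\Pr^*,\alpha^*)$, so that the second clause of SEP forces $\alpha^*\Pr^*(E\cap F)>0$ and the strict pointwise gap on $E\cap F$ survives the sup. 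Your sketch of that step is right in spirit but should cite closure there rather than in the witness construction. Also, your justification of Axiom~\ref{axiom:nonempty} via closedness is off: nonemptiness of $C_{M,E}(M')$ follows from $M'$ being finite (or otherwise admitting a minimizer), not from closedness of $C(\cP^+)$.
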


\begin{proof}
%Sam29 nonempty -> measurable
Fix an arbitrary state space $S$, measurable events $E, F \subseteq S$, and a set $\cP^+$ of weighted distributions on $(S,\Sigma)$.
%joe24
%The fact that (b) implies (a) and the fact that (d) implies (c)
The fact that (b) implies (a) and (d) implies (c)
follows immediately. 
Therefore, it remains to show that (a) implies (d) and that (c) implies (b).
%joe24: added paragraph break and rewrote
%\noindent \textbf{$\mathbf{(a)\Rightarrow (d)}$, part 1:}
%We first show that Axiom~\ref{axiom:DC-M} implies that for all

%Sam29
Since the proof is identical for prior-by-prior updating ($|^p$) and for likelihood updating ($|^l$), we use $|$ to denote the updating operator.
That is, the proof can be read with $|$ denoting $|^p$, or with $|$ denoting $|^l$.

To show that (a) implies (d), 
we first show that Axiom~\ref{axiom:DC-M} implies that for all
decision problems $D$ based on $(S,\Sigma)$, menu $M_{}$ in $D$, sets $\cP^+$
of 
weighted probabilities, and acts $f \in M_{}$,
\begin{align}
\label{equ:dcCondition1}
\regret_{M}^{\cP^+|F} (f) \geq \sup_{(\Pr,\alpha) \in \cP^+} \alpha \left( \Pr(E\cap F) \regret_{M}^{\cP^+|(E\cap F)}(f) + \Pr(E^c \cap F) \regret_{M}^{\cP^+|(E^c \cap F)}(f)\right).
\end{align}
%joe24: reoved paragraph break
%
Suppose, by way of contradiction, that (\ref{equ:dcCondition1}) does not hold.
%joe24
%Then there exists some decision problem $T$ based on $S$, events $E,F
%\subseteq S$, menu $M$ in $T$, and act $f\in M$, such that 
%Sam29 changed to sigma algebra
Then for some decision problem $D$ based on $(S,\Sigma)$, measurable events $E,F
\subseteq S$, menu $M$ in $D$, and act $f\in M$, we have that
$$ \regret_{M}^{\cP^+|F} (f) < \sup_{(\Pr,\alpha) \in \cP^+} \alpha \left(
\Pr(E\cap F) \regret_{M}^{\cP^+|(E\cap F)}(f) + \Pr(E^c \cap F)
\regret_{M,F}^{\cP^+|(E^c \cap F)}(f)\right).$$
%joe24
%We now define a new decision problem $T'$ based on $T$.
We define a new decision problem $D'$ based on $(S,\Sigma)$.
The idea is that in $D'$, we will have a plan $a_{f'}$ such that $a_{f'} \in
C^{\regret,\cP^+}_{{M',E\cap F}}(M'')$ and $a_{f'}\in
C^{\regret,\cP^+}_{{M',E^c \cap F}}(M'')$ and $a_{f'}\notin
C^{\regret,\cP^+}_{{M',F}}(M'')$ for some $M''\subseteq M'$, where $M'$ is the menu at the initial decision node for the DM.

We construct $D'$ as follows.
%joe24
%$T'$ is a depth-two tree; i.e., nature makes a single move, and the DM
$D'$ is a depth-two tree; that is, nature makes a single move, and
%joe27
%the DM
then the DM
makes a single move. 
At the first step, nature choose a state $s \in F$.
At the second step, the DM chooses from the set $\{ a_g : g \in M \} \cup \{a_{f'}\}$ of actions.
With a slight abuse of notation, we let $a_g$ also denote the plan in $T'$ that chooses the action $a_g$ at the initial history $\<s\>$.
Therefore, the initial menu in decision problem $D'$ is $M'=\{ a_g : g \in M \} \cup \{a_{f'}\}$.

The utilities for the actions/plans in $D'$ are defined as follows.
For actions $\{a_g : g\in M\}$, the utility of $a_g$ in state $s$ is
%joe24
%just the utility resulting from applying plan $g$, when the true state
%is $s$, in decision problem $T$. 
just the utility of the outcome resulting from applying plan $g$ in state $s$
in decision problem $D$. 
%The utility associated with $o_{s,g}$ is just the utility in $T$ associated with the outcome $g(s)$.
%Moreover, for any plan $g$ in $M$, we let $a_g$ be an action that immediately results in the outcome $o_{s,g}$ in state $s$ (i.e., we bypass all the intermediate nodes of $T$).
The action $a_{f'}$ has utilities
\begin{align*}
u(a_{f'}(s)) = \begin{cases}
\sup_{g\in M} u(g(s)) - \regret_{M}^{\cP^+|(E \cap F)}(f) \text{ if } s\in E\cap F \\
\sup_{g\in M} u(g(s)) - \regret_{M}^{\cP^+|(E^c \cap F)}(f) \text{ if } s\in E^c \cap F.
\end{cases}
\end{align*}

For all states $s\in F$, we have that $u(a_{f'}(s)) \leq \sup_{g\in M} u(g(s))$.
As a result, for all states $s\in F$, we have that
\begin{align*}
\sup_{g\in M} u(g(s)) = \sup_{a_g \in M'} u(a_g(s)).
\end{align*}
%joe24
%Since the regret of a plan in state $s$ only depends on its payoff in
Since the regret of a plan in state $s$ depends only on its payoff in
$s$ and the best payoff in $s$, it is not hard to see that the regrets
of $a_g$ with respect to $M'$ is the same as the regret of $g$ with
respect to $M$. 
More precisely, for all $g \in M$,
\begin{align*}
\regret_{M'}^{\cP^+|(E\cap F)}(a_g) & = \regret_{M}^{\cP^+|(E\cap F)}(g), \\
\regret_{M'}^{\cP^+|(E^c \cap F)}(a_g) & = \regret_{M}^{\cP^+|(E^c \cap F)}(g), \text{ and} \\
\regret_{M'}^{\cP^+|F}(a_g) & = \regret_{M}^{\cP^+|F}(g).
\end{align*}
%joe24: removed paragraph break
%
By definition of $a_{f'}$, for each state $s\in E\cap F$, we have
%joe24: missing ^ (twice)
%$reg_{M'}(a_{f'},s) = reg_{M}^{\cP+|(E\cap F)}(f)$, and for each state
%$s\in E^c \cap F$, we have $reg_{M'}(a_{f'},s) = reg_{M}^{\cP+|(E^c
$\regret_{M'}(a_{f'},s) = \regret_{M}^{\cP^+|(E\cap F)}(f)$, and for each state
$s\in E^c \cap F$, we have $\regret_{M'}(a_{f'},s) = \regret_{M}^{\cP^+|(E^c
  \cap F)}(f)$. 
%joe24: added
Thus, for all $\Pr \in \cP$, if $\Pr(E\cap F) \ne 0$, then
$\regret_M^{\Pr|(E\cap F)}(f) = 
\regret_{M}^{\cP^+|(E\cap F)}(f)$, and
if $\Pr(E^c\cap F) \ne 0$, then
$\regret_M^{\Pr|(E^c\cap F)}(f) = \regret_{M}^{\cP^+|(E^c\cap
  F)}(f)$. 
%Sam30
If for all $(\Pr,\alpha) \in \cP^+|(E\cap F)$, $\alpha \Pr( E\cap F) = 0$,
then $\regret_{M'}^{\cP^+|(E\cap F)}(a_{f'}) = \regret_{M'}^{\cP^+|(E\cap F)}(a_f) = 0$. 
%joe24*: the next line is true only if some measure in \cP^+ gives E
%\cap F positive probability.  
Otherwise, since there is some measure in $\cP^+|(E\cap F)$ that has weight $1$,
we must have $\regret_{M'}^{\cP^+|(E\cap F)}(a_{f'}) =
\regret_{M'}^{\cP^+|(E\cap F)}(a_f)$. 
%joe24: now unncessary
%To see why, note that
%\begin{align*}
%\regret_{M'}^{\cP^+|(E\cap F)}(a_{f'})
%& = \sup_{\Pr\in\cP} \left( \alpha_{\Pr,E\cap F} \sum_{s\in 
%  E\cap F}\Pr|(E\cap F)(s)  \regret_{M}(a_{f'},s) \right)\\
%& = \sup_{\Pr\in\cP} \left( \alpha_{\Pr,E\cap F} \cdot
%\regret_{M}^{\cP^+|(E\cap F)}(f) \right)\\ 
%&= \regret_{M}^{\cP^+|(E\cap F)}(f)\\
%& = \regret_{M'}^{\cP^+|(E\cap F)}(a_{f}).
%\end{align*}
%joe24
%By an analogous argument, it follows that $\regret_{M'}^{\cP^+|(E^c
Similarly,  $\regret_{M'}^{\cP^+|(E^c
  \cap F)}(a_{f'}) = \regret_{M'}^{\cP^+|(E^c \cap F)}(a_f)$. 
%joe24
%We now show that $\regret_{M'}^{\cP^+|F}(a_{f'}) >
%\regret_{M'}^{\cP^+|F}(a_{f})$, violating Axiom~\ref{axiom:DC-M}. 
%\begin{align*}
Thus,
$$\begin{array}{lll}
\regret_{M'}^{\cP^+|F}(a_{f'})
& = \sup_{(\Pr,\alpha) \in\cP^+} \alpha \left( \Pr(E\cap F) \regret_{M}^{\cP^+|(E\cap F)}(f) + \Pr(E^c \cap F)\regret_{M}^{\cP^+|(E^c \cap F)}(f) \right)\\
%joe24
%& > \regret_M^{\cP^+|F}(f)
%& = \regret_{M'}^{\cP^+|F}(a_f).
%\end{align*}
& > \regret_M^{\cP^+|F}(f)  \ \ \ \ \ \ \mbox{[by assumption]}\\
& = \regret_{M'}^{\cP^+|F}(a_f) \ \ \ \ \mbox{[by construction].}
\end{array}
$$
Therefore, we have $a_{f'} \in C^{\regret,\cP^+}_{{M',E\cap
%joe24
%    F}}(\{a_{f'}, a_f\})$ and $a_{f'} \in C^{\regret,\cP^+}_{{M',E^c
%    \cap F}}(\{a_{f'}, a_f\})$ and $a_{f'}\notin
    F}}(\{a_{f'}, a_f\})$, $a_{f'} \in C^{\regret,\cP^+}_{{M',E^c
    \cap F}}(\{a_{f'}, a_f\})$, and $a_{f'}\notin
C^{\regret,\cP^+}_{{M',F}}(\{a_{f'}, a_f\})$, violating
Axiom~\ref{axiom:DC-M}. 
%joe23: haven't we just shown this?  I cut the rest of the paragraph.
%Am I missing something?
%Sam28 we've shown the >= direction, and still need to argue the <=
%direction, which follows from the same arguments with the initial
%contradiction replaced appropriately 

%joe24*: I'm sorry.  I don't think this follows from an ``analogous''
%argument at all.  Rather, I think it follows from the definition of
%regret.  I think you need to write this out (although it should be easy).
By an analogous argument, we show that the opposite weak inequality,
\begin{align}\label{equ:otherdir}
\regret_M^{\cP^+|F} (f) \leq \sup_{(\Pr,\alpha) \in \cP^+} \alpha
\left( \Pr(E\cap F) \regret_{M}^{\cP^+|(E\cap F)}(f) + \Pr(E^c \cap
F) \regret_{M}^{\cP^+|(E^c \cap F)}(f)\right),
\end{align} is also implied by Axiom~\ref{axiom:DC-M}.
Suppose, by way of contradiction, that (\ref{equ:otherdir}) does not hold.
Then for some decision problem $D$ based on $(S,\Sigma)$, measurable events $E,F
\subseteq S$, menu $M$ in $D$, and act $f\in M$, we have that
$$ \regret_{M}^{\cP^+|F} (f) > \sup_{(\Pr,\alpha) \in \cP^+} \alpha \left(
\Pr(E\cap F) \regret_{M}^{\cP^+|(E\cap F)}(f) + \Pr(E^c \cap F)
\regret_{M,F}^{\cP^+|(E^c \cap F)}(f)\right).$$
We define a decision problem $D'$ based on $(S,\Sigma)$ 
%joe28
%in exactly the same way as in the previous case. 
%Just as in the previous case, we have that
just as in the previous case.  Specifically, we have that
$\regret_{M'}^{\cP^+|(E\cap F)}(a_{f'}) =
\regret_{M'}^{\cP^+|(E\cap F)}(a_f)$, and that 
$\regret_{M'}^{\cP^+|(E^c
  \cap F)}(a_{f'}) = \regret_{M'}^{\cP^+|(E^c \cap F)}(a_f)$. 
%joe27
%The part that is different from the previous case is that we have
The one difference from the previous case is that we now have
$$\begin{array}{lll}
\regret_{M'}^{\cP^+|F}(a_{f'})
& = \sup_{(\Pr,\alpha) \in\cP^+} \alpha \left( \Pr(E\cap F) \regret_{M}^{\cP^+|(E\cap F)}(f) + \Pr(E^c \cap F)\regret_{M}^{\cP^+|(E^c \cap F)}(f) \right)\\
& < \regret_M^{\cP^+|F}(f)  \ \ \ \ \ \ \mbox{[by assumption]}\\
& = \regret_{M'}^{\cP^+|F}(a_f) \ \ \ \ \mbox{[by construction].}
\end{array}
$$
Therefore, we have $a_{f} \in C^{\regret,\cP^+}_{{M',E\cap
    F}}(\{a_{f'}, a_f\})$, $a_{f} \in C^{\regret,\cP^+}_{{M',E^c
    \cap F}}(\{a_{f'}, a_f\})$, and $a_{f}\notin
C^{\regret,\cP^+}_{{M',F}}(\{a_{f'}, a_f\})$, violating
Axiom~\ref{axiom:DC-M}. 

%joe24
%\noindent \textbf{$\mathbf{(a)\Rightarrow (d)}$, part 2:}
%Now, we show that Axiom~\ref{axiom:DC-M} also implies that
To complete the proof that (a) implies (d), we show that 
Axiom~\ref{axiom:DC-M} also implies that
for all decision problems $D$ based on $(S,\Sigma)$, menus $M$ in $D$,
sets $\cP^+$ of 
weighted probabilities, and acts $f \in M$, if
%joe27
%joe28: undid my change; it was a mistake
$\regret_{M}^{\cP^+|(E\cap F)}(f) > 0 $, then 
%$\regret_{M}^{\cP^+|(E\cap F)}(f) > 0 $, 
\begin{align}\label{equ:seppart2}
\regret_{M}^{\cP^+|F} (f) > \sup_{(\Pr,\alpha) \in \cP^+} \alpha \Pr(E^c \cap F) \regret_{M}^{\cP^+|(E^c \cap F)}(f).
\end{align}
Suppose, by way of contradiction, that (\ref{equ:seppart2}) does not hold.
%joe24
%Then there exists some decision problem $T$ based on $S$, event
%$E,F\subseteq S$, menu $M$ in $T$, and act $f\in M$, such that 
Then for some decision problem $D$ based on $(S,\Sigma)$, events
$E,F\subseteq S$, menu $M$ in $D$, and act $f\in M$ such that 
$\regret_{M}^{\cP^+|(E\cap F)}(f) > 0 $ and
$$\regret_{M}^{\cP^+|F} (f)\leq \sup_{(\Pr,\alpha) \in \cP^+} \alpha \Pr(E^c \cap F) \regret_{M}^{\cP^+|(E^c \cap F)}(f).$$
We now define a new decision problem $D'$ based on $(S,\Sigma)$.
%joe27
%The idea is that in $D'$, we will have a plan $a_{f}$ such that $a_{f}
The idea is that in $D'$, we have a plan $a_{f}$ such that $a_{f}
\notin C^{\regret,\cP^+}_{{M,E\cap F}}(M')$ but $a_{f} \in
C^{\regret,\cP^+}_{{M,F}}(M')$ for some $M'\subseteq M$. 

%joe24: moved this up from below
%joe27
%Construct $D'$ in the exact same way as before.
Construct $D'$ exactly as before.
That is, in the first step, nature chooses a state $s\in S$, and in the second step, the DM chooses from the set of actions/plans $M' = \{ a_g : g \in M \} \cup \{ a_{g'}\}$.
%joe24: removed paragraph break
%
%For each $g\in M$, define the actions $a_g$ as we have done before. 
For each $g\in M$, define the actions $a_g$ as before. 
We define a new action $a_{g'}$ with utilities
\begin{align*}
u(a_{g'}(s)) = \begin{cases}
\sup_{g\in M} u(g(s)), \text{ if } s\in E\cap F \\
\sup_{g\in M} u(g(s)) - \regret_{M}^{\cP^+|(E^c \cap F)}(f), \text{ if } s\in E^c \cap F.
\end{cases}
\end{align*}

It is almost immediate from the definition of $a_{g'}$ that we have
\begin{align*}
\regret_{M'}^{\cP^+|F}(a_{g'})
& = \sup_{(\Pr,\alpha) \in\cP^+} \alpha \left(\Pr(E^c \cap
F)\regret_{M}^{\cP^+|(E^c \cap F)}(f) \right) \geq
\regret_{M'}^{\cP^+|F}(a_f).
\end{align*}
However, we also have
\begin{align*}
\regret_{M'}^{\cP^+|(E\cap F)}(a_g) = 0 < \regret_{M'}^{\cP^+|(E\cap F)}(a_f).
\end{align*}
Therefore, we have $a_{f} \notin C^{\regret,\cP^+}_{{M',E\cap F}}(\{a_{g'}, a_f\})$ but $a_{f}\in C^{\regret,\cP^+}_{{M',F}}(\{a_{g'}, a_f\})$, violating Axiom~\ref{axiom:DC-M}.

%
%\noindent {{$\mathbf{(c) \Rightarrow (b) }$}.} %used to be 4=>2
%
%Now, we show that SEP for the initial menus of all decision problems
We next show that (c) implies (b).
Specifically, we show that SEP for the initial menus of all decision problems
$D$ is sufficient to guarantee that 
Axioms~\ref{axiom:DC-M}--\ref{axiom:sens}
hold for menu $M$ and all choice sets $M' \subseteq M$.
It is easy to check that Axioms~\ref{axiom:offE}--\ref{axiom:sens} hold
for MWER, so we need to check only Axiom~\ref{axiom:DC-M}.

%joe24
%Consider an arbitrary decision problem $T$, arbitrary menu $M$ in $T$,
Consider an arbitrary decision problem $D$, menu $M$ in $D$,
$M'\subseteq M$, and a plan $f$ in $M'$. 
%joe24
%We will now construct a new decision problem $T'$ such that the
%initial menu of $T'$ will be ``equivalent'' to $M$ for our purpose. 
We construct a new decision problem $D'$ such that the
initial menu of $D'$ is ``equivalent'' to $M$.
%joe27: an on paragraph
%
%joe24: you can't define the actions before you define the decision problem
%For all $g \in M$, define actions $a_g$ as we have done above.
Just as before, let $D'$ be a two-stage decision problem where in the first stage, nature chooses $s\in S$, and in the second stage, the DM chooses from the set $M_0 = \{ a_g : g \in M\}$,
%Sam29
where $a_g$ is defined as before.
%joe24
%Again, we will associate each action $a_g$ with the plan that chooses
Again, we associate each action $a_g$ with the plan that chooses
$a_g$ in $D'$. 
$M_0$ is then ``equivalent'' to $M$ in the sense that
\begin{align*}
\regret_{M_0}^{\cP^+|(E\cap F)}(a_g) & = \regret_{M}^{\cP^+|(E\cap F)}(g), \\
\regret_{M_0}^{\cP^+|(E^c \cap F)}(a_g) & = \regret_{M}^{\cP^+|(E^c \cap F)}(g), \text{ and} \\
\regret_{M_0}^{\cP^+|F}(a_g) & = \regret_{M}^{\cP^+|F}(g).
\end{align*}
Suppose that $f \in C^{\regret,\cP^+}_{M,E\cap F}(M')$ and $f \in C^{\regret,\cP^+}_{M,E^c \cap F}(M')$.
This means that for all $g \in M'$, we have
$\regret_{M_0}^{\cP^+|(E\cap F)}(a_f) \leq \regret_{M_0}^{\cP^+|(E\cap F)}(a_g)$ and $\regret_{M_0}^{\cP^+|(E^c \cap F)}(a_f) \leq \regret_{M_0}^{\cP^+|(E^c \cap F)}(a_g)$.
Therefore, we have
\begin{align*}
\regret_{M}^{\cP^+|F}(f) & =
\regret_{M_0}^{\cP^+|F}(a_f) \\
& = \sup_{(\Pr,\alpha) \in \cP^+} \alpha \left( \Pr(E\cap F) \regret_{M_0}^{\cP^+|(E\cap F)}(a_f) +  \Pr(E^c \cap F) \regret_{M_0}^{\cP^+|(E^c \cap F)}(a_f) \right) \\
& \leq \sup_{(\Pr,\alpha) \in \cP^+} \alpha \left( \Pr(E\cap F)\regret_{M_0}^{\cP^+|(E \cap F)}(a_g) +  \Pr(E^c \cap F) \regret_{M_0}^{\cP^+|(E^c \cap F)}(a_g) \right) \\
& = \regret_M^{\cP^+|F}(g),
\end{align*}
%joe24
%which means $f \in C^{\regret,\cP^+}_{M,F}(M')$, as required.
which means that $f \in C^{\regret,\cP^+}_{M,F}(M')$, as required.

Next, consider an act $g \in M'$ such that $g \notin
C^{\regret,\cP^+}_{M,E\cap F}(M')$. 
This means that
$\regret_{M_0}^{\cP^+|(E \cap F)}(a_f) < \regret_{M_0}^{\cP^+|(E \cap F)}(a_g)$ and $\regret_{M_0}^{\cP^+|(E^c \cap F)}(a_f) \leq \regret_{M_0}^{\cP^+|(E^c \cap F)}(a_g)$.
%joe24: I found this hard to parse.  Clarified
%Let $$({\alpha_{\Pr}}^*,{\Pr}^*) = \arg\sup_{(\alpha_{\Pr},{\Pr}) \in
% = \arg\sup_{(\alpha_{\Pr},{\Pr}) \in
%  \cP^+} \alpha_{\Pr} \left( \Pr(E\cap F)\regret_{M_0}^{\cP^+|(E \cap
%  F)}(a_g) +  \Pr(E^c \cap F) \regret_{M_0}^{\cP^+|(E^c \cap F)}(a_g)
%\right).$$ 
Let $(\alpha_{\Pr^*},\Pr^*) \in C(\cP^+)$ be such that
$$\begin{array}{ll} 
& \alpha_{\Pr^*} (\Pr^*(E\cap F)\regret_{M_0}^{\cP^+|(E \cap
  F)}(a_g) +  \Pr^*(E^c \cap F) \regret_{M_0}^{\cP^+|(E^c \cap F)}(a_g)\\ 
 = &\sup_{(\Pr,\alpha) \in
  \cP^+} \alpha \left( \Pr(E\cap F)\regret_{M_0}^{\cP^+|(E \cap
  F)}(a_g) +  \Pr(E^c \cap F) \regret_{M_0}^{\cP^+|(E^c \cap F)}(a_g)
\right).\end{array}$$ 
%joe24*: All we know is that $({\alpha_{\Pr}}^*,{\Pr}^*)$ is in
%$C(\cP^+)$, not \cP^+.  I don't think that this should matter though.
%$({\alpha_{\Pr}}^*,{\Pr}^*)$ is in $\cP^+$, since we have assumed that
Such a pair $(\alpha_{\Pr^*},\Pr^*)$ exists, since we have assumed that
$C(\cP^+)$ is closed. 
If $\alpha_{\Pr^*}{\Pr}^*(E\cap F) = 0$, then
$\regret_{M_0}^{\cP^+|F}(a_g) = \sup_{(\Pr,\alpha) \in \cP^+} \alpha \left(
\Pr(E^c \cap F) \regret_{M_0}^{\cP^+|(E^c \cap F)}(a_g)
\right)$. 
By separability, it must be the case that $\regret_{M_0}^{\cP^+|(E \cap F)}(a_g) = 0$, contradicting our assumption that $0 \leq \regret_{M_0}^{\cP^+|(E \cap F)}(a_f) < \regret_{M_0}^{\cP^+|(E \cap F)}(a_g)$.
Therefore, it must be that $\alpha_{\Pr^*}\Pr^*(E\cap F) > 0$, and
\begin{align*}
\regret_M^{\cP^+|F}(f) & =
\regret_{M_0}^{\cP^+|F}(a_f) \\
& = \sup_{(\Pr,\alpha) \in \cP^+} \alpha \left( \Pr(E\cap F) \regret_{M_0}^{\cP^+|(E \cap F)}(a_f) +  \Pr(E^c \cap F) \regret_{M_0}^{\cP^+|(E^c \cap F)}(a_f) \right) \\
& < \sup_{(\Pr,\alpha) \in \cP^+} \alpha \left( \Pr(E\cap F)\regret_{M_0}^{\cP^+|(E \cap F)}(a_g) +  \Pr(E^c \cap F) \regret_{M_0}^{\cP^+|(E^c \cap F)}(a_g) \right) \\
& = \regret_M^{\cP^+|F}(g),
%joe24
%\end{align*} which means $g \notin C^{\regret,\cP^+}_{M,F}(M')$.
\end{align*} which means that $g \notin C^{\regret,\cP^+}_{M,F}(M')$.
\end{proof}

\section{Proof of Theorem~\ref{thm:rich} }
%joe16
%joe24
%Again, we restate the theorems for the readers' convenience.
%joe24: moved theorem statement

%joe24
%The proof of the theorem makes use of the following lemma.
To prove Theorem~\ref{thm:rich}, we need the following lemma.
\begin{lemma}\label{lemma:C}
For all utility functions $u$, sets $\cP^+$ of weighted probabilities, acts $f$, and menus $M$ containing
$f$, $\regret_{M}^{\cP^+}(f) = \regret_{M}^{C(\cP^+)}(f)$.
\end{lemma}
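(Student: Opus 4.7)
The plan is to prove the equality by establishing the two inequalities separately, using the correspondence between weighted probability measures $(\Pr,\alpha)$ and subprobability measures $\alpha\Pr$, together with the nonnegativity of regret.

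For the inequality $\regret_{M}^{\cP^+}(f) \leq \regret_{M}^{C(\cP^+)}(f)$, I would observe that for every pair $(\Pr,\alpha) \in \cP^+$, the subprobability measure $p = \alpha\Pr$ lies in $C(\cP^+)$ (taking the dominating element to be $\alpha\Pr$ itself). Then $\alpha\sum_s \Pr(s)\regret_M(f,s) = \sum_s p(s)\regret_M(f,s)$, so every term in the supremum defining $\regret_{M}^{\cP^+}(f)$ appears as a term in the supremum defining $\regret_{M}^{C(\cP^+)}(f)$.

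For the reverse inequality $\regret_{M}^{C(\cP^+)}(f) \leq \regret_{M}^{\cP^+}(f)$, I would use the key fact that $\regret_M(f,s) = \sup_{g\in M} u(g(s)) - u(f(s)) \geq 0$ for every state $s$. Given any $p \in C(\cP^+)$, there exists $(c,\Pr) \in \cP^+$ with $p \leq c\Pr$ pointwise, so by nonnegativity of $\regret_M(f,\cdot)$,
\[
\sum_{s\in S} p(s)\regret_M(f,s) \;\leq\; \sum_{s\in S} c\Pr(s)\regret_M(f,s) \;=\; c\sum_{s\in S}\Pr(s)\regret_M(f,s).
\]
Taking the supremum of the right-hand side over $(\Pr,c)\in\cP^+$ gives $\regret_{M}^{\cP^+}(f)$, and taking the supremum of the left-hand side over $p\in C(\cP^+)$ gives $\regret_{M}^{C(\cP^+)}(f)$.

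There is no real obstacle here; the only thing to be mindful of is the assumption, made near the start of Section~\ref{sec:MWER}, that $\sup_{g\in M} u(g(s))$ is finite for every menu $M$, which ensures $\regret_M(f,s)$ is well-defined and nonnegative so that the monotonicity step $p\leq c\Pr \Rightarrow \sum p(s)\regret_M(f,s) \leq c\sum \Pr(s)\regret_M(f,s)$ is legitimate. With these two directions established, the lemma follows.
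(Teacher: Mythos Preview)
Your proof is correct and follows essentially the same approach as the paper: identify each $(\Pr,\alpha)\in\cP^+$ with the subprobability $\alpha\Pr\in C(\cP^+)$, and use nonnegativity of $\regret_M(f,\cdot)$ to see that passing to dominated subprobabilities cannot increase the supremum. The paper simply compresses your two inequalities into a single chain of equalities, but the underlying reasoning is identical.
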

\begin{proof}Simply observe that
\begin{align*}
\regret_M^{\cP^+}(f) &= \sup_{(\Pr,\alpha) \in \cP^+}\left( \alpha \sum_{s\in S}\Pr(s)\regret_M(f,s)  \right) \\
& = \sup_{(\Pr,\alpha) \in \cP^+}\left(  \sum_{s\in
  S}\alpha \Pr(s)\regret_M(f,s)  \right)\\
%joe23: while I believe the result, I don't see how the next line
%follow from the previous one
%Sam28: filled in step
%joe27*: you've suddenly switched from writing \\alpha_\Pr to
%\alpha.  You actually do this in a number of places.  I think we
%should be consistent.  I'll mark a few other places that you do this,
%but please check
%joe28*: did you check this, Sam?
%Sam35 I wanted to change everything to $\alpha$ instead of $\alpha_{\pr}$, since the latter is more cumbersome
&= \sup_{\{ p :\, p \leq \alpha \Pr, (\Pr,\alpha) \in \cP^+\}}\left(
\sum_{s\in 
  S} p(s) \regret_M(f,s)  \right)\\
%joe24*: I'm sorry, I don't see why this step helped (or why it is
%even relevant).  It seems that you're dealing with downard closure,
%which we're not even assuming.  How are you dealing with convex combinations?  
& = \regret_{M}^{C(\cP^+)}(f),
\end{align*}by definition.
\end{proof}

%joe27: we need to reference this.  
The next lemma uses an argument almost identical to one used in Lemma
7 of \cite{HalpernLeung2012}.  
%Sam31
\begin{lemma}\label{lem:hyperplane}
%Sam34 removed 'closed' assumption 
%joe28
%If $C(\cP^+|^{\chi} F)$ is convex, and $q$ is a subprobability on $F$
If $C(\cP^+|^{\chi} F)$ is convex and $q$ is a subprobability on $F$
not in $\overline{C(\cP^+|^{\chi}F)}$, then there exists a non-negative
vector $\theta$ such that for all $(\Pr,\alpha) \in \cP^+|^{\chi}F$,
we have 
$$\sum_{s \in F} \alpha \Pr(s) \theta(s) < \sum_{s\in F} q(s) \theta(s).$$
\end{lemma}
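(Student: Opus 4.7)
The approach is a standard finite-dimensional separating hyperplane argument, combined with a positive-part trick that exploits the ``downward-closed'' structure of $C(\cP^+|^{\chi}F)$. Throughout, I identify subprobabilities on $F$ with vectors in $\R^{|F|}$.

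First, I would apply a separating hyperplane theorem. By hypothesis $C(\cP^+|^{\chi}F)$ is convex, hence $\overline{C(\cP^+|^{\chi}F)}$ is a closed convex set. Since $q \notin \overline{C(\cP^+|^{\chi}F)}$, there exist a vector $\theta \in \R^{|F|}$ and a scalar $c \in \R$ such that
\[
\sum_{s \in F}\theta(s)\, p(s) \;\le\; c \;<\; \sum_{s \in F}\theta(s)\, q(s)
\qquad \text{for all } p \in \overline{C(\cP^+|^{\chi}F)}.
\]
Because $\vec{0} \le \alpha\Pr$ for every $(\Pr,\alpha) \in \cP^+|^{\chi}F$, we have $\vec{0} \in C(\cP^+|^{\chi}F)$, and plugging $p = \vec{0}$ above gives $c \ge 0$.

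Second, I would upgrade $\theta$ to a nonnegative separator by taking its positive part $\theta^+(s) := \max(\theta(s),0)$. The key observation is that $C(\cP^+|^{\chi}F)$ is downward-closed: if $0 \le p' \le p$ pointwise and $p \le \alpha \Pr$ for some $(\Pr,\alpha) \in \cP^+|^{\chi}F$, then $p' \le \alpha\Pr$ as well, so $p' \in C(\cP^+|^{\chi}F)$. For any $p \in C(\cP^+|^{\chi}F)$, define $p^+(s) = p(s)$ if $\theta(s) \ge 0$ and $p^+(s) = 0$ otherwise. Then $p^+ \le p$, so $p^+ \in C(\cP^+|^{\chi}F)$, and
\[
\sum_{s \in F}\theta^+(s)\, p(s) \;=\; \sum_{s \in F}\theta(s)\, p^+(s) \;\le\; c.
\]
Since $q \ge 0$ and $\theta^+ \ge \theta$ pointwise, we also have $\sum_{s\in F}\theta^+(s)q(s) \ge \sum_{s\in F}\theta(s)q(s) > c$. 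Thus $\theta^+$ is a nonnegative separator.

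Finally, for each $(\Pr,\alpha) \in \cP^+|^{\chi}F$, the subprobability $\alpha\Pr$ lies in $C(\cP^+|^{\chi}F)$, and so
\[
\sum_{s \in F} \alpha \Pr(s)\, \theta^+(s) \;\le\; c \;<\; \sum_{s \in F} q(s)\, \theta^+(s),
\]
which is the desired inequality, with $\theta := \theta^+ \ge 0$. The only genuinely nontrivial step is the second one: the raw separating hyperplane theorem gives an arbitrary $\theta$, and the lemma insists on nonnegativity, but downward-closedness of $C(\cP^+|^{\chi}F)$ makes it routine to replace $\theta$ by $\theta^+$ without breaking the separation.
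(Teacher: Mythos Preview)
Your argument is correct. Both your proof and the paper's hinge on the separating hyperplane theorem followed by a step to force the separator $\theta$ to be nonnegative, but the mechanism for that second step differs.

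The paper enlarges the set before separating: it defines $C'(\cP^+|^{\chi}F) = \{p \in \R^{|S|} : p \le \alpha\Pr \text{ for some } (\Pr,\alpha) \in \cP^+|^{\chi}F\}$, dropping the constraint $p \ge \vec{0}$, and applies the separating hyperplane theorem to $\overline{C'(\cP^+|^{\chi}F)}$ and $\{q\}$. Because $C'$ is unbounded below in every coordinate, any separating $\theta$ is automatically nonnegative (a negative coordinate of $\theta$ would let one push $\theta \cdot p$ above $c$). This requires checking, implicitly, that $C'$ is still convex and that $q \notin \overline{C'}$; both follow from convexity of $C$ and from $q \ge 0$, respectively.

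You instead separate $q$ from the original set $C(\cP^+|^{\chi}F)$ and then post-process $\theta$ to $\theta^+$, using downward-closedness of $C$ within the nonnegative orthant to show the positive part still separates. This avoids introducing the auxiliary set $C'$ and the attendant checks. The two tricks are dual to one another and equally standard; yours is arguably a bit more self-contained here.
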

\begin{proof}
%joe27
%For any weighted probabilities $\cP^+$, let $C'(\cP^+) = \{ p : p \in
Given a set $\cP^+$ of  weighted probabilities, let $C'(\cP^+) = \{ p : p  \in 
\R^{|S|} \text{ and } p \leq \alpha \Pr \text{ for some } (\Pr,\alpha)
\in \cP^+ \}$. 
%joe27
%That is, let $C'(\cP^+)$ be the extension of $C(\cP^+)$ into the
%negative quadrants.  
Note that an element $q \in C'(\cP^+)$ may not be a subprobability
measure, since we do not require that $q(s) \ge 0$.
%Joe27*: unfortunately, $C'(\cP^+|^{\chi} F)$ may not be closed.  See below.
%Sam34 added closure of C'
Since $\overline{C'(\cP^+|^{\chi} F)}$ and $\{{q}\}$ are closed, convex, and
disjoint, and $\{{q}\}$ is compact, the separating hyperplane
%joe27
%theorem \cite{Rockafellar} says that there exists $\theta \in
theorem \cite{Rockafellar} says that there exist $\theta \in
\R^{|S|}$ and $c\in \R$ such 
that  
\begin{align}\label{equ:separating}
\theta \cdot {p}< c \text{ for all } {p}\in \overline{C'(\cP^+|^{\chi} F)} \text{, and } \theta \cdot {q} > c.
\end{align}
%joe27: perhaps reinstate subscript \Pr on alpha?
Since $\{ \alpha\Pr : (\Pr,\alpha) \in \cP^+|^{\chi} F \} \subseteq
\overline{C'(\cP^+|^{\chi} F)}$, we have that for all $(\Pr,\alpha) \in
\cP^+|^{\chi}F$, 
$$\sum_{s \in F} \alpha \Pr(s) \theta(s) < \sum_{s\in F} q(s) \theta(s).$$
Now we argue that it must be the case that $\theta(s) \geq 0$ for all
$s\in F$. 
%joe27: I know what you mean, but it comes out of the blue here
%(so that $\theta$ corresponds to the state-wise regrets of an
%act with respect to a menu).  
Suppose that
$\theta(s') < 0 $ for some $s'\in F$. 
%joe27
%Let $p^* \in C'(\cP^+)$ be defined by 
%joe28
%Define $p^* \in C'(\cP^+|^\chi F)$ by setting
Define $p^*$ by setting
\begin{align*}
{p^*}(s) = 
\begin{cases}
0 \text{, if } s\neq s' \\
\frac{ -|c| }{|\theta(s')|} \text{, if } s = s'.
\end{cases}
\end{align*}
%joe28
%We have defined $p^{*}$ such that $p^{*} \leq \vec{0}$, since for all
Note that $p^{*} \leq \vec{0}$, since for all
$s\in S$, $p^{*}(s) \leq 0$. 
%joe28
%Therefore, $p^{*}$ is in $C'(\cP^+|^\chi F)$.
Therefore, $p^{*} \in C'(\cP^+|^\chi F)$.

Our definition of $p^{*}$ also ensures that $\theta \cdot {p^{*}} = \sum_{s\in S} p^{*}(s)  \theta(s) = p^{*}(s') \theta(s') =  |c| \geq  c $.
This contradicts (\ref{equ:separating}), which says that $\theta \cdot {p} < c \text{ for all } {p}\in C'(\cP^+|^{\chi} F)$.
Thus it must be the case that  $\theta(s) \geq 0$ for all $s\in S$.
\end{proof}

%joe24: moved theorem statement here.  Again, it will have to be rewritten
We are now ready to prove Theorem~\ref{thm:rich}, which we restate here.
\rethm{thm:rich}
If $C(\cP^+)$ is closed and convex, then Axiom~\ref{axiom:DC-M} holds
for the family of choices $C_{M}^{\regret,\cP^+ |^{\chi} E}$ if and
only if $\cP^+$ is $\chi$-rectangular. 
\erethm

We prove the two directions of implication in the theorem separately.
%Sam34 added
Note that the proof that $\chi$-rectangularity implies
Axiom~\ref{axiom:DC-M} does not require $C(\cP^+)$ to be convex. 

\begin{claim}
If $\cP^+$ is $\chi$-rectangular, then Axiom~\ref{axiom:DC-M} holds for the family of choices $C_{M}^{\regret,\cP^+ |^{\chi} E}$.
\end{claim}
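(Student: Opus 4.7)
The plan is to invoke Theorem~\ref{thm:equivalence}, which reduces the claim to showing that $\chi$-rectangularity implies that the weighted regret of every act $f$ with respect to every menu $M$ (in every decision problem based on $(S,\Sigma)$) is separable. Fix measurable sets $E, F$ and $f \in M$, and abbreviate $R_1 := \regret_M^{\cP^+|^\chi(E\cap F)}(f)$ and $R_2 := \regret_M^{\cP^+|^\chi(E^c \cap F)}(f)$; both are non-negative.

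I would prove the equality part of SEP as two matching inequalities. For the $\geq$ direction, given any $(\Pr_1,\alpha_1),(\Pr_2,\alpha_2),(\Pr_3,\alpha_3)\in\cP^+$ with $\Pr_1(E\cap F)>0$ and $\Pr_2(E^c\cap F)>0$, condition (a) of $\chi$-rectangularity places the subprobability
\[
q = \alpha_3\Pr_3(E\cap F)\,\alpha^\chi_{1,E\cap F}\,\Pr_1|(E\cap F) + \alpha_3\Pr_3(E^c\cap F)\,\alpha^\chi_{2,E^c\cap F}\,\Pr_2|(E^c\cap F)
\]
inside $\overline{C(\cP^+|^\chi F)}$. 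Since $\regret_M(f,\cdot)\geq 0$, Lemma~\ref{lemma:C} (combined with continuity in $q$) gives $\sum_s q(s)\regret_M(f,s)\leq \regret_M^{\cP^+|^\chi F}(f)$; taking sup over $\Pr_1$ collapses the first inner sum into $R_1$, sup over $\Pr_2$ gives $R_2$, and finally sup over $\Pr_3$ yields the desired lower bound. For the matching $\leq$ direction I would apply condition (c) to the non-negative vector $\theta(s) := \regret_M(f,s)$, noting that $\overline{E}_{\cP^+|^\chi G}(\theta)=\regret_M^{\cP^+|^\chi G}(f)$ for every measurable $G$, so condition (c) delivers the inequality directly.

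For the strict inequality (\ref{equ:sep2}), suppose $R_1>0$; the goal is
\[
\regret_M^{\cP^+|^\chi F}(f) \;>\; K := \sup_{(\Pr,\alpha)\in\cP^+} \alpha\Pr(E^c\cap F)\,R_2.
\]
If $R_2=0$ the argument is immediate from condition (b) (which under $R_1>0$ forces some $(\Pr,\alpha)\in\cP^+|^\chi F$ with $\alpha\Pr(E\cap F)>0$, hence $\regret_M^{\cP^+|^\chi F}(f)\geq\alpha\Pr(E\cap F)R_1>0=K$). Otherwise I would pick $\delta>0$ strictly smaller than $R_1/R_2$; condition (b) then supplies $(\Pr,\alpha)\in\cP^+|^\chi F$ with $\alpha(\delta\Pr(E\cap F)+\Pr(E^c\cap F))>K/R_2$. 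Feeding this measure into the already-established equality yields
\[
\regret_M^{\cP^+|^\chi F}(f)\;\geq\;\alpha\Pr(E\cap F)R_1+\alpha\Pr(E^c\cap F)R_2\;>\;\alpha\delta\Pr(E\cap F)R_2+\alpha\Pr(E^c\cap F)R_2\;>\;K,
\]
exactly what is required.

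The routine bookkeeping is the linear decomposition of the regret in the equality step; the main obstacle I anticipate is the asymmetry between the sup over $\cP^+$ that appears in SEP and the sup over $\cP^+|^\chi F$ appearing in conditions (b) and (c), and in particular keeping track of how the updated weights $\alpha^\chi_{\cdot,\cdot}$ differ between the prior-by-prior and likelihood regimes. I expect that under both updating rules the closure assumption on $C(\cP^+)$ lets a sup-approximation argument convert the two formulations into each other, but that compatibility check is where the careful work lies.
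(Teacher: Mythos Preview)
Your proposal is correct and follows essentially the same route as the paper: reduce to SEP via Theorem~\ref{thm:equivalence}, establish the equality using condition~(a) together with Lemma~\ref{lemma:C} for the $\geq$ direction and condition~(c) applied to $\theta=\regret_M(f,\cdot)$ for the $\leq$ direction, and then derive the strict inequality from condition~(b) with the same case split on whether $R_2=0$. The only cosmetic difference is that the paper applies condition~(b) directly with $\delta=R_1/R_2$ rather than choosing $\delta<R_1/R_2$ first, and in the $R_2=0$ case the paper appeals to the already-proved equality and $\ucP(E\cap F)>0$ rather than re-invoking condition~(b); your concern about the $\cP^+$ versus $\cP^+|^\chi F$ sup is noted but the paper does not treat it more carefully than you do.
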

\begin{proof}
By Theorem~\ref{thm:equivalence}, it suffices to show that SEP holds.
For the first part of SEP, we must show that
\begin{align}\label{equ:SEPCondRect}
%joe27
\regret_M^{\cP^+|^{\chi}F} (f) = \sup_{(\Pr,\alpha_{}) \in \cP^+|^{\chi}F}
  \alpha_{} \left( \Pr(E\cap F) \regret_{M}^{\cP^+|^{\chi}(E \cap
    F)}(f) + \Pr(E^c \cap F) \regret_{M}^{\cP^+|^{\chi}(E^c \cap
    F)}(f)\right). 
\end{align}

%joe27*: This argument seems incorrect, since I see no reason to
%believe that if \cup(E  \cap F) =0 then \cup(F) =1.  Fortunately,
%with the change in the definition, you don't have to check this at all.
\commentout{
In the case of likelihood updating ($|^l$), we have to check the case
where $\ucP(E \cap F) = 0$ or $\ucP(E^c \cap F) = 0$. 
If $\ucP(E \cap F) = 0$, then $\ucP(E^c \cap F) = 1$ and (\ref{equ:SEPCondRect}) reduces to
$$\begin{array}{ll}
& \regret_M^{\cP^+|^lF} (f) = \sup_{(\Pr,\alpha) \in \cP^+|^lF} \alpha \Pr(E^c \cap F) \sup_{(\Pr_1,\alpha_1) \in \cP^+|^l E^c \cap F}\alpha_{1} \sum_{s \in E^c \cap F} {\Pr}_1(s) \regret_{M}(f,s) \\
=&\regret_M^{\cP^+|^lF} (f) = \sup_{(\Pr_1,\alpha_1) \in \cP^+|^l E^c \cap F}\alpha_{1} \sum_{s \in E^c \cap F} {\Pr}_1(s) \regret_{M}(f,s),
\end{array}$$
which is just the definition of regret.
%Sam33
Similarly, if $\ucP(E^c \cap F) = 0$, then $\ucP(E\cap F) = 1$, and
(\ref{equ:SEPCondRect}) reduces to 
$$\begin{array}{ll}
& \regret_M^{\cP^+|^lF}(f) = \sup_{(\Pr,\alpha) \in \cP^+|^lF} \alpha \Pr(E\cap F) \sup_{(\Pr_1,\alpha_1) \in \cP^+|^l E \cap F}\alpha_{1} \sum_{s \in E \cap F} \Pr_1(s) \regret_{M}(f,s)) \\
=&\regret_M^{\cP^+|^lF}(f) = \sup_{(\Pr_1,\alpha_1) \in \cP^+|^l E\cap F}\alpha_{1} \sum_{s \in E\cap F} \Pr_1(s) \regret_{M}(f,s)),
\end{array}$$
which is again just the definition of regret.

We continue with the general $|^\chi$ updating.
} %joe27: end{commentout}
%joe27
%If we unroll the definitions in (\ref{equ:SEPCondRect}), we find that
Unwinding the definitions, 
(\ref{equ:SEPCondRect}) is equivalent to 
$$\begin{array}{ll}
& \regret_M^{\cP^+|^{\chi}F} (f) \\
= & \sup_{(\Pr_3,\alpha_3) \in \cP^+|^{\chi}F} \alpha_{\Pr_3} \left(
\Pr_3(E\cap F) \sup_{(\Pr_1,\alpha_1) \in
  \cP^+|^{\chi}F}{\alpha_{1,E\cap F}^{\chi}} \sum_{s \in E\cap F}
\Pr_1(s|(E \cap F)) \regret_{M}(f,s))  \right. \\ 
& \left. + \Pr_3(E^c \cap F) \sup_{(\Pr_2,\alpha_2) \in \cP^+|^{\chi}F} \alpha_{2,E^c\cap F}^{\chi} \sum_{s\in E^c \cap F} \Pr_2(s|(E^c \cap F)) \regret_{M}(f,s)) \right)
.\end{array}$$
%joe27*: This may be a significant problem: you've assumed that
%C(\P^+) is closed.  It does *not* follow that c(\P^+|F) is closed.
%Suppose, for example all weights are 1, S = {s1, s2, s3}, E =
%{s1,s2}, and \P^+ consists of all measures of the form (\alpha,
%\alpha^2, 1- \alpha - \alpha^2), for \alpha \le
%1/2.  This is certainly a closed set.  Now \P^|E has measures of the form
%(\alpha/\alpha-\alpha^2, \alpha^2/\alpha - \alpha^2) for \alpha \on
%{0,1/2].  As \alpha -> 0, we approach (1,0), but this measure is not
%in the set.  YOu might be able to solve this problem by assuming that
%C(\P^+|F) is closed for all F, but as the exmaple above shows, 
%that's a rather unnatural assumption. Didn't this problem occur for
%Epstein and LeBreton too?  If so, how did they deal with it?
%joe27: \Pr_1, \Pr_2 \Pr_3 are overloaded
%we know that the $\sup$'s in the above expression are taken on by some
%$(\Pr_1,\alpha_1),(\Pr_2,\alpha_2),(\Pr_3,\alpha_3) \in C(\cP^+|^{\chi}F)$.  
%Sam34
%joe28
%The $\sup$'s in the expression above are taken on by some
The $\sup$s in this expression are taken on by some
$(\Pr_1^*,\alpha_1^*),(\Pr_2^*,\alpha_2^*),(\Pr_3^*,\alpha_3^*) \in
\overline{\cP^+|^{\chi}F}$.  
%joe27*: I don't think you need to have c(\P^+|F) being closed in this
%argument.  It's enough that, for all \epsilon, you can find \Pr_1,
%\Pr_2, \Pr_3 %that get you within \epsilon of the sup.  The argument
%should still go through.
%We clearly must have 
%$(\Pr_1^*,\alpha_1^*),(\Pr_2^*,\alpha_2^*),(\Pr_3^*,\alpha_3^*) \in
%\cP^+|^{\chi}$, 
%since ``larger'' choices can only increase the value of the expression. 
%joe28: removed paragraph break
%
By $\chi$-rectangularity, 
%joe27
%if $(\Pr_1,\alpha_1),(\Pr_2,\alpha_2),(\Pr_3,\alpha_3) \in \%cP^+|^{\chi}F$, 
we have that for all $(\Pr_1,\alpha_1),(\Pr_2,\alpha_2),(\Pr_3,\alpha_3) \in \cP^+|^{\chi}F$,
\begin{align}\label{equ:rectangularity1}
\alpha_{{\Pr}_3} {{\Pr}_3(E \cap F)}{\alpha_{1,E\cap
    F}^{\chi}}{\Pr}_1|(E \cap F) + \alpha_{{\Pr}_3}{{\Pr}_3(E^c \cap
  F)} {\alpha_{2,E^c \cap F}^{\chi}}{\Pr}_2|(E^c \cap F) 
 \in \overline{C(\cP^+|^{\chi}F)}.\end{align} 
%Thus,
%$$\alpha_{{\Pr}_3^*} {{\Pr}_3^*(E \cap F)}{\alpha_{1,E\cap
    %F}^{\chi}}{\Pr}^*_1|(E \cap F) + \alpha_{{\Pr}^*_3}{{\Pr}^*_3(E^c \cap
  %F)} {\alpha_{2,E^c \cap F}^{\chi}}{\Pr}^*_2|(E^c \cap F) 
 %\in C(\cP^+|^{\chi}F).$$ 
%Sam34
Thus, for all $\epsilon > 0$, 
$$\begin{array}{ll}
& \regret_M^{\cP^+|^{\chi}F}(f)\\
= &\regret_{M}^{C(\cP^+|^{\chi}F)}(f) \ \ \ \ \mbox{[by Lemma~\ref{lemma:C}]}\\ 
%joe27
%\geq & \alpha_{3} \left( \Pr_3(E\cap F) {\alpha_{1,E\cap F}^{\chi}}
%\sum_{s \in E\cap F} \Pr_1(s|(E \cap F)) \regret_{M}(f,s))
%& \ \ \ \ \left. + \Pr_3(E^c \cap F) {\alpha_{2,E^c\cap F}^{\chi}}
%\sum_{s\in E^c \cap F} \Pr_2(s|(E^c \cap F)) \regret_{M}(f,s)) \right)
\geq & \alpha_{3}^* \left( \Pr_3^*(E\cap F) {(\alpha^*_{1,E\cap F})^{\chi}}
\sum_{s \in E\cap F} \Pr_1^*(s|(E \cap F)) \regret_{M}(f,s))
\right. \\  
& \ \ \ \ \left. + \Pr_3^*(E^c \cap F) {(\alpha^*_{2,E^c\cap F})^{\chi}}
\sum_{s\in E^c \cap F} \Pr_2^*(s|(E^c \cap F)) \regret_{M}(f,s)) \right) - \epsilon
\ \mbox{[by (\ref{equ:rectangularity1})]}.
\end{array}$$
Therefore, 
$$\begin{array}{ll}
& \regret_M^{\cP^+|^{\chi}F}(f)\\
\geq & \alpha_{3}^* \left( \Pr_3^*(E\cap F) {(\alpha^*_{1,E\cap F})^{\chi}}
\sum_{s \in E\cap F} \Pr_1^*(s|(E \cap F)) \regret_{M}(f,s))
\right. \\  
& \ \ \ \ \left. + \Pr_3^*(E^c \cap F) {(\alpha^*_{2,E^c\cap F})^{\chi}}
\sum_{s\in E^c \cap F} \Pr_2^*(s|(E^c \cap F)) \regret_{M}(f,s)) \right) \\
= &\sup_{(\Pr_3,\alpha_3) \in \cP^+|^{\chi}F} \alpha_{3} \left( 
\Pr_3(E\cap F) \sup_{(\Pr_1,\alpha_1) \in
\cP^+|^{\chi}F}{\alpha_{1,E\cap F}^{\chi} 
 } \sum_{s \in E\cap F} \Pr_1(s|(E \cap
F)) \regret_{M}(f,s))  \right. \\
& \ \ \ \ \left. + \Pr_3(E^c \cap F) \sup_{(\Pr_2,\alpha_2) \in \cP^+|^{\chi}F}{\alpha_{2,E^c \cap F}^{\chi} } \sum_{s\in E^c \cap F} \Pr_2(s|(E^c \cap F)) \regret_{M}(f,s)) \right)  \\
& \ \ \ \ \ \mbox{[by the choice of $(\Pr^*_i,\alpha^*_i)$, $i = 1,2,3$]} \\
= & \sup_{(\Pr,\alpha) \in \cP^+|^{\chi}F} \alpha \left( \Pr(E\cap F) \regret_{M}^{\cP^+|^{\chi}(E \cap F)}(f) + \Pr(E^c \cap F) \regret_{M}^{\cP^+|^{\chi}(E^c \cap F)}(f)\right),
\end{array}$$
as required.

%joe27: added pararaph break
It remains to show the opposite inequality in (\ref{equ:SEPCondRect}),
namely, that  
$$\regret_M^{\cP^+|^{\chi}F} (f) \le \sup_{(\Pr,\alpha) \in \cP^+|^{\chi}F} \alpha \left(
\Pr(E\cap F) \regret_{M}^{\cP^+|^{\chi}(E \cap F)}(f) + \Pr(E^c \cap F)
\regret_{M}^{\cP^+|^{\chi}(E^c \cap F)}(f)\right).$$
It suffices to note that the right-hand side is equal to 
$$\begin{array}{ll}
& \sup_{(\Pr,\alpha) \in \cP^+|^{\chi}F} \left( {\alpha}\Pr(E \cap F) \sup_{(\Pr_1,\alpha_1) \in \cP^+|^{\chi} F}{\alpha_{1,E\cap F}^{\chi} } \sum_{s \in E\cap F} \Pr_1(s|E \cap F)
\regret_{M}(f,s))  \right. \\ 
& \left. + {\alpha} \Pr(E^c \cap F ) \sup_{(\Pr_2,\alpha_2) \in
    \cP^+|^{\chi} F} \alpha_{2,E^c \cap F}^{\chi} 
  \sum_{s\in E^c \cap F} \Pr_2(s|E^c \cap F) \regret_{M}(f,s)) \right)\\
%joe27
%\geq & \overline{E}_{\cP^+|^{\chi}F}(\regret_M(f,s))\ \ \ \mbox{[by
\geq & \overline{E}_{\cP^+|^{\chi}F}(\regret_M(f))\ \ \ \mbox{[by
    rectangularity]} \\ 
= & \regret_M^{\cP^+|^{\chi}F}(f) . 
\end{array}$$
This completes the proof that (\ref{equ:SEPCondRect}) holds.

%joe27*: Sam, there are lots of problems in this proof.
%First, as before, you can't assume that $C(\cP^+|^{\chi}F)$ is closed,
%Second, since we're taking the sup over (\Pr,\alpha) \in
%\P^+|F, it's strange that you need to argue that the sup over \P^+|F,
%it's strange to have to argue that the sup is not in C(\P^+|F) - 
%\P^+|.  You actually do need the fact that the sup over C(P^+|F) is
%the same as the sup over P^+|F, so you can appeal to the fact that
%C(P^+|F) is closed, but you never make that case.  The last part of
%the argument has problems too.  Fortunately, there is a simple proof,
%under the assumption that $C(\cP^+|^{\chi}F)$ is closed.  
\commentout{
For the second part of SEP, 
if
%joe27
%$\regret_{M}^{\cP^+|^{\chi}(E \cap F)}(f) \neq 0$, we have, by
%(\ref{equ:SEPCondRect}), that 
$\regret_{M}^{\cP^+|^{\chi}(E \cap F)}(f) \neq 0$, by
the first part of SEP we have that 
$$\begin{array}{ll}
&\regret_{M}^{\cP^+|^{\chi}F} (f) \\ 
= & \sup_{(\Pr,\alpha) \in \cP^+|^{\chi}F} \alpha
\left( \Pr(E\cap F) \regret_{M}^{\cP^+|^{\chi}(E \cap F) }(f) + \Pr(E^c \cap F) \regret_{M}^{\cP^+|^{\chi}(E^c \cap F)}(f)\right).
\end{array}$$
We claim that the $\sup$ must be taken on by some $(\Pr,\alpha) \in
\cP^+|^{\chi}F$. Suppose, for the purpose of contradiction, that it
is not.  
Then since $C(\cP^+|^{\chi}F)$ is closed by assumption, the $\sup$ is
taken on by some $\alpha\Pr \in C(\cP^+|^{\chi}F) \backslash \{
\alpha\Pr : (\Pr,\alpha) \in \cP^+|^{\chi}F\}$. 
This means that for some $p \in \{ p : p \leq \alpha \Pr,
(\Pr,\alpha) \in \cP^+|^{\chi}F \} = C(\cP^+|^{\chi}F)$, we have that  
$$\begin{array}{ll}
& p(E\cap F) \regret_{M}^{\cP^+|^{\chi}(E \cap F) }(f) + p(E^c \cap
F) \regret_{M}^{\cP^+|^{\chi}(E^c \cap F)}(f) \\ 
> &  \alpha\Pr(E\cap F) \regret_{M}^{\cP^+|^{\chi}(E \cap F) }(f) +
\alpha\Pr(E^c \cap F) \regret_{M}^{\cP^+|^{\chi}(E^c \cap F)}(f), 
\end{array}$$ for all $(\Pr,\alpha) \in \cP^+|^{\chi}F$.
However, this is impossible. Therefore, the $\sup$ is taken on by
some $(\Pr^*,\alpha^*) \in \cP^+|^{\chi}F$. 

We now return to the proof that the second part of SEP holds.
By $\chi$-rectangularity, for all $(\Pr,\alpha) \in \cP^+$, if $\alpha \Pr(E\cap F) = 0$, then $ \alpha \Pr( E^c \cap F ) < \sup_{(\Pr',\alpha') \in \cP^+|^{\chi}F}\alpha' \Pr'(E^c \cap F)$.
Therefore, for all $(\Pr^*, \alpha^*) \in \cP^+$, either $\alpha^* \Pr^*(E\cap F) > 0$, or $(\Pr^*, \alpha^*)$ cannot take on the $sup$ in $\sup_{(\Pr,\alpha) \in \cP^+|^{\chi}F} \alpha \Pr(E^c\cap F) \regret_{M}^{\cP^+|^{\chi}(E^c \cap F)}(f)$, and therefore we have
$$\begin{array}{ll}
&\regret_{M}^{\cP^+|^{\chi}F} (f) \\ 
= & \sup_{(\Pr,\alpha) \in \cP^+|^{\chi}F} \alpha
\left( \Pr(E\cap F) \regret_{M}^{\cP^+|^{\chi}(E \cap F) }(f) + \Pr(E^c \cap F) \regret_{M}^{\cP^+|^{\chi}(E^c \cap F)}(f)\right) \mbox{[by (\ref{equ:SEPCondRect})]}\\
> &   \sup_{(\Pr,\alpha) \in \cP^+|^{\chi}F} \alpha \Pr(E^c\cap F) \regret_{M}^{\cP^+|^{\chi}(E^c \cap F)}(f),
\end{array}$$
as required by SEP.}
%joe27: \end{commentout}
%joe27*: simpler proof.
%Suppose, by way of contradiction, that 
%$$\begin{array}{ll}
% & \sup_{(\Pr,\alpha) \in \cP^+|^{\chi}F} \alpha
%\left( \Pr(E\cap F) \regret_{M}^{\cP^+|^{\chi}(E \cap F) }(f) +
%\Pr(E^c \cap F) \regret_{M}^{\cP^+|^{\chi}(E^c \cap F)}(f)\right). \\
%= & \sup_{(\Pr,\alpha) \in \cP^+|^{\chi}F} \alpha \Pr(E^c\cap F)
%\regret_{M}^{\cP^+|^{\chi}(E^c \cap F)}(f).
%\end{array}$$
%Sam34  
For the second part of SEP, suppose that $\overline{\cP}^+(E\cap F) > 0$ and 
$\regret_{M}^{\cP^+|^{\chi}(E \cap F)}(f) \neq 0$.
%joe28
%If it were the case that $\regret_{M}^{\cP^+|^{\chi}(E^c \cap F)}(f)
%= 0$, then since $\overline{\cP^+}(E\cap F) > 0$, we have that  
If $\regret_{M}^{\cP^+|^{\chi}(E^c \cap F)}(f) =
0$ then, since $\overline{\cP}^+(E\cap F) > 0$, we have that  
%joe28*: typo, I assume
%$\regret_{M}^{\cP^+|^{\chi}(E \cap F)}(f) > 0 = \sup_{(\Pr,\alpha) \in
$\regret_{M}^{\cP^+|^{\chi} F}(f) > 0 = \sup_{(\Pr,\alpha) \in
  \cP^+|^{\chi}F} \alpha Pr(E^c \cap F) \regret_{M}^{\cP^+|^{\chi}(E^c
%joe28
%  \cap F)}(f)$. 
  \cap F)}(f)$, as desired. 
Otherwise, by part (b) of $\chi$-rectangularity, 
%joe28*: we need (\Pr,\alpha) to be in \cP^+ |^{\chi} F, and to
%have strict inequality, otherwise your proof fails.
%for all $\delta > 0$, there exists $(\Pr,\alpha) \in \cP^+$ such that
%$\alpha( \delta \Pr(E \cap F) + \Pr(E^c \cap F))  \ge
for all $\delta > 0$, there exists $(\Pr,\alpha) \in \cP^+|^{\chi} F$ such that
$\alpha( \delta \Pr(E \cap F) + \Pr(E^c \cap F))  >
\sup_{(\Pr',\alpha') \in \cP^+} \alpha' \Pr'(E^c \cap F)$. 
%Sam35: we're not using part (a) of rectangularity here, are we?
Therefore, using the first part of SEP, we have
%Therefore, using part (a) of rectangularity, we have 
$$\begin{array}{ll}
&\regret_{M}^{\cP^+|^{\chi}(E \cap F)}(f)\\
= &\sup_{(\Pr,\alpha) \in \cP^+|^{\chi}F} \alpha
\left( \Pr(E\cap F) \regret_{M}^{\cP^+|^{\chi}(E \cap F) }(f) +
\Pr(E^c \cap F) \regret_{M}^{\cP^+|^{\chi}(E^c \cap F)}(f)\right)\\
= & \regret_{M}^{\cP^+|^{\chi}(E^c \cap F)}(f) \sup_{(\Pr,\alpha) \in \cP^+|^{\chi}F} \alpha 
\left( \Pr(E\cap F) \frac{\regret_{M}^{\cP^+|^{\chi}(E \cap F) }(f)}{\regret_{M}^{\cP^+|^{\chi}(E^c \cap F)}(f)} +
\Pr(E^c \cap F) \right)\\
%joe28*: Sam, I don't see why the next line follows.  The sup could be
%taken on when \Pr(E \inter F) = 0
%> & \regret_{M}^{\cP^+|^{\chi}(E^c \cap F)}(f) \sup_{(\Pr,\alpha) \in
%  \cP^+|^{\chi}F} \alpha  
%\left( \Pr(E\cap F) \frac{1}{2} \frac{\regret_{M}^{\cP^+|^{\chi}(E
%\cap F) }(f)}{ \regret_{M}^{\cP^+|^{\chi}(E^c \cap F)}(f)} + 
%\Pr(E^c \cap F) \right)\\
%joe28*: > follows using my revised definition of rectngularity
%\geq & \regret_{M}^{\cP^+|^{\chi}(E^c \cap F)}(f) \sup_{(\Pr,\alpha)
> & \regret_{M}^{\cP^+|^{\chi}(E^c \cap F)}(f) \sup_{(\Pr,\alpha)
  \in \cP^+|^{\chi}F} \alpha Pr(E^c \cap F) \ \ \ 
%joe28
%\mbox{[by $\chi$-rectangularity]}\\ 
\mbox{[by part (b) of $\chi$-rectangularity]}\\ 
= & \sup_{(\Pr,\alpha) \in \cP^+|^{\chi}F} \alpha Pr(E^c \cap F) \regret_{M}^{\cP^+|^{\chi}(E^c \cap F)}(f),
\end{array}$$ as required.
%
%By Lemma~\ref{lemma:C} and the fact that \textbf{$C(\cP^+|^{\chi}F)$ is
%closed}, there exists $(\Pr_1,\alpha_1)$ such
%that 
%$$\begin{array}{ll}
 %\alpha_1 \Pr_1(E^c\cap F)
%\regret_{M}^{\cP^+|^{\chi}(E^c \cap F)}(f)
%=  \sup_{(\Pr,\alpha) \in \cP^+|^{\chi}F} \alpha \Pr(E^c\cap F)
%\regret_{M}^{\cP^+|^{\chi}(E^c \cap F)}(f).
%\end{array}$$
%If $\alpha_1 = 0$, then $0 = \sup_{(\Pr,\alpha) \in \cP^+|^{\chi}F}
%\alpha \Pr(E^c\cap F) \regret_{M}^{\cP^+|^{\chi}(E^c \cap F)}(f) < 
%\regret_{M}^{\cP^+|^{\chi}(E \cap F)}(f)$.  If $\alpha_1 > 0$, 
%by $\chi$-rectangularity, $\Pr_1(E \cap F) \ne 0$.  Thus, 
%by the first part of SEP, we have that
%$$\begin{array}{ll}
%= & \sup_{(\Pr,\alpha) \in \cP^+|^{\chi}F} \alpha \Pr(E^c\cap F)
%\regret_{M}^{\cP^+|^{\chi}(E^c \cap F)}(f)\\
%=  &\alpha_1 \Pr_2(E^c\cap F) \regret_{M}^{\cP^+|^{\chi}(E^c \cap
  %F)}(f)\\
%< &\alpha_1 ( \Pr_1(E\cap F) \regret_{M}^{\cP^+|^{\chi}(E \cap   F)}(f) +
 %\Pr_1(E^c\cap F) \regret_{M}^{\cP^+|^{\chi}(E^c \cap
  %F)}(f))\\
%\le &\sup_{(\Pr,\alpha) \in \cP^+|^{\chi}F} \alpha
%\left( \Pr(E\cap F) \regret_{M}^{\cP^+|^{\chi}(E \cap F) }(f) +
%\Pr(E^c \cap F) \regret_{M}^{\cP^+|^{\chi}(E^c \cap F)}(f)\right)\\
%< &\regret_{M}^{\cP^+|^{\chi}(E \cap F)}(f),
%\end{array}$$
%as desired.
\end{proof}

\begin{claim}
%joe28
%If $C(\cP^+)$ is convex, and Axiom~\ref{axiom:DC-M} holds
If $C(\cP^+)$ is convex and Axiom~\ref{axiom:DC-M} holds
for the family of choices $C_{M}^{\regret,\cP^+ |^{\chi} E}$, then
$\cP^+$ is $\chi$-rectangular. 
\end{claim}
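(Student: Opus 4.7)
By Theorem~\ref{thm:equivalence}, Axiom~\ref{axiom:DC-M} (together with Axioms~\ref{axiom:offE}--\ref{axiom:sens}, which hold automatically for MWER) implies SEP for every decision problem, menu $M$, and act $f$. The plan is to derive each of the three clauses of $\chi$-rectangularity by invoking SEP on carefully chosen acts whose regret vectors realize prescribed non-negative vectors $\theta \in \R^{|S|}$.

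The key tool is a simple realizability construction: given any non-negative $\theta$, take the two-act menu $M = \{f_\theta, h\}$ where $h$ is a constant act of utility $c \geq \max_s \theta(s)$ and $u(f_\theta(s)) = c - \theta(s)$. Then $\regret_M(f_\theta, s) = \theta(s)$, so $\regret_M^{\cP^+|^\chi G}(f_\theta) = \overline{E}_{\cP^+|^\chi G}(\theta)$ for every measurable $G$ with $\ucP(G) > 0$.

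For clause (c), I apply SEP to $f_\theta$; this directly rewrites SEP as
\[
\overline{E}_{\cP^+|^\chi F}(\theta) = \sup_{(\Pr,\alpha)\in \cP^+}\alpha\bigl(\Pr(E\cap F)\,\overline{E}_{\cP^+|^\chi(E\cap F)}(\theta) + \Pr(E^c\cap F)\,\overline{E}_{\cP^+|^\chi(E^c\cap F)}(\theta)\bigr).
\]
Replacing the sup over $\cP^+$ by the sup over $\cP^+|^\chi F$ can only increase it, since for each $(\Pr,\alpha)\in\cP^+$ with $\Pr(F)>0$ the corresponding $(\Pr|F,\alpha_F^\chi)\in\cP^+|^\chi F$ satisfies $\alpha_F^\chi\,\Pr|F(A) \geq \alpha\,\Pr(A)$ for $A\subseteq F$: for $\chi=p$ this follows from $\alpha_F^p=\alpha$ and $\Pr(F)\le 1$; for $\chi=l$, from $\alpha_F^l \geq \alpha\Pr(F)/\ucP(F)$ together with $\ucP(F)\le 1$. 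For clause (b), given $\delta>0$, I let $\theta=\delta$ on $E\cap F$, $\theta=1$ on $E^c\cap F$, and $\theta=0$ elsewhere. Then $\regret^{\cP^+|^\chi(E\cap F)}(f_\theta)=\delta\neq 0$, so the strict part of SEP gives $\regret^{\cP^+|^\chi F}(f_\theta)>\sup_{(\Pr,\alpha)\in\cP^+}\alpha\,\Pr(E^c\cap F)$; since the left-hand side equals $\sup_{(\Pr,\alpha)\in\cP^+|^\chi F}\alpha(\delta\Pr(E\cap F)+\Pr(E^c\cap F))$ by definition, clause (b) follows immediately.

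Clause (a) is the heart of the argument and will use a separation scheme. Let $q$ be the displayed subprobability, and suppose for contradiction that $q\notin\overline{C(\cP^+|^\chi F)}$. Applying Lemma~\ref{lem:hyperplane} to $\cP^+|^\chi F$, obtain a non-negative $\theta$ with $\overline{E}_{\cP^+|^\chi F}(\theta) < q\cdot\theta$. Expanding $q\cdot\theta$ from the definition of $q$, bounding each conditional sum by the corresponding $\overline{E}_{\cP^+|^\chi(\cdot)}(\theta)$, and then replacing $(\Pr_3,\alpha_3)$ by the sup over $\cP^+$, yields $q\cdot\theta\le\sup_{(\Pr,\alpha)\in\cP^+}\alpha(\Pr(E\cap F)\overline{E}_{\cP^+|^\chi(E\cap F)}(\theta)+\Pr(E^c\cap F)\overline{E}_{\cP^+|^\chi(E^c\cap F)}(\theta)) = \overline{E}_{\cP^+|^\chi F}(\theta)$ by SEP, contradicting the separation. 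The main obstacle I anticipate is verifying the hypothesis of Lemma~\ref{lem:hyperplane}: the lemma requires convexity of $C(\cP^+|^\chi F)$, whereas we are only given convexity of $C(\cP^+)$. Under prior-by-prior updating, the rescaling by $1/\Pr(F)$ depends on the measure, and under likelihood updating the $\sup$ defining $\alpha_F^l$ adds further complication, so inheriting convexity through the update operation will require an auxiliary argument that either transports the separating functional back into $C(\cP^+)$ (where convexity is hypothesized) or restricts attention to the convex hull and verifies closure separately.
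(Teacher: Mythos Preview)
Your approach is essentially identical to the paper's: the same realizability construction (a two-act menu with $u(f,s)=-\theta(s)$ and $u(g,s)=0$), the same separating-hyperplane argument via Lemma~\ref{lem:hyperplane} for clause~(a), and the same choice $\theta=\delta$ on $E\cap F$, $\theta=1$ on $E^c\cap F$ for clause~(b); the paper phrases each case contrapositively while you phrase it directly, but the content is the same. On the convexity obstacle you flag for clause~(a), the paper simply asserts without argument that convexity of $C(\cP^+)$ implies convexity of $C(\cP^+|^\chi F)$ and moves on, so your concern is not resolved there either; and your clause-(c) step (comparing the $\cP^+$-sup to the $\cP^+|^\chi F$-sup via $\alpha_F^\chi\,\Pr|F(A)\ge\alpha\,\Pr(A)$) is in fact more careful than the paper's corresponding passage.
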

\begin{proof}
%Sam31
Suppose that $\chi$-rectangularity does not hold. 
Then one of the three conditions of rectangularity must fail.  

First suppose that it is (a); that is, for some 
$(\Pr_1,\alpha_1), (\Pr_2,\alpha_2), (\Pr_3,\alpha_3) \in \cP^+$, we
have $ \Pr_1( E\cap F) > 0 $ and $\Pr_2(E^c \cap F) > 0$ and 
$$
\alpha_3 {\Pr}_3( E \cap F) \alpha^{\chi}_{1,E\cap F} {\Pr}_1|(E \cap F) + \alpha_3 {\Pr}_3(E^c\cap F)
\alpha^{\chi}_{2, E^c \cap F} {\Pr}_2|(E^c\cap F) \notin \overline{C(\cP^+)|^{\chi} F}. 
$$
%joe28
%Let $p^*$ denote the subprobability measure $\alpha_3 {\Pr}_3( E \cap F)
Let $p^* = \alpha_3 {\Pr}_3( E \cap F)
\alpha_{1,E\cap F}^{\chi} {\Pr}_1| (E \cap F) + \alpha_3
      {\Pr}_3(E^c\cap F) \alpha_{2,E^c \cap F}^{\chi} {\Pr}_2|(E^c\cap
      F)$.
Since we have assumed that $C(\cP^+)$ is convex, we have
%joe27
% that $C(\cP^+|^{\chi} F)$ is also convex, and Lemma~\ref{lem:hyperplane}
%that there exists a non-negative vector $\theta$ such that for
that $C(\cP^+|^{\chi} F)$ is also convex.
%Sam34
By Lemma~\ref{lem:hyperplane},
there exists a non-negative vector $\theta$ such that for
all $\alpha\Pr \in \overline{C(\cP^+|^{\chi}F)}$, we have 
$$\sum_{s \in F} \alpha\Pr(s) \theta(s) < \sum_{s\in F} p^*(s) \theta(s).$$

We construct a decision problem $D$ based on $(S,\Sigma)$.
$D$ has two stages: in the first stage, nature chooses a state $s \in
%joe27*: why is the DM need to be informed of anything?  The game is
%assumed to be common knowledge.
%S$, but in fact nature only chooses from $F\subseteq S$ with positive
%probability, and the DM is informed of this fact after nature makes
%its move. 
S$, but only states in $F\subseteq S$ are chosen with positive
probability, so when the DM plays, his beliefs are characterized by
$\cP^+|^{\chi}F$.
In the second stage, the DM chooses an action from the set $M = \{f,g\}$, with utilities defined as follows:
$$\begin{array}{ll}
& u(f,s) = -\theta({s}), \text{ and } \\
& u(g,s) = 0 \text{ for all s}.
\end{array}$$
%joe27
%The DM's beliefs is $\cP^+|^{\chi}F$.
The act $f$ will have regret precisely $\theta(s)$ in state $s\in S$.
By Lemma~\ref{lem:hyperplane},
$$\begin{array}{ll}
& \sup_{(\Pr,\alpha) \in \cP^+} \alpha \left( \Pr(E \cap F)
\regret_{M}^{\cP^+|^{\chi}(E \cap F)}(f)
+ \Pr(E^c \cap F) \regret_{M}^{\cP^+|^{\chi}(E^c \cap F)}(f)\right) \\ 
%joe27: added next line to clarify
%\geq &  \sum_{s\in F} p^*(s) \theta(s) \\
\geq & \alpha_{\Pr_3} \left( \Pr_3(E \cap F)
\regret_{M}^{\alpha_{1,E \cap F}^{\chi} \Pr_1|(E \cap F)}(f)
+ \Pr(E^c \cap F) \regret_{M}^{\alpha_{2,E^c \cap F}^{\chi}\Pr_2|(E^c \cap F)}(f)\right) \\ 
= & \sum_{s\in F} p^*(s) \theta(s) \\
%joe27: 
%> & \sup_{(\Pr,\alpha) \in \cP^+} \regret_{M}^{\cP^+|^{\chi}F}(f)
> & \sup_{(\Pr,\alpha) \in \cP^+|^{\chi}F}
%joe28
%\regret_{M}^{\cP^+|^{\chi}F}(f) \ \ \ \ \mbox{,}
\regret_{M}^{\cP^+|^{\chi}F}(f),
%joe28: I don't know what ``separability with respect to $M$'' is.
%\end{array}$$ violating separability with respect to $M$.
\end{array}$$ violating SEP.
By Theorem~\ref{thm:equivalence}, Axiom~\ref{axiom:DC-M} cannot hold.

Now suppose that condition (b) in rectangularity does not hold.
%Sam34
That is, for some $\delta > 0$, for all $(\alpha,\Pr) \in \cP^+$,
%joe28*: because of my change to rectangularity
%$\alpha( \delta \Pr(E \cap F) +  \Pr (E^c \cap F)  ) <
$\alpha( \delta \Pr(E \cap F) +  \Pr (E^c \cap F)  ) \le
\sup_{(\Pr',\alpha') \in \cP^+} \alpha' \Pr'(E^c \cap F)$. 
%That is, for some $(\Pr^*,\alpha^*) \in \cP^+$, $\alpha^* > 0$,
%$\Pr^*(E \cap F) = 0$, and $\alpha^* \Pr^*( E^c \cap F )  =
%\sup_{(\Pr',\alpha') \in \cP^+|F} \alpha' \Pr'(E^c \cap F)$. 
We construct a decision problem $D$ based on $(S,\Sigma)$.
$D$ has two stages: in the first stage, nature chooses a state $s \in S$. 
In the second stage, the DM chooses an action from the set $M = \{f,g\}$,
with utilities defined as follows:
$$\begin{array}{lll}
%& u(f,s) = 0 \ \ \ \mbox{ for all }s \in S, \\
u(f,s) = 0  &\mbox{for all }s \in S, \\
%joe28
%& u(g,s) = \begin{cases}
u(g,s) = 
%joe28*: typo, I assume
%\delta \ \ \ \mbox{, if } s \in E\cap F\\
-\delta  &\mbox{if } s \in E\cap F\\
%joe28
%-1 \ \ \ \mbox{ otherwise. }
%\end{cases}
u(g,s) = -1  &\mbox{if $s \notin E \cap F$.}
%\end{array}\right.
\end{array}$$
%joe28: removed paragraphberak
%
Then we have that $\regret_{M}^{\cP^+|^{\chi}(E \cap F)}(g) = \delta$ and $\regret_{M}^{\cP^+|^{\chi}(E^c \cap F)}(g) = 1$. 
%joe27
%Separability with respect to $M$ says that 
%joe28
%Since SEP holds, we must have 
Using SEP and the choice of $\delta$, we must have 
%joe27: 
%$$ \regret_{M}^{\cP^+|^{\chi}F} (g) = \sup_{\Pr \in \cP^+|^{\chi}F}
%joe28
%$$ \regret_{M}^{\cP^+|^{\chi}F} (g) = \sup_{(\Pr,\alpha_{\Pr}) \in  
%\cP^+|^{\chi}F} 
$$\begin{array}{lll}
  \regret_{M}^{\cP^+|^{\chi}F} (g) &= &\sup_{(\Pr,\alpha) \in \cP^+|^{\chi}F}
%joe27: simplified
%\alpha_{\Pr} \left( \Pr(E\cap F) \regret_{{M}}^{\cP^+|^{\chi}(E \cap
%  F)}(g) + \Pr(E^c \cap F) \regret_{{M}}^{\cP^+|^{\chi}(E^c \cap
%  F)}(g)\right).$$ 
\alpha (\Pr(E\cap F) \delta + \Pr(E^c \cap F) )
%joe28: added
%).$$ 
\\
&\le &\sup_{\Pr \in \cP^+|^{\chi}F} \alpha \Pr(E^c \cap F)
\regret_{{M}}^{\cP^+|^{\chi}(E^c \cap F)}(g).
\end{array} 
$$
%joe28
%However, by our assumptions, we have that
%joe27*: Sorry, Sam, I don't see why the next line is true at all.
%For example, why can't there be some (\Pr',\alpha') such that \alpha'
%\Pr(E \inter C) > 0 and \alpha_{\Pr'} Pr'(E^c \cap F) = \alpha^*
%\PR^*(E^c \cap F).  If such a (\{\Pr', \alpha') exists, it's a
%counterexample to your claim.    In fact, I think that it suffices to
%assume in rectangularity  that there exists some (\Pr,\alpha) such
%taht Pr(E\cap F) = 0 and \alph Pr(E^ \cap F) = sup.  But even with
%that weakening, I think the claim is false.  Here is a
%counterexample:  Suppose that \P^+ consists of all distributions on a
%3-point space, and \alpha = 1 for all distributions.  I don't think
%the result holds in that case.
%joe28: added
Clearly, 
$$ \regret_{M}^{\cP^+|^{\chi}F} (g) \ge \sup_{(\Pr,\alpha) \in \cP^+|^{\chi}F}
\alpha  \Pr(E^c \cap F) \regret_{{M}}^{\cP^+|^{\chi}(E^c \cap F)}(g).$$ 
Thus, 
$$ \regret_{M}^{\cP^+|^{\chi}F} (g) = \sup_{(\Pr,\alpha) \in \cP^+|^{\chi}F}
\alpha  \Pr(E^c \cap F) \regret_{{M}}^{\cP^+|^{\chi}(E^c \cap F)}(g),$$ 
%joe28
%violating the second condition of separability. 
violating the second condition of SEP.
Therefore, by Theorem~\ref{thm:equivalence}, Axiom~\ref{axiom:DC-M}
does not hold.  

%joe27
%Finally, consider the case where (c) holds. 
%By unrolling definitions, we see that (c) implies that 
Finally, suppose that condition (c) in rectangularity does not hold.
Then for some nonnegative real vector $\theta \in \R^{|S|}$, 
%joe28
%$$\begin{array}{ll}\label{equ:assumption}
\begin{equation}\label{eq:SEPc}
\begin{array}{ll}
&\sup_{(\Pr,\alpha) \in \cP^+|^{\chi}F} \left( {\alpha} \Pr(E) \sup_{(\Pr_1,\alpha_1) \in \cP^+|^{\chi}(E \cap F)} \sum_{s \in E\cap F} \alpha_1 \Pr_1(s|E)
\theta(s))  \right. \\ 
& \left. + {\alpha} \Pr(E^c) \sup_{(\Pr_2,\alpha_2) \in
    \cP^+|^{\chi}(E^c \cap F)}  \sum_{s\in E^c \cap F} \alpha_2
\Pr_2(s|E^c) \theta(s)) \right)\\ 
< & \sup_{(\Pr,\alpha) \in \cP^+|^{\chi}F}{\alpha} \sum_{s \in F}
%joe28
%\Pr(s) \theta(s). \ \ \ \mbox{ ($*$) }    
\Pr(s) \theta(s). 
%joe28
%\end{array}$$
\end{array}
\end{equation}
We construct a decision problem $D$ based on $(S,\Sigma)$.
$D$ has two stages: in the first stage, nature chooses a state $s \in S$. 
In the second stage, the DM chooses an action from the set $M = \{f,g\}$, with utilities defined as follows:
$$\begin{array}{ll}
%joe27
%u(g,s) & = -\theta(s) \ \ \ \mbox{ for all } s \in S.
%\end{array}$$
%And,
%$$\begin{array}{ll}
u(g,s) = -\theta(s) \ \ \ \mbox{ for all } s \in S.
\\
u(f,s) = 0 \ \ \ \mbox{ for all } s \in S.
\end{array}$$
So we have
$$\begin{array}{ll}
&\sup_{(\Pr,\alpha) \in \cP^+|^pF} \alpha \left( \Pr(E\cap F)
  \regret_{M}^{\cP^+|^p(E \cap F)}(g) + \Pr(E^c \cap F)
  \regret_{M}^{\cP^+|^p(E^c \cap F)}(g)\right)\\
= & \sup_{(\Pr,\alpha) \in \cP^+|^{\chi}F} \alpha \left( \Pr(E \cap F ) \overline{E}_{\cP^+|^{\chi}(E \cap F)}(\theta) + \Pr(E^c\cap F) \overline{E}_{\cP^+|^{\chi}(E^c \cap F)}(\theta) \right)\\
%joe28
%< & \overline{E}_{\cP^+|^{\chi}F}(\theta) \ \ \ \mbox{[by ($*$)]}\\  
< & \overline{E}_{\cP^+|^{\chi}F}(\theta) \ \ \ \mbox{[by (\ref{eq:SEPc})]}\\  
= & \regret_M^{\cP^+|^pF}(g).
\end{array}$$
This means that SEP, and hence Axiom 1, is violated, a contradiction.
\end{proof}
%} \end fullv

%joe7: moved from above, and changed bibstyle
\fullv{
\bibliographystyle{chicagor}
}
\shortv{
\bibliographystyle{abbrv}
\vspace{-17pt}
}
\bibliography{joe,awareness}

\end{document}